\definecolor{blued}{RGB}{70,197,221}
\definecolor{pearOne}{HTML}{2C3E50}
\definecolor{pearTwo}{HTML}{A9CF54}
\definecolor{pearTwoT}{HTML}{C2895B}
\definecolor{pearThree}{HTML}{E74C3C}
\colorlet{titleTh}{pearOne}
\colorlet{bull}{pearTwo}
\definecolor{pearcomp}{HTML}{B97E29}
\definecolor{pearFour}{HTML}{588F27}
\definecolor{pearFith}{HTML}{ECF0F1}
\definecolor{pearDark}{HTML}{2980B9}
\definecolor{pearDarker}{HTML}{1D2DEC}
\newcounter{scratchcounter}
\definecolor{graphicbackground}{rgb}{0.96,0.96,0.8}
\definecolor{rouge1}{RGB}{226,0,38}  
\definecolor{orange1}{RGB}{243,154,38}  
\definecolor{jaune}{RGB}{254,205,27}  
\definecolor{blanc}{RGB}{255,255,255} 
\definecolor{rouge2}{RGB}{230,68,57}  
\definecolor{orange2}{RGB}{236,117,40}  
\definecolor{taupe}{RGB}{134,113,127} 
\definecolor{gris}{RGB}{91,94,111} 
\definecolor{bleu1}{RGB}{38,109,131} 
\definecolor{bleu2}{RGB}{28,50,114} 
\definecolor{vert1}{RGB}{133,146,66} 
\definecolor{vert3}{RGB}{20,200,66} 
\definecolor{vert2}{RGB}{157,193,7} 
\definecolor{darkyellow}{RGB}{233,165,0}  
\definecolor{lightgray}{rgb}{0.9,0.9,0.9}
\definecolor{darkgray}{rgb}{0.6,0.6,0.6}
\definecolor{babyblue}{rgb}{0.54, 0.81, 0.94}
\definecolor{citrine}{rgb}{0.89, 0.82, 0.04}
\definecolor{misogreen}{rgb}{0.25,0.6,0.0}
\newcommand{\sset}[1]{\left\{#1\right\}}
\newcommand{\ceil}[1]{\left\lceil#1\right\rceil}
\newcommand{\II}[1]{\mathbb{I}{\left\{#1\right\}}}
\newcommand{\Bernoulli}{\mathrm{Bernoulli}}
 \newtheorem{lemma}{Lemma}
 \newtheorem{theorem}{Theorem}
 \newtheorem{definition}{Definition}
 \newtheorem{proposition}{Proposition}
\newtheorem{fact}{Fact}
 \newtheorem{example}{Example}
\newcommand{\argmin}{\text{argmin}}
\newcommand{\R}{\mathbb{R}}
\newcommand{\E}{\mathbb{E}}
\newcommand{\EE}[1]{\mathbb{E}\left[#1\right]}
\newcommand{\PP}[1]{\mathbb{P}\left[#1\right]}
\newcommand{\PPno}{\mathbb{P}}
\newcommand{\PPt}[2]{\mathbb{P}_{#1}\left[#2\right]}
\newcommand{\pa}[1]{\left(#1\right)}
\newcommand{\norm}[1]{\left\|#1\right\|}
\newcommand{\abs}[1]{\left|#1\right|}
\let\originalleft\left
\let\originalright\right
\renewcommand{\left}{\mathopen{}\mathclose\bgroup\originalleft}
\renewcommand{\right}{\aftergroup\egroup\originalright}
\newcommand{\CommaBin}{\mathbin{\raisebox{0.5ex}{,}}}
\newcommand{\transpose}{^\mathsf{\scriptscriptstyle T}}
\newcommand{\cA}{\mathcal{A}}
\newcommand{\cE}{\mathcal{E}}
\newcommand{\cJ}{\mathcal{J}}
\newcommand{\cO}{\mathcal{O}}
\newcommand{\OO}{\mathcal{O}}
\newcommand{\ba}{{\bf a}}
\newcommand{\bb}{{\bf b}}
\newcommand{\be}{{\bf e}}
\newcommand{\bs}{{\bf s}}
\newcommand{\by}{{\bf y}}
\newcommand{\bx}{{\bf x}}
\renewcommand{\epsilon}{\varepsilon}
\renewcommand{\bar}{\overline}
\newcommand{\nothere}[1]{}
\newcommand{\UCB}{\textsc{UCB}\xspace}
\newcommand{\UCBV}{\textsc{UCB-V}\xspace}
\newcommand{\CUCB}{\normalfont \textsc{CUCB}\xspace}
\newcommand{\CUCBV}{\normalfont \textsc{CUCB-V}\xspace}
\newcommand{\CUCBKL}{\normalfont \textsc{CUCB-KL}\xspace}
\newcommand{\ThompsonSampling}{\textsc{ThompsonSampling}\xspace}
\newcommand{\edgeset}{\cE}
\newcommand{\node}[2]{(#1,#2)}
\newcommand{\tableq}[1]{{$\!\begin{aligned} 
               \centering #1
                \end{aligned}$}}
\newcommand{\tabtab}[1]{\begin{tabular}{c}#1\end{tabular}}
\newcommand{\prece}{1|prec|\sum w_jC_j}
\newcommand{\arms}{n}
\newcommand{\hider}{w^\star}
\newcommand{\cost}{c^\star}
\newcommand{\hiderm}{w}
\newcommand{\costm}{c}
\newcommand{\SnS}{sequential search-and-stop\xspace}
\newcommand{\DAG}{\mathcal{G}}
\newcommand*\rd[1]{{#1}}
\newcommand{\rcost}{C}
\newcommand{\costvector}{{\bf \cost}}
\newcommand{\costvectorm}{{\bf \costm}}
\newcommand{\rcostvector}{{\bf \rcost}}
\newcommand{\rhider}{W}
\newcommand{\hidervector}{{\bf\hider}}
\newcommand{\hidervectorm}{{\bf\hiderm}}
\newcommand{\rhidervector}{ {\bf\rhider} }
\newcommand{\distribution}{\mathcal{D}}
\newcommand{\search}{\mathcal{S}}
\newcommand*\shrink[1]{\langle #1\rangle}
\newcommand{\sche}{\textsc{Scheduling}\xspace}
\newcommand{\D}{d}
\newcommand{\J}{J}
\newcommand{\Jp}{J^+}
\newcommand{\history}{\mathcal{H}}
\newcommand{\rbs}{{\bf \rd{s}}}
\newcommand{\rs}{{\rd{s}}}
\newcommand{\RB}{R}
\newcommand{\oracle}{\textsc{Oracle}\xspace}
\newcommand*\Oracle[1]{\textsc{Oracle}\pa{#1}}
\newcommand*\perf[2]{#1[#2]}
\newcommand*\obj[3]{#1\pa{#2;#3}}
\newcommand*\counter[3]{\rd{N}_{#3,#1,#2}}
\newcommand*\meanc[2]{\bar\costm_{#1,#2}}
\newcommand*\meanw[2]{\bar\hiderm_{#1,#2}}
\newcommand*\UCBc[2]{\costm_{#1,#2}}
\newcommand*\UCBw[2]{\hiderm_{#1,#2}}
\newcommand*\vUCBc[1]{ {\costvectorm}_{#1}}
\newcommand*\vUCBw[1]{ {\hidervectorm}_{#1}}
\newcommand*\gapss[1]{\Delta\pa{#1}}
\newcommand{\lip}{\Lambda}
\newcommand{\KL}[2]{\text{KL}\pa{#1\Vert #2}}
\newcommand{\kl}[2]{\text{kl}\pa{#1, #2}}
\newcommand{\sizecorr}[1]{\makebox[0cm]{\phantom{$\displaystyle #1$}}}
\begin{document}

%

%

\twocolumn[

\aistatstitle{Finding the bandit in a graph: Sequential search-and-stop}

\aistatsauthor{Pierre Perrault \And Vianney Perchet \And  Michal Valko }

\aistatsaddress{ SequeL team, INRIA Lille\\ CMLA, ENS Paris-Saclay\\ 
\texttt{\small pierre.perrault@inria.fr} 
\And  CMLA, ENS Paris-Saclay \\ Criteo Research\\ \texttt{\small vianney.perchet@normalesup.org} 
\And SequeL team\\ INRIA Lille -- Nord Europe\\\texttt{\small michal.valko@inria.fr} 
} ]
%

%
\begin{abstract}
We consider the problem where an agent wants to find a hidden object that is randomly located in some vertex of a directed acyclic graph (DAG) according to a fixed but possibly unknown distribution. The agent can only examine vertices whose in-neighbors have already been examined. 
In this paper, we address a \emph{learning} setting where we allow the agent to stop before having found the object and restart searching on a new independent instance of the same problem. Our goal is to maximize the total number of hidden objects found given a time budget. The agent can thus skip an instance after realizing that it would spend too much time on~it. 
Our contributions are both to the \emph{search theory} and \emph{multi-armed bandits}. If the distribution is known, we provide a quasi-optimal and efficient stationary strategy. 
If the distribution is unknown, we  additionally show how to sequentially approximate it and, at the same time, act near-optimally  
in order to collect as many hidden objects as possible. 
\end{abstract}

\section{Introduction}
\label{sec:intro}
We study the setting where an object, called hider, is randomly located in one vertex of a directed acyclic graph (DAG), and where an agent wants to find it by sequentially selecting  vertices one by one, and examining them at a (possibly random) cost. The agent has a strong constraint: its search must respect \emph{precedence constraints} imposed by the DAG, i.e., a vertex can be examined only if \emph{all} its in-neighbors have already been examined. The goal of the agent is to minimize the expected total search cost incurred before finding the hider.
This setting is a type of \emph{single machine scheduling} problem \citep{lin2015}, where a set of $\arms$ jobs $[\arms]\triangleq\sset{1,\dots,\arms}$ have to be processed on a single machine that can process at most one job at a time.
Once a job processing is started, it must continue without interruption until the processing is complete.
Each job~$j$ has a cost~$\costm_j$ representing its processing time, and a weight $\hiderm_j$ representing its importance. In our context, $\hiderm_j$ is the probability that $j$ contains the hider. The aim is to find a schedule (i.e., a permutation of jobs) that minimizes the total weighted completion time while respecting precedence constraints\footnote{The standard scheduling notation \citep{Graham1979} denotes this setting  as $\prece$.}. 
The setting was
already shown to be NP-hard 
\citep{Lawler1978,Lenstra1978}. On the positive side, several polynomial-time $\alpha$-approximations exist, depending on the assumption we take on the DAG (see e.g., the recent survey of \citealp{Prot2017}). For instance, the case of $\alpha=2$ can be dealt without any additional assumption. On the other hand, there is an exact $\OO(\arms\log \arms)$-time algorithm when the partially ordered set (poset) defined by the DAG is a series-parallel order \citep{Lawler1978}. More generally, when the poset has fractional
dimension of at most $f$, there is a polynomial-time approximation with $\alpha=2-2/f$ \citep{Ambuhl2011}.
In this work, we assume the DAG is such that an exact polynomial-time algorithm is available. We denote this algorithm as $\sche$. For example, this is true for two-dimensional posets \citep{Ambuhl2009}.

The problem is also well known in \emph{search theory} \citep{stone1976theory,Fokkink2016}, one of the disciplines originating from \emph{operations research}.
Since in our case, the search space is a DAG,  we fall within the network search setting  \citep{kikuta1994initial,gal2001optimality,evans2013static}. When the DAG is an out-tree, the problem reduces to the \emph{expanding search} problem introduced by \citet{alpern2013mining}.

The case of unknown distribution of the hider is usually studied within the field of \emph{search games}, i.e., a zero-sum game where the agent picks the search and plays against the hider
with search cost as payoff
\citep{alpern2006theory,alpern2013search,hohzaki2016search}. 
In our work, we deal with an unknown hider distribution by extending the stochastic setting to the {sequential case}, where at each round $t$, the agent faces a new, independent instance of the problem. The challenge is the need to learn the distribution through repeated interactions with the environment.
Each instance, the agent has to perform a search based on the instances observed during the previous rounds. 
Furthermore, contrary to the typical {search} setting, the agent can additionally decide whether it wishes to abandon the search on the current instance and start a new one in the next round, even if the hider was not found. The goal of the agent is to collect as many hiders as possible, using a fixed budget $B$.
This may be particularly useful,  when the remaining vertices have large costs and it 
would not be cost-effective to examine them. 

As a result, the hider may not be found in each round and the agent has to make a trade-off between exhaustive searches, which lead to a good estimation (\emph{exploration}) and efficient searches, which leads to a good benefit/cost ratio (\emph{exploitation}). The sequential \emph{exploration-exploitation} trade-off is well studied
in multi-armed bandits (\citealp{cesa-bianchi2006prediction,lattimore2019bandit}) and has been applied to many fields including mechanism design \citep{mohri2014optimal}, search advertising \citep{tran2014efficient} and personalized recommendation \citep{li2010contextual}. Since several vertices can be visited within each round, our setting can be seen as an instance of {stochastic combinatorial semi-bandits} \citep{cesa-bianchi2006prediction,cesa-bianchi2012combinatorial,gai2012combinatorial,gopalan2014thompson,kveton2015tight,combes2015combinatorial,wang2017improving,valko2016bandits}. For this reason, we refer to a vertex $j\in [\arms]$ as an \emph{arm}. We shall see, however, that
this specific semi-bandit problem is challenging. In particular, the agent pays a \emph{non-linear} search cost at each round (with respect to the selected combinatorial action), that additionally depends on the ordering.
Moreover, due to the budget constraint, it is also an instance of \emph{budgeted bandits}, also known as \emph{bandits with knapsacks} \citep{badanidiyuru2013bandits}, in the case of single resource and infinite horizon. We thus evaluate the performance of a learning policy with the (common) notion of \emph{expected (budgeted) regret}. It measures the expected difference, in terms of cumulative reward collected within the budget constraint $B$, between the learning policy and an \emph{oracle} policy that knows a priori the exact parameters of the problem. Budgeted combinatorial semi-bandits have been already studied by \citet{Sankararaman2017} for several resources, but with a finite horizon. Moreover, their algorithm is efficient only for some specific combinatorial structures (such as matroids). 
The structure of constraints in \SnS is in general more complex.
\paragraph{Motivation}
There are several motivations behind this setting. One example is the \emph{decision-theoretic
troubleshooting} problem of giving a diagnosis
for several devices having a malfunctioning component and arriving sequentially to the agent.
In many \emph{troubleshooting} applications, we additionally face
precedence constraints. These restrictions are imposed to the agent as the ordering of component tests, see e.g., \citealp{Jensen2001}. Moreover, allowing the agent to \emph{stop} gives a new alternative to the so-called \emph{service call} \citep{Heckerman1995,Jensen2001} in order to deal with non-cost-effective vertices: Instead of giving a high cost to an extra action that will automatically find the fault in the device, we give it a zero cost, but do not reward such diagnostic failure. This way, we do not need to 
estimate any \emph{call-service} cost. This alternative is used, for example, when a new device is sent to the user if the diagnostic fails, with a cost that depends on a disutility for the user: loss of personal data, device reconfiguration, etc. Maximizing the number of hiders found is then analogous to maximizing the number of successful diagnoses.

Another example comes from online advertisement. There are several different actions that might generate a conversion from a user, such as sending one or several emails, displaying one or several ads on a website, buying keywords on search engines, etc. We assume that some precedence constraints are imposed between actions and that
a conversion will occur if some sequence of actions is made, for instance, first, display an ad, then send the first email, and finally the second one. As a consequence, the conversion is ``hidden'', the precedence constraints restrict our access to it, and the agent aims at finding it. However, for some users, finding the correct sequence might be too expensive and it might be more interesting to abandon that specific user to focus on more promising ones.

\paragraph{Related settings} Finally, there are several settings related to ours. One of them is {stochastic probing} \citep{Gupta2013}, which differs in the fact that each arm can contain a hider, {independently from each other}. Another one is the machine learning framework of {optimal discovery} \citep{bubeck2013optimal}. 
\paragraph{Our contributions}
One of our main contributions is a stationary \emph{offline policy} (i.e., an algorithm that solves the problem when the distribution is known), for which we prove the approximation guarantees and adapt it in order to fit the online problem. In particular, we prove that it is quasi-optimal  and use $\sche$ to prove its computational efficiency. 
Next, we provide a solution when the distribution is unknown to the agent, based on combinatorial upper confidence bounds (\CUCB) algorithm from \citet{chen2015combinatorial}, and \UCB-variance (\UCBV) of \citet{audibert2009}.       
Dealing with variance estimates allows us to sharp the bound on the expected regret, improving the overall dependence on the dimension $\arms$ compared to the simple use of \CUCB.
We also propose a new method (that can be of independent interest) to avoid the typical ${\nicefrac{1}{\costm_{\min}^2}}$ term in the expected regret bound \citep{tran-thang2012knapsack,ding2013multi-armed,pmlr-v45-Xia15,xia2016budgeted,article}, where $\costm_{\min}$ is the minimal expected search cost paid over a single round. 


\section{Background}  
\label{sec:back}
In this paper, we typeset vectors in bold and indicate components with indices, i.e., $\mathbf{a}=(a_i)_{i\in [\arms]} \in \R^n$. 
We formalize in this section the setting we consider. We denote a finite DAG by $\DAG\triangleq\pa{[\arms],\edgeset}$, where $[\arms]$ is its set of vertices, or arms, and $\edgeset$ is its set of directed edges. 
For more generality, we assume arm costs are random and mutually independent. We denote $\rcost_j\in[0,1]$, with expectation $\cost_j\triangleq\EE{\rcost_j} >0$, the cost of arm $j$. We thus have $\costvector=\EE{\rcostvector}\in (0,1]^\arms$. 
We also assume that one specific vertex, called \emph{hider}, is chosen at random, independently from $\rcostvector$, accordingly to some fixed categorical (or multivariate Bernoulli) distribution parameterized by  
 vector $\hidervector$ satisfying\footnote{i.e., $\hidervector$ belongs to the simplex of $\R^\arms$} $\sum_{i=1}^\arms \hider_i =1$ and $\hider_i \in [0,1]$. Notice that $\rhidervector\sim \Bernoulli(\hidervector)$ if, given $i \in [\arms]$ and with probability $\hider_i$,  $\rhider_i=1$ and $\rhider_j=0$ for all $j\neq i$. We also use $\distribution$ to denote the joint distribution of $(\rcostvector,\rhidervector)$. 
 
 Let
$\be_i\in \R^\arms$ denote the $i^{\rm th}$  canonical  unit  vector.
 For an (ordered) subset $A$ of $[\arms]$, we denote by $A^c$ the complementary of $A$ in $[\arms]$ and $\abs{A}$ its cardinality. 
 The \emph{incidence vector} of $A$ is
$\be_A\triangleq\sum_{i \in A}\be_i.$
The above definition allows representing a subset of $[\arms]$ as an element of $\sset{0,1}^\arms\!\!.$
Let $\DAG\shrink{A}$ be the sub-DAG in $\DAG$ induced by $A$, i.e., the DAG with $A$ as vertex set, and with $(i,j)$ an arc in $\DAG\shrink{A}$ if and only if $(i,j)\in\edgeset$. 
 We call \emph{support} of an ordered arm set $\ba=(a_1,\dots,a_k)$ the corresponding non-ordered set. 
 If $\bx,\by\in \R^\arms$, we write $\bx\geq\by$ (resp., $\bx\leq\by$) if $\bx-\by\in \R_+^\arms$ (resp., $\by-\bx\in \R_+^\arms$).
 We let $
\mathbf{a}[j]\triangleq(a_1,a_2,\ldots,a_j)$ for $j\leq \abs{\ba}$. In addition, we let $\mathbf{a}[\rhidervector]\triangleq \mathbf{a}[j]$ if there is $j$ such that $\rhider_{a_j}=1$, and $\mathbf{a}[\rhidervector]\triangleq \mathbf{a}$ otherwise. For two disjoint ordered arm sets $\mathbf{a}$ and $\mathbf{b}$, we let $\mathbf{ab}=(a_1,a_2,\ldots,a_{\abs{a}},b_1,b_2,\ldots,b_{\abs{b}})$ be the concatenation of $\mathbf{a}$ and $\mathbf{b}$.

 We assume that $\DAG$ allows a polynomial-time algorithm (w.r.t.\,$\arms$), that takes some parameters $\hidervectorm,\costvectorm\in \R_+^\arms$, and outputs $\bs=\sche\pa{\hidervectorm,\costvectorm,\DAG}$  
 minimizing 
 \[\obj{\D}{\bs}{\hidervectorm,\costvectorm}\triangleq\sum_{i=1}^{\abs{\bs}}\hiderm_{s_i}  \be_{\mathbf{s}[i]}\transpose\costvectorm =\sum_{i=1}^{\abs{\bs}}\hiderm_{s_i} \sum_{j=1}^i \costm_{s_j}\] over linear extensions\footnote{A linear extension of a poset is a total ordering consistent with the poset, i.e., if $a$ is before $b$ in the poset, then the 
 same has to be true for its  linear extension.}
  $\bs=\pa{s_1,\dots,s_\arms}$ of the poset defined by $\DAG$ (that we call $\DAG$-linear extensions). Notice that
 $\D(\bs)\triangleq\obj{\D}{\bs}{\hidervector,\costvector}$ represents the expected cost $ \EE{\be_{\bs[\rhidervector]}\transpose\rcostvector}$ to pay for finding the hider with the $\DAG$-linear extension $\bs$, i.e., by searching arm $s_1$ first and paying $\rcost_{s_1}$, then $s_2$ by paying $\rcost_{s_2}$ in case $\rhider_{s_1}=0$, and so on until the hider is found, i.e., the last arm $i$ searched is such that $\rhider_{i}=1$.  

We define a \emph{search} in $\DAG$ as an ordering $\mathbf{s}=(s_1,\ldots,s_k)$ of different arms such that for all $i\in [k]$, predecessors of $s_i$ in $\DAG$ are included in $\sset{s_1,\dots,s_{i-1}}$, i.e., a search is a prefix of a $\DAG$-linear extension. We denote by  $\search_\DAG$ (or simply $\search$) the set of searches in $\DAG$. 
Search supports are called \emph{initial sets}. 

\subsection{Protocol}
Our search setting is \emph{sequential}. We consider an \textsl{agent}, also called a \textsl{learning algorithm} or a \textsl{policy} that knows~$\DAG$ but that does not know $\distribution$.  
At each round $t$, an independent sample $(\rcostvector_t,\rhidervector_t)$ is drawn from $\distribution$. The aim of the agent is to search the hider (i.e., the arm $i$ such that $\rhider_{i,t}=1$) by constructing a \textsl{search} on $\DAG$. Since the hider may be located at some arm that does not belong to the search, it is not necessarily found over each round. 

The search to be used by the agent can be chosen based on all its previous observations, i.e., all the costs of explored vertices (and only those) and all the locations where the hider has been found or not. Obviously, the search cannot use the non-observed quantities.
For example, the agent may estimate $\hidervector$ and $\costvector$ in order to choose the search accordingly. 
Each time an arm~$j$ is searched, the feedback $\rhider_{j,t}$ and $\rcost_{j,t}$ is given to the agent.  
Since several arms can be searched over one round, this problem falls into the family of
\emph{stochastic combinatorial semi-bandits}.
The agent can keep searching until its budget, $B$, runs out.
$B$ is a positive number and does not need to be known to the agent in advance.
The agent wants to maximize the overall number of hiders found under the budget constraint.

The setting described above allows the agent to modify its behavior depending on the feedback it received during the current round. However, by independence assumption between random variables, the only 
feedback susceptible to modify the search the agent chose at the beginning of a round~$t$ is the observation of $\rhider_{i,t}=1$ for some arm $i$. Even if nothing prevents the agent from continuing ``searching'' some arms after having seen such an event, it would not increase the number of hiders found (there is no more hider to find), while this would still decrease the remaining budget, and therefore it would have a pure exploratory purpose. Knowing this, an oracle policy that knows exactly $\distribution$ thus \emph{selects} a search $\bs$ at the beginning of round $t$, and then \emph{performs} the search that follows~$\bs$ until either $\rhider_{i,t}=1$ is observed or $\bs$ is exhausted (i.e., no arms are left in $\bs$).
Therefore, 
the performed search is in fact $\bs[\rhidervector_t]$. We thus 
restrict ourselves to agents that select a search $\bs$ at the beginning of each round $t$ and then performs $\bs[\rhidervector_t]$ over this round. As a consequence, the selected search $\bs$ is computed based on observations collected during previous rounds $t-1,t-2,\dots$, denoted~$\history_t$, that we refer to as \emph{history}. 
   
Following \citet{stone1976theory}, we refer to our problem as \emph{sequential search-and-stop}. We now detail the overall objective for this problem: The agent wants to follow a policy $\pi$, that selects a search $\rbs_t$ at round $t$ (this choice can be random as it  may depend on the past observations $\history_t$, as well as possible randomness from the algorithm), while
maximizing the expected overall reward
  \begin{equation*}F_B({\pi})\triangleq\EE{\sum_{t=1}^{\rd{\tau}_B-1} \be_{\rbs_t[\rhidervector_t]}\transpose \rhidervector_t }=\EE{\sum_{t=1}^{\rd{\tau}_B-1} \sum_{i\in \rbs_t[\rhidervector_t]}\rhider_{i,t}},\end{equation*}
  where $\rd{\tau}_B$ is the random round at which the remaining budget becomes negative: In particular, we have that if $\rd{B}_t\triangleq B-\sum_{u=1}^t\be_{\rbs_u[\rhidervector_u]}\transpose\rcostvector_t$, then $\rd{B}_{\rd{\tau}_B -1}\geq 0$ and $\rd{B}_{\rd{\tau}_B}<0$.
 We evaluate the
 performance of a policy using the \emph{expected (budgeted) regret} with respect to $F_B^\star$, the maximum value of $F_B$ (among all possible oracle policies that know $\distribution$ and $B$), defined as
 \begin{equation*}\RB_B({\pi})\triangleq F_B^\star-F_B({\pi}).\end{equation*}
 
\begin{example}
One may wonder if there exist cases where it is interesting for the agent to stop the search earlier.  Consider for instance the simplest non-trivial case with
two arms and no precedence constraint. The costs are deterministically chosen to be $\varepsilon$ and $1$ and the location of the hider is chosen uniformly at random. An
optimal search will always first sample  the arm with $\varepsilon<1$ cost. 
If it also samples the other one, then the hider will be found with an expected cost of
$\varepsilon+\nicefrac12$.
However, if the agent always stops the search after the first arm, and reinitializes on a new instance by doing the same, the overall cost to find one hider is
\[\sum_{t=1}^\infty \left({1\over 2}\right)^{\!t}t\varepsilon={2\varepsilon}<\varepsilon+{1\over 2}\!\CommaBin \quad \text{ for } \varepsilon <{1\over 2}\cdot\]
 Therefore, stopping searches, even if the location of the hider is known, may be better than always trying to find~it.
\end{example} 
\section{Offline oracle}
In this section, we provide an algorithm for \SnS when parameters $\hidervector$ and $\costvector$ are given to the agent. We show that a simple  stationary policy (i.e., the same search $\bs^\star$ is selected at each round) can obtain almost the same expected overall reward as $F_B^\star$. We will denote by \oracle an algorithm that takes $\hidervector$, $\costvector$, and $\DAG$ as input and outputs $\bs^\star$.  
This \emph{offline} oracle will  eventually be used by the agent for the \emph{online} problem, i.e., when parameters are unknown. Indeed, at round $t$, the agent can approximate $\bs^\star$ by the output $\bs_t$ of $\Oracle{\hidervectorm_t,\costvectorm_t,\DAG}$, where $\hidervectorm_t,\costvectorm_t$ can be any guesses/estimates of the true parameters.
Importantly, depending on the estimation followed by the agent, $\hidervectorm_t$ may not stay in the simplex anymore. We will thus build \oracle such that an ``acceptable'' output is given for any input $\pa{\hidervectorm,\costvectorm}\in\pa{\R_+^\arms}^2\!\!.$ 
\subsection{Objective design}
A standard paradigm for designing a stationary approximation of the offline problem in budgeted multi-armed bandits is the following: $\bs^\star$ has to minimize the ratio between the expected cost paid and the expected reward gain, over a single round, selecting $\bs^\star$. 
We thus define, for $\bs\in \search$,
\begin{align*} {\J}\pa{\bs}&\triangleq \EE{\be_{\bs[\rhidervector]}\transpose\rcostvector}\EE{\be_{\bs[\rhidervector]}\transpose\rhidervector}^{-1} \\&={{\D}\pa{\bs}{}+(1-\be_\mathbf{s}\transpose\hidervector)\be_\mathbf{s}\transpose\costvector\over \be_{\mathbf{s}}\transpose\hidervector }\\&=
\sum_{i=1}^{|\mathbf{s}|}{\cost_{s_i}\pa{1-\be_{\bs[i-1]}\transpose\hidervector}\over \be_{\mathbf{s}}\transpose\hidervector}\cdot\end{align*}
Notice that we allow $\J$ to be equal to $+\infty$ (when $\be_{\mathbf{s}}\transpose\hidervector=0$).
We use the convention $\J(\emptyset)=+\infty$, because there is no interest in choosing an empty search for a round.
We  define the optimal values of $J$ on $\search$ as
\[\J^\star\triangleq\min_{\mathbf{s} \in \search}\J(\mathbf{s}),\quad\search^\star\triangleq \argmin_{\mathbf{s} \in \search} \J(\mathbf{s}).\]
\noindent
We now provide guarantees for this stationary policy.
\begin{restatable}{proposition}{restaC}\label{prop:approx}
 If $\pi^\star$ is the offline policy selecting $\bs^\star\in \search^\star$ at each round $t$, then \[\frac{B-\arms}{\J^\star}\leq F_B(\pi^\star)\leq F_B^\star\leq \frac{B+\arms}{\J^\star}\cdot\]
\end{restatable}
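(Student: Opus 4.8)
The plan is to analyze the renewal-type structure of the stationary policy $\pi^\star$ and relate the expected number of hiders found to the budget $B$ via Wald-type identities. Observe that under $\pi^\star$, each round $t$ is an i.i.d.\ copy: the reward collected is $\be_{\bs^\star[\rhidervector_t]}\transpose\rhidervector_t \in \{0,1\}$ (it is $1$ iff the hider lies in the support of $\bs^\star$), with expectation $\be_{\bs^\star}\transpose\hidervector$, and the cost paid is $\be_{\bs^\star[\rhidervector_t]}\transpose\rcostvector_t$, with expectation $\D(\bs^\star)+(1-\be_{\bs^\star}\transpose\hidervector)\be_{\bs^\star}\transpose\costvector$, so that by definition the ratio of these two expectations is exactly $\J^\star=\J(\bs^\star)$. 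I would first handle the degenerate case $\J^\star=+\infty$ (equivalently $\be_{\bs^\star}\transpose\hidervector=0$): then no reward is ever collected, $F_B(\pi^\star)=0$, and the claimed inequalities read $B-\arms\le 0$ and $0\le\infty$, the former being the only case of interest when $B\le\arms$ (for $B>\arms$ one checks $\search^\star$ cannot have zero mass, or the bound is vacuous — I would state this cleanly). So assume $\J^\star<\infty$.

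For the \emph{lower bound}, the key point is that the per-round cost paid by $\pi^\star$ is at most $\be_{\bs^\star}\transpose\costvector\le\arms$ (each $\cost_j\le1$, at most $\arms$ arms), so the budget cannot be exhausted before round $t$ as long as the cumulative cost through round $t-1$ is below $B-\arms$; hence $\rd{\tau}_B-1 \ge$ (the first round at which cumulative cost exceeds $B-\arms$). I would define $\sigma$ to be the round on which cumulative cost first exceeds $B-\arms$; then $\rd\tau_B \ge \sigma$, and on rounds $1,\dots,\sigma-1$ the cumulative cost is $\le B-\arms$. By the optional-stopping / Wald identity applied to the i.i.d.\ cost sequence, $\E[\text{cumulative cost through round }\sigma-1]$ combined with $\E[\text{reward through round }\sigma-1]$ and the ratio identity $\E[\text{cost}]=\J^\star\,\E[\text{reward}]$ per round gives $F_B(\pi^\star)=\E\!\big[\sum_{t\le\rd\tau_B-1}\text{reward}_t\big]\ge \E\!\big[\sum_{t\le\sigma-1}\text{reward}_t\big] \ge (B-\arms)/\J^\star$. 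The cleanest route is a Wald identity: $\E\big[\sum_{t=1}^{\rd\tau_B-1}\text{cost}_t\big] = \J^\star\cdot\E\big[\sum_{t=1}^{\rd\tau_B-1}\text{reward}_t\big]$ because $\rd\tau_B-1$ is a stopping time for the i.i.d.\ pairs and each pair has cost-minus-$\J^\star\cdot$reward of mean zero; then since the cumulative cost through $\rd\tau_B-1$ is $\ge B-\arms$ (as one more round of cost $\le\arms$ brings it below $B$ from $\ge B$... more precisely cumulative cost through $\rd\tau_B$ is $\ge B$ and round $\rd\tau_B$ costs $\le\arms$, so through $\rd\tau_B-1$ it is $\ge B-\arms$), we get $F_B(\pi^\star)\ge(B-\arms)/\J^\star$.

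For the \emph{upper bound} on $F_B^\star$: here I need to bound \emph{any} oracle policy, not just the stationary one. The argument is that for an arbitrary policy, on every round the (conditional, given the history and the selected search $\rbs_t$) expected reward is $\be_{\rbs_t}\transpose\hidervector$ and the conditional expected cost is at least $\J^\star\cdot\be_{\rbs_t}\transpose\hidervector$ — this is precisely the defining optimality $\J(\rbs_t)\ge\J^\star$ for all $\rbs_t\in\search$, rearranged. Summing over $t=1,\dots,\rd\tau_B$, taking expectations, and using a Wald/optional-stopping identity for the stopping time $\rd\tau_B$, we get $\E[\text{cumulative cost through }\rd\tau_B]\ge\J^\star\cdot\E[\text{cumulative reward through }\rd\tau_B]$. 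The left side is at most $B+\arms$ (cumulative cost through $\rd\tau_B-1$ is $<B$ by definition, plus one round of cost $\le\arms$), and the cumulative reward through $\rd\tau_B$ is at least the reward through $\rd\tau_B-1$, which is the quantity defining $F_B$. Hence $F_B(\pi)\le(B+\arms)/\J^\star$ for every oracle policy $\pi$, so $F_B^\star\le(B+\arms)/\J^\star$. Finally, $F_B(\pi^\star)\le F_B^\star$ is immediate since $\pi^\star$ is one admissible oracle policy.

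\textbf{Main obstacle.} The delicate point is making the Wald / optional-stopping step rigorous: $\rd\tau_B$ is a stopping time but it is defined through the cost random variables themselves, so one must check integrability ($\E[\rd\tau_B]<\infty$, which follows since each round collects expected cost bounded below by a positive constant when $\J^\star<\infty$ and the search is nonempty — here the uniform lower bound $\cost_j>0$ and $\be_{\bs^\star}\transpose\hidervector>0$ matter) and that the "drift" martingale has increments with finite expectation, so that $\E[\text{cumulative drift at }\rd\tau_B]=0$. For the upper bound I additionally need that the per-round conditional expected cost genuinely dominates $\J^\star$ times the per-round conditional expected reward \emph{for the policy's actual choice}, which requires noting that an optimal oracle never benefits from "searching past" a found hider (argued in the text preceding the proposition) so that its effective per-round search is some $\rbs_t\in\search$ and the inequality $\J(\rbs_t)\ge\J^\star$ applies; the boundary case $\be_{\rbs_t}\transpose\hidervector=0$ (reward $0$, cost $\ge0$) is handled trivially. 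The remaining steps are routine bookkeeping with the $\pm\arms$ slack coming from a single round's cost being in $[0,\arms]$.
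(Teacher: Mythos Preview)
Your approach is essentially the paper's: both exploit the per-round relation $\E[\text{cost}_t\mid\mathcal{F}_{t-1}]=J(\rbs_t)\cdot\E[\text{reward}_t\mid\mathcal{F}_{t-1}]\ge J^\star\cdot\E[\text{reward}_t\mid\mathcal{F}_{t-1}]$ and then sum over the random number of rounds. The paper writes this out via the indicators $\II{B_{t-1}\ge 0}$ and $\II{B_{t-1}\ge n}$ rather than invoking Wald by name, but the mechanism is the same. Your upper-bound argument (optional stopping at $\tau_B$ for the submartingale $\sum_t(\text{cost}_t-J^\star\,\text{reward}_t)$, then bounding cumulative cost through $\tau_B$ by $B+n$) is correct and matches the paper.

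There is, however, a genuine gap in your lower bound. You assert that ``$\tau_B-1$ is a stopping time for the i.i.d.\ pairs'' and deduce $\E\big[\sum_{t\le\tau_B-1}\text{cost}_t\big]=J^\star\,\E\big[\sum_{t\le\tau_B-1}\text{reward}_t\big]$. This is false: $\{\tau_B-1\ge t\}=\{B_t\ge 0\}\in\mathcal{F}_t$, not $\mathcal{F}_{t-1}$, so $\tau_B-1$ is \emph{not} a stopping time and the identity can fail. (Take a single arm with $W\equiv 1$, $C\in\{0,1\}$ uniform, $B=\tfrac12$: then the left side is $0$ while the right side is $\tfrac12$.) The same objection applies to your $\sigma-1$.

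The fix is precisely what the paper does, and what you almost wrote: keep your auxiliary time $\sigma$ (the first round with cumulative cost exceeding $B-n$), which \emph{is} a stopping time, but apply Wald at $\sigma$ rather than $\sigma-1$. Since the round-$\sigma$ cost is at most $n$, the cumulative cost through $\sigma$ is at most $B$, hence $B_\sigma\ge 0$ and $\sigma\le\tau_B-1$; therefore $F_B(\pi^\star)\ge\E\big[\sum_{t\le\sigma}\text{reward}_t\big]$. Wald at $\sigma$ then gives $\E\big[\sum_{t\le\sigma}\text{reward}_t\big]=\E\big[\sum_{t\le\sigma}\text{cost}_t\big]/J^\star>(B-n)/J^\star$. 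The paper carries out exactly this computation, writing your $\sigma$ as $\tau$ and unpacking the Wald step through the indicator $\II{B_{t-1}\ge n}$.
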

\noindent
A proof is given in Appendix~\ref{app:approx} and follows the one provided for Lemma~1 of~\cite{xia2016budgeted}. Intuitively, Proposition~\ref{prop:approx} states that the optimal overall expected reward that can be gained (i.e., the maximum expected number of hiders found) is approximately $B/\J^\star$ (we assume that $B\gg\arms$). This is quite intuitive, since this quantity is actually the ratio between the overall budget and the minimum expected cost paid to find a \emph{single} hider. Indeed, one can consider the related problem of minimizing the overall expected cost paid, over several rounds, to find a single hider. It can be expressed as an infinite-time horizon Markov decision process (MDP) 
with  action space $\search$ and two states:  whether the hider is found (which is the terminal state) or not.
The goal is to choose a strategy $\mathbf{s}_1,\mathbf{s}_2,\ldots, \mathbf{s}_t, \ldots$, minimizing 
\begin{align*}&\cJ\left(\mathbf{s}_1,\mathbf{s}_2,\ldots\right)\triangleq\EE{\sum_{t=1}^{\rd{\tau}}\be_{\rbs_t[\rhidervector_t]}\transpose \rcostvector_t}\\&=\sum_{t=1}^\infty\left( \be_{\mathbf{s}_t}\transpose\hidervector\pa{\sum_{u=1}^{t-1}\be_{\mathbf{s}_u}\transpose}\costvector+\D\pa{\mathbf{s}_t}\right)\prod_{u=1}^{t-1} 
\left(1-{\be_{\mathbf{s}_u}\transpose\hidervector}\right),\end{align*}
where the  stopping time $\rd{\tau}$ is the first round at which the hider is found. 
The Bellman equation is \begin{equation*}\cJ\left(\mathbf{s}_1,\mathbf{s}_2,\ldots\right)=\D(\mathbf{s}_1) +
\pa{1-\be_{\mathbf{s}_1}\transpose\hidervector}\!\left(\be_{\mathbf{s}_1}\transpose\costvector+\cJ(\mathbf{s}_2,\ldots)\right),\end{equation*}
from which we deduce there exists an optimal \emph{stationary} strategy \citep{sutton1998reinforcement} such that $\mathbf{s}_t=\mathbf{s}$ for all $t \in \mathbb{N}^\star$. Therefore, we can minimize $\cJ\pa{\bs,\bs,\dots}=\J(\bs)$ that gives the optimal value of $\J^\star$. 

As we already mentioned, \oracle aims at taking inputs $\pa{\hidervectorm,\costvectorm}\in\pa{\R_+^\arms}^2\!\!.$
The first straightforward way to do is to consider
\[\obj{\J}{\bs}{\hidervectorm,\costvectorm}\triangleq\sum_{i=1}^{|\mathbf{s}|}{\costm_{s_i}\pa{1-\be_{\bs[i-1]}\transpose\hidervectorm}\over \be_{\mathbf{s}}\transpose\hidervectorm}\cdot\] 
However, notice that with the definition above, $\obj{\J}{~\!\!\cdot~\!\!}{\hidervectorm,\costvectorm}$ could output negative values (if $\be_{[\arms]}\transpose\hidervectorm> 1$), which is not desired, because the agent would then be enticed to search arms with a high cost. We thus need to design a non-negative extension of $\J$ to $\pa{\hidervectorm,\costvectorm}\in\pa{\R_+^\arms}^2\!\!.$ One way is to replace $\left(1-\be_{\bs[i-1]}\transpose\hidervectorm\right)$ by $\be_{\pa{\bs[i-1]}^c}\transpose\hidervectorm$, another is to consider $\obj{\J}{\bs}{ \hidervectorm,\costvectorm}^+\!,$ where $x^+\triangleq\max\sset{0,x}$. There is a significant advantage of considering the second way, even if it is less natural than the first one, which is that for $\pa{\hidervectorm,\costvectorm}\in \pa{\R_+^\arms}^2\!,$ \[\obj{\J}{\bs}{\hidervectorm,\costvectorm}^+\leq\obj{\J}{\bs}{\hidervector,\costvector}= \J(\bs),\] if $\hidervectorm\geq \hidervector$ and $\costvectorm\leq \costvector$. 
This property\footnote{Notice this is not exactly a monotonicity property, because we compare to a single point $(\hidervector, \costvector)$. } is known to be useful for analysis of many stochastic combinatorial semi-bandit algorithms (see e.g., \citealp{chen2015combinatorial}).
Thus, we choose for \oracle the minimization of the surrogate $\obj{\J}{~\!\!\cdot~\!\!}{\hidervectorm,\costvectorm}^+\!\!.$
 
\subsection{Algorithm and guarantees}
We now provide \oracle in Algorithm~\ref{algo:oracle} and claim in Theorem~\ref{thm:oracle} that it minimizes $\obj{\J}{~\!\!\cdot~\!\!}{\hidervectorm,\costvectorm}^+$ over $\search$.  
Notice that \oracle needs to call the polynomial-time algorithm $\sche\pa{\hidervectorm,\costvectorm,\DAG}$, that minimizes the objective function $\obj{\D}{\bs}{\hidervectorm,\costvectorm}$ over $\DAG$-linear extensions~$\bs$. Then, Algorithm~\ref{algo:oracle} only computes the maximum value index of a list of size $\arms$ that takes linear time.
To give an intuition, $\bs^\star$ follows the ordering given by $\sche\pa{\hidervectorm,\costvectorm,\DAG}$, and stops at some point when it becomes more interesting to start a fresh new instance. 
\begin{algorithm}[H]
\begin{algorithmic}
\STATE \textbf{Input}: $\hidervectorm,\costvectorm$ and $\DAG$.

\quad $\bs\triangleq\sche\pa{\hidervectorm,\costvectorm,\DAG}$.

\quad $i^\star\triangleq\argmin_{i\in[\arms]}\obj{\J}{\bs[i]}{\hidervectorm,\costvectorm}^+$ (ties may be broken arbitrarily).

\STATE \textbf{Output}: the search $\bs^\star\triangleq\bs[i^\star]$.
\end{algorithmic}
\caption{\oracle}\label{algo:oracle}
\end{algorithm}
\begin{restatable}{theorem}{restaA}
\label{thm:oracle}
For every $(\hidervectorm,\costvectorm)\in \pa{\R^\arms_+}^2\!,$ Algorithm~\ref{algo:oracle} outputs a search minimizing $\obj{\J}{~\!\!\cdot~\!\!}{\hidervectorm,\costvectorm}^+$ over $\search$.
\end{restatable}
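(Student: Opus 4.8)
The plan is to show that the minimum of $\obj{\J}{\cdot}{\hidervectorm,\costvectorm}^+$ over \emph{all} searches $\bs\in\search$ is already attained by some prefix of the single $\DAG$-linear extension $\bs=\sche\pa{\hidervectorm,\costvectorm,\DAG}$, so that Algorithm~\ref{algo:oracle}'s exhaustive search over the $\arms$ prefixes $\bs[1],\dots,\bs[\arms]$ suffices. The key structural fact I would isolate first is a \emph{prefix-exchange lemma}: for any search $\bt=(t_1,\dots,t_k)$ with $\be_{\bt}\transpose\hidervectorm>0$, the value $\obj{\J}{\bt}{\hidervectorm,\costvectorm}$ (when non-negative, i.e., when it equals its own positive part) depends on $\bt$ only through which arm is ``charged'' at each position and only through the support-as-ordered-prefix structure — more precisely, I would show that $\obj{\J}{\bt}{\hidervectorm,\costvectorm}$ coincides with $\obj{\D}{\bt}{\hidervectorm,\costvectorm}/\pa{\be_{\bt}\transpose\hidervectorm}$ plus a non-negative correction term $\pa{1-\be_{\bt}\transpose\hidervectorm}\be_{\bt}\transpose\costvectorm/\pa{\be_{\bt}\transpose\hidervectorm}$, reading off the three-line display that defines $\J$ in the paper. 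Minimizing over searches with a \emph{fixed support} $A$ (a fixed initial set) then reduces exactly to minimizing $\obj{\D}{\cdot}{\hidervectorm,\costvectorm}$ over the $\DAG\shrink{A}$-linear extensions of that support, since the denominator $\be_A\transpose\hidervectorm$ and the correction term are order-independent once $A$ is fixed.

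Next I would use the defining property of $\sche$: it returns a full $\DAG$-linear extension $\bs$ minimizing $\obj{\D}{\cdot}{\hidervectorm,\costvectorm}$ over all full linear extensions. The crucial claim is that this optimality is \emph{hereditary on prefixes} in the sense needed here — namely, that for each $i$, the prefix $\bs[i]$ minimizes $\obj{\D}{\cdot}{\hidervectorm,\costvectorm}$ among all searches whose support is an initial set of size $i$. This is where the structure of the scheduling objective ($1|prec|\sum w_jC_j$) does the work: an optimal schedule for the whole poset, restricted to its first $i$ jobs, is an optimal schedule for the sub-poset they induce, because any improvement on the prefix could be transplanted back into the full schedule without affecting feasibility or the cost contribution of the suffix (the completion times of suffix jobs shift by the same constant $\be_{\bs[i]}\transpose\costvectorm$, and the weighted-completion-time cost of the suffix relative to the end of the prefix is unchanged). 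I would state this as a short lemma and prove it by the exchange/transplant argument just sketched.

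Combining the two steps: any search $\bt\in\search$ has some support $A=\operatorname{supp}(\bt)$, which is an initial set, say of size $i=\abs{A}$; among searches with support $A$, the correction term and denominator are fixed, so $\obj{\J}{\bt}{\hidervectorm,\costvectorm}$ is minimized by ordering $A$ optimally for $\obj{\D}{}{}{}$; and by the heredity lemma the prefix $\bs[i]$ achieves, over \emph{all} initial sets of size $i$, a $\D$-value no larger than that of $A$, hence (the correction term and $1/\be_A\transpose\hidervectorm$ being handled by monotonicity in $\be_A\transpose\hidervectorm$ and in $\be_A\transpose\costvectorm$) a $\J$-value no larger than that of $\bt$. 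Passing to positive parts is harmless: $x\mapsto x^+$ is monotone, and when $\be_{\bt}\transpose\hidervectorm=0$ we have $\obj{\J}{\bt}{\hidervectorm,\costvectorm}^+=0$ only in degenerate ways that $\argmin$ handles by the arbitrary tie-break, or else $\J=+\infty$ and such $\bt$ are never minimizers; I would treat the edge cases $\be_{\bt}\transpose\hidervectorm=0$, $\bt=\emptyset$, and $\obj{\J}{}{}{}<0$ explicitly at the end. Therefore $\min_{i}\obj{\J}{\bs[i]}{\hidervectorm,\costvectorm}^+ = \min_{\bt\in\search}\obj{\J}{\bt}{\hidervectorm,\costvectorm}^+$, which is exactly what Algorithm~\ref{algo:oracle} computes.

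The main obstacle I anticipate is making the heredity lemma fully rigorous: one must be careful that ``restrict an optimal linear extension to its first $i$ positions'' interacts correctly with the possibility that a different initial set $A'$ of size $i$ (not an initial-segment of $\bs$) could have a smaller $\D$-value than $\bs[i]$. The transplant argument needs the observation that replacing the first $i$ jobs of $\bs$ by an optimal ordering of $A'$ yields a feasible full linear extension only if $A'$ is itself an initial set and the remaining jobs $[\arms]\setminus A'$ still admit a $\DAG$-linear extension appended after $A'$ — which they do, precisely because $A'$ is an initial set. Getting the accounting of completion times right (the suffix cost splits as an $A'$-independent part plus $\be_{A'}\transpose\costvectorm$ times the total suffix weight) is the one place where a genuine, if routine, computation is unavoidable.
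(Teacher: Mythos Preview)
Your heredity lemma---that the prefix $\bs[i]$ of an optimal full schedule minimizes $\obj{\D}{\cdot}{\hidervectorm,\costvectorm}$ among \emph{all} initial sets of size $i$---is false, and the transplant argument you sketch does not rescue it. When you replace the first $i$ jobs of $\bs$ by an optimal ordering of a different initial set $A'$, the suffix is $[\arms]\setminus A'$, not the original suffix of $\bs$; the ``$A'$-independent part'' of the suffix cost you refer to is $\obj{\D}{\cdot}{\hidervectorm,\costvectorm}$ restricted to $[\arms]\setminus A'$, which of course depends on $A'$. A concrete counterexample with no precedence constraints: take $\hidervectorm=(1,1,10)$, $\costvectorm=(1,1,5)$. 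Smith's rule gives $\bs=(3,1,2)$, so $\bs[1]=(3)$ with $\D$-value $50$, while the singleton $(1)$ has $\D$-value $1$. The lemma fails at $i=1$ (and at $i=2$: $\D(\bs[2])=56$ versus $\D((1,2))=3$). Yet the theorem holds here: $\obj{\J}{\bs[2]}{\hidervectorm,\costvectorm}^+=0$ is the global minimum. This also shows that your fallback ``monotonicity in $\be_A\transpose\hidervectorm$ and $\be_A\transpose\costvectorm$'' cannot work: the correction term $(1-\be_A\transpose\hidervectorm)\be_A\transpose\costvectorm/\be_A\transpose\hidervectorm$ can be large and negative when $\be_A\transpose\hidervectorm>1$, so there is no clean monotonicity to exploit, and the comparison between $\Jp(\bs[i])$ and $\Jp(\bt)$ for $|\bt|=i$ simply does not reduce to comparing $\D$-values.

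The paper's proof takes a genuinely different route. It invokes the Sidney decomposition $(A_1,\dots,A_k)$ of the poset and first proves a \emph{support property} (Proposition~\ref{prop:support}): if $\rho(\bz)\geq\rho(\by)$ then $\Jp(\bx\by)\geq\min\{\Jp(\bx),\Jp(\bx\by\bz)\}$. This, combined with Sidney's theorem, yields Lemma~\ref{lem:subsidney}: some minimizer of $\Jp$ has support exactly $A_1\sqcup\cdots\sqcup A_i$ for some $i$. Since $\bs=\sche(\hidervectorm,\costvectorm,\DAG)$ is consistent with a Sidney decomposition, the corresponding prefix of $\bs$ has the same support, and an easy comparison of $\D$-values on that \emph{fixed} support finishes the proof. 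The point is that the relevant ``cut points'' are dictated by density (the Sidney blocks), not by cardinality; your size-$i$ stratification is the wrong decomposition.
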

We provide a proof of Theorem~\ref{thm:oracle} in  Appendix~\ref{app:sidney}. It mixes known concepts of scheduling theory, such as Sidney decomposition \citep{Sidney1975}, with our new results for our objective function.
\section{Online search-and-stop}
In this section,  we consider  an additional challenge  where   the distribution $\distribution$ is unknown and the agent must deal with it, while minimizing $\RB_B({\pi})$ over sampling policies $\pi$, where $B$ is a fixed budget.
Recall that a policy $\pi$ selects a search $\rbs_t$ at the beginning of round~$t$, using previous observations $\history_t$, and then performs the search $\rbs_t[\rhidervector_t]$ over the round. We treat  the setting as a variant of  stochastic combinatorial semi-bandits   \citep{gai2012combinatorial}.
The feedback received by an agent at round $t$ is random, because it depends on $\rbs_t$. However, unlike in similar settings, it also depends on~$\rhidervector_t$, and thus it is not measurable w.r.t.\,$\history_t$. More precisely, $(\rhider_{i,t},\rcost_{i,t})$ is observed only for arms $i\in\rbs_t[\rhidervector_t]$. 
Notice that since $\rhidervector_t$ is a one-hot vector, the agent can always deduce the value of $\rhider_{i,t}$ for all $i\in \rbs_t$.  
As a consequence, we will maintain two types of counters for all arms $i \in [\arms]$ and all $t\geq 1,$ 
\begin{equation}
\begin{aligned}
\label{counter}\counter{i}{t-1}{\hidervectorm}&\triangleq\sum_{u=1}^{t-1}\II{i\in \rbs_u}
,\\\counter{i}{t-1}{\costvectorm}&\triangleq\sum_{u=1}^{t-1}\II{i\in\perf{\rbs_u}{\rhidervector_u} }.\end{aligned} \end{equation}
We define the corresponding empirical averages\footnote{With the convention $0/0=0$.} as
\begin{equation}
\begin{aligned}
\label{average}
\meanw{i}{t-1}&\triangleq\frac{\sum_{u=1}^{t-1}\II{i\in \rbs_u}\rhider_{i,u}}{\counter{i}{t-1}{\hidervectorm}}
\CommaBin
\\\meanc{i}{t-1}&\triangleq\frac{\sum_{u=1}^{t-1}\II{i\in\perf{\rbs_u}{\rhidervector_u} }\rcost_{i,u}}{\counter{i}{t-1}{\costvectorm}}\cdot
\end{aligned}
\end{equation}
\noindent 
We propose an approach similar to \UCBV of \citet{audibert2009}, based on \CUCB of \citet{chen2015combinatorial}, called \CUCBV, that uses a variance estimation of~$\hidervector$ in addition to the empirical average. Notice that the variance of $\rhider_i$ for an arm $i$ is $\sigma_i^2\triangleq\hider_i(1-\hider_i)$. Furthermore, since $\rhider_i$ is binary, the empirical variance of~$\rhider_i$ after $t$ rounds is $\meanw{i}{t}(1-\meanw{i}{t}).$ For every round~$t$ and every edge $i \in [\arms]$, with the previously defined empirical averages, we use the confidence bounds\footnote{With the convention $x/0=+\infty,~\forall x\geq 0$.} as
\begin{align*}
\UCBc{i}{t}&\triangleq\pa{ \meanc{i}{t-1} - \sqrt{0.5\zeta\log t\over \counter{i}{t-1}{\costvectorm}}}^+\!\!\!\!,\\ 
\UCBw{i}{t}&\triangleq\min\left\{ \meanw{i}{t-1} + \sqrt{2\zeta\meanw{i}{t-1}(1-\meanw{i}{t-1})\log t\over \counter{i}{t-1}{\hidervectorm}}  \right.
\\&\quad\quad\quad\quad\quad\quad\quad\quad\quad\quad\quad\quad\quad\quad
\left.
+{3\zeta\log t \over \counter{i}{t-1}{\hidervectorm} } ,1\sizecorr{\meanw{i}{t-1} + \sqrt{2\zeta\meanw{i}{t-1}(1-\meanw{i}{t-1})\log t\over \counter{i}{t-1}{\hidervectorm}}}\right\}\CommaBin
                       \end{align*}
\noindent
where we choose the exploration factor to be $\zeta\triangleq1.2$.  Notice that we could take any $\zeta>1$ as shown by \citet{audibert2009}. We provide the policy $\pi_{\CUCBV}$ that we consider in Algorithm~\ref{CUCBV}. 
\setlength{\textfloatsep}{10pt}
\begin{algorithm}
\caption{Combinatorial upper confidence bounds with variance estimates (\CUCBV) for \SnS}\label{CUCBV}
\begin{algorithmic}
\STATE \textbf{Input}: $\DAG$.
\FOR{$t=1..\infty$}
\STATE select $\rbs_t$ given by $\Oracle{\vUCBw{t},\vUCBc{t},\DAG}$.
\STATE perform $\rbs_t[{\rhidervector_t}]$.
\STATE collect feedback and update counters and empirical averages according to~\eqref{counter} and~\eqref{average}.
\ENDFOR
\end{algorithmic}
\end{algorithm}
\subsection{Analysis}
Notice that since an arm $i\in \rbs_t$ is pulled (and thus $\rcost_{i,t}$ is revealed to the agent) with probability $1-\be_{\bs[i-1]}\transpose\hidervector$ over round $t$, we fall into the setting of \emph{probabilistically triggered arms}\/ w.r.t.\,costs, described by \citet{chen2015combinatorial} and \citet{wang2017improving}. Thus we could rely on these prior results. However, the main difficulty in our setting is that we also need to deal with probabilities $\rhider_{i,t}$, that the agent actually observes for every arm $i$ in~$\rbs_t$, either because it actually pulls arm $i$, or because it deduces the value from other pulls of round~$t$. In particular, if we follow the analysis of \citet{chen2015combinatorial} and \citet{wang2017improving}, the double sum in the definition of $J$ leads to  
 expected regret bound that is quite large. Indeed, assuming that all costs are deterministically equal to $1$, if we suffer an error of $\delta$ when approximating each $\hider_i$, then the global error can be as large as $\sum_{i=1}^\arms\sum_{j=1}^{i-1}\delta=\cO(\arms^2\delta)$, contrary to just  $\cO(\arms\delta)$ for the approximation error w.r.t.\,costs, that is more common in combinatorial semi-bandits. Thus,  we rather combine their work with the variance estimates of $\hider_{i}$. Often, this does not provide a significant improvement over \UCB in terms of expected regret (otherwise we could do the same for the costs), but since in our case, the variance is of order ${1/ \arms}$, the gain is non-negligible.\footnote{The error $\delta$ is thus scaled by the standard deviation, of order $1/\sqrt{n}$, giving a global error of $\cO(\arms^{1.5}\delta)$. We therefore recover the factor $\arms^{1.5}$ given in Theorem~\ref{banditregret}.}  
We let $\costm_{\min}>0$ be any deterministic lower bound on the set $\sset{\be_{\bs_u[\rhidervector_u]}\transpose \costvector, u\geq 1}$. Furthermore, we let
\[T_B\triangleq\ceil{2B/\costm_{\min}}\] and for any search $\mathbf{s}$, we define the \emph{gap} of $\bs$ as
\begin{align*}\gapss{\mathbf{s}}&\triangleq \be_{\bs}\transpose\hidervector\left({\J(\mathbf{s})\over \J^\star}-1\right)\\
&=\frac{1}{\J^\star}\sum_{i=1}^{\abs{\bs}}\cost_{s_i}\pa{1-\sum_{j=1}^{i-1}\hider_{s_j}}-\sum_{i=1}^{\abs{\bs}}\hider_{s_i}\geq 0,\end{align*}
that represents the \emph{local regret} of selecting a sub-optimal search $\mathbf{s}$ at some round. In addition, for each arm $i \in [\arms]$, we define \[\Delta_{i,\min}\triangleq\inf_{\mathbf{s}\notin \search^\star:~i \in \mathbf{s}}\Delta\pa{\mathbf{s}}>0.\]
We 
provide bounds for the expected regret of $\pi_{\CUCBV}$ in Theorem~\ref{banditregret}. 
%
The first bound is $\distribution$-dependent, and is characterized
by $\costm_{\min}$, $\J^\star$, and $\sigma_i^2$ , $\Delta_{i,\min}, ~i\in [\arms]$. Its main term scales logarithmically w.r.t.\,$B$. The second bound is true for any \SnS problem instance having a fixed value of $\costm_{\min}>0$ and $\J^\star>0$.

\begin{restatable}{theorem}{restaE}\label{banditregret}
The expected regret of {\CUCBV} satisfies
\begin{align*}&\RB_B(\pi_{\CUCBV})=\\
 &\cO\!\pa{\!{\arms\log T_B}\!\!\sum_{i\in [\arms]}\!\!{ 
\frac{1\!+\!\pa{\J^\star\!\!+\!\arms}^2\!\sigma_i^2}{ {\J^\star}^2\Delta_{i,\min} }\! +\!\frac{\pa{\J^\star\!\!+\!\arms}}{\J^\star}\!\log\!\pa{\!\frac{\arms}{\J^\star\!\Delta_{i,\min} }\!}}\!\!}.
\end{align*}
In addition, 
\begin{align*}\sup
 \RB_B(\pi_{\CUCBV})=\cO\pa{\!
 \sqrt{\arms}\left(1\!+\!{\arms \over {\J^\star}}\right)
\sqrt{T_B\log T_B}},\end{align*}
where the $\sup$ is taken over all possible \SnS problems with fixed $\costm_{\min}$ and $\J^\star$. 
\end{restatable}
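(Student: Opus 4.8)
The plan is to reduce the budgeted regret to an ordinary fixed-horizon combinatorial semi-bandit regret over $T_B=\ceil{2B/\costm_{\min}}$ rounds, and then to run a \CUCB-style analysis adapted both to the non-linear objective $\J$ and to the variance-aware confidence bounds of \UCBV. For the reduction, note that $\E[\be_{\rbs_t[\rhidervector_t]}\transpose\rhidervector_t\mid\history_t]=\be_{\rbs_t}\transpose\hidervector$ and $\E[\be_{\rbs_t[\rhidervector_t]}\transpose\rcostvector_t\mid\history_t]=\J(\rbs_t)\,\be_{\rbs_t}\transpose\hidervector$, so rearranging the definition of $\gapss{\cdot}$ gives the per-round identity $\be_{\rbs_t}\transpose\hidervector=\E[\be_{\rbs_t[\rhidervector_t]}\transpose\rcostvector_t\mid\history_t]/\J^\star-\gapss{\rbs_t}$. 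Applying optional stopping at $\tau_B$ to the two bounded martingales built from these conditional means (legitimate since $\tau_B$ is integrable and the increments are bounded), together with $\sum_{u\le\tau_B}\be_{\rbs_u[\rhidervector_u]}\transpose\rcostvector_u>B$ and Proposition~\ref{prop:approx} ($F_B^\star\le(B+\arms)/\J^\star$), yields $\RB_B(\pi_{\CUCBV})\le\E[\sum_{t\le\tau_B}\gapss{\rbs_t}]+\arms/\J^\star+1$. Since $\gapss{\bs}\le\arms/\J^\star$ for every search and $\E[\be_{\rbs_u[\rhidervector_u]}\transpose\rcostvector_u\mid\history_u]\ge\costm_{\min}$ with bounded increment, an Azuma bound makes $\PP[\tau_B>k]$ decay geometrically past $T_B$, so $\E[\sum_{t\le\tau_B}\gapss{\rbs_t}]\le\E[\sum_{t\le T_B}\gapss{\rbs_t}]+o(1)$; it then suffices to control $\E[\sum_{t\le T_B}\gapss{\rbs_t}]$.

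\textbf{Clean event and sub-optimality reduction.} Let $\cG_t$ be the event on which, for all arms $i$, $\UCBc{i}{t}\le\cost_i$ with $\cost_i-\UCBc{i}{t}$ at most twice its Hoeffding radius, and $\UCBw{i}{t}\ge\hider_i$ with $\UCBw{i}{t}-\hider_i$ at most twice its empirical-Bernstein radius $\sqrt{2\zeta\meanw{i}{t-1}(1-\meanw{i}{t-1})\log t/\counter{i}{t-1}{\hidervectorm}}+3\zeta\log t/\counter{i}{t-1}{\hidervectorm}$. By the \UCBV analysis of \citet{audibert2009} (a peeling union bound over the random counters), $\PP[\cG_t^c]=\cO(\arms\,t^{-\zeta}\log t)$, which is summable since $\zeta=1.2>1$; as $\gapss{\rbs_t}\le\arms/\J^\star$, the rounds outside $\cG_t$ contribute only $\cO(\arms^2/\J^\star)$ to $\E[\sum_{t\le T_B}\gapss{\rbs_t}]$. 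On $\cG_t$ we have $\vUCBw{t}\ge\hidervector$ and $\vUCBc{t}\le\costvector$ coordinatewise, so the property established in the Objective-design subsection gives $\obj{\J}{\bs^\star}{\vUCBw{t},\vUCBc{t}}^+\le\J^\star$; since by Theorem~\ref{thm:oracle} the \oracle call inside $\pi_{\CUCBV}$ returns a minimizer of $\obj{\J}{\cdot}{\vUCBw{t},\vUCBc{t}}^+$, a played $\rbs_t=\bs\notin\search^\star$ must satisfy $\J(\bs)-\obj{\J}{\bs}{\vUCBw{t},\vUCBc{t}}\ge\J(\bs)-\J^\star=\J^\star\gapss{\bs}/(\be_\bs\transpose\hidervector)$.

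\textbf{Sensitivity of $\J$ (the main obstacle).} The heart of the proof is a deterministic estimate quantifying how $\obj{\J}{\bs}{\cdot}$ reacts to under/over-estimation of the parameters: for $\hidervector\le\hidervectorm$ and $\costvector\ge\costvectorm$,
\[
\J(\bs)-\obj{\J}{\bs}{\hidervectorm,\costvectorm}\;\le\;\frac{1}{\be_\bs\transpose\hidervector}\sum_{i\in\bs}\Big[(1-\be_{\bs[i-1]}\transpose\hidervector)(\cost_{s_i}-\costm_{s_i})+\big(1+\tfrac{\be_\bs\transpose\costvector}{\be_\bs\transpose\hidervectorm}\big)(\hiderm_{s_i}-\hider_{s_i})\Big],
\]
obtained by substituting the cost coordinates and then the hider coordinates one group at a time, bounding the influence of $\hidervectorm$ separately in the numerator of $\J$ and in the factor $1/(\be_\bs\transpose\hidervector)$. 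The cost contribution is \emph{modulated} by the triggering probability $1-\be_{\bs[i-1]}\transpose\hidervector$ — exactly the probabilistically-triggered-arm structure of \citet{chen2015combinatorial} and \citet{wang2017improving}, since $\counter{i}{t-1}{\costvectorm}$ concentrates around $\sum_{u<t}(1-\be_{\rbs_u[i-1]}\transpose\hidervector)$ — whereas the hider contribution is \emph{un}-modulated, so every arm of $\bs$ enters and a naive \UCB bound on $\hidervector$ would incur the $\cO(\arms^2)$ double-sum loss flagged in Section~\ref{sec:back}; this is precisely why the empirical-variance radius, whose dominant term scales with $\sigma_{s_i}=\sqrt{\hider_{s_i}(1-\hider_{s_i})}$, is essential. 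I expect this step to be the hardest part: proving the sensitivity inequality with the \emph{right} prefactors (of order $\J^\star+\arms$), controlling the $1/(\be_\bs\transpose\hidervector)$ terms and the dependence on the ordering of $\bs$, and then coupling the bound cleanly with the concentration of the triggering-discounted counter $\counter{i}{t-1}{\costvectorm}$.

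\textbf{Counting and the worst-case bound.} Substituting the radii of $\cG_t$ into the sub-optimality inequality, on $\cG_t$ any played $\bs\notin\search^\star$ forces at least one arm $i\in\bs$ (a union over which arm is ``responsible'' costs a factor $\arms$) to have counter below roughly $\arms(1+(\J^\star+\arms)^2\sigma_i^2)\log T_B/({\J^\star}^2\gapss{\bs}^2)$ from the variance term, or below roughly $\arms(\J^\star+\arms)\log T_B/(\J^\star\gapss{\bs})$ from the additive $\log t/N$ term. A standard bucketing of each arm over gap scales $\ge\Delta_{i,\min}$, summed with $\sum_n n^{-1/2}$ (variance part) and $\sum_n n^{-1}$ (which produces the $\log(\arms/(\J^\star\Delta_{i,\min}))$ term), gives the stated $\distribution$-dependent bound for $\E[\sum_{t\le T_B}\gapss{\rbs_t}]$, hence for $\RB_B(\pi_{\CUCBV})$ via the first step. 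The distribution-independent bound follows by the self-bounding trick: separate rounds with $\gapss{\rbs_t}<\Delta_0$ from those with $\gapss{\rbs_t}\ge\Delta_0$, bound the former by $T_B\Delta_0$ and the latter by the $\distribution$-dependent estimate with $\Delta_{i,\min}$ replaced by $\Delta_0$, use $\sum_i\sigma_i^2\le\sum_i\hider_i=1$ and $\sum_i 1=\arms$ so that $\sum_i(1+(\J^\star+\arms)^2\sigma_i^2)/{\J^\star}^2=\cO((1+\arms/\J^\star)^2)$, and optimize $\Delta_0=(1+\arms/\J^\star)\sqrt{\arms\log T_B/T_B}$ to reach $\cO(\sqrt{\arms}(1+\arms/\J^\star)\sqrt{T_B\log T_B})$.
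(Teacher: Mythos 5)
Your architecture tracks the paper's quite closely — reduce $\RB_B$ to $\EE{\sum_t\gapss{\rbs_t}}$ via Proposition~\ref{prop:approx}, exploit optimism of $\obj{\J}{~\!\!\cdot~\!\!}{\vUCBw{t},\vUCBc{t}}^+$ under a clean event, split the error into a cost part modulated by the triggering probabilities $1-\be_{\rbs_t[i-1]}\transpose\hidervector$ and an unmodulated hider part tamed by empirical Bernstein, then count — but the two steps you yourself flag as hardest do not, as written, deliver the stated constants. First, the sensitivity inequality: your coefficient $1+\be_{\bs}\transpose\costvector/\be_{\bs}\transpose\vUCBw{t}$ on the hider error is \emph{not} $\cO(\arms+\J^\star)$; for a suboptimal search with small $\be_{\bs}\transpose\hidervector$ (say a single low-probability arm) it blows up like $1/\be_{\bs}\transpose\hidervector$, and nothing in your argument controls it. The paper never forms $\J(\bs)-\obj{\J}{\bs}{\vUCBw{t},\vUCBc{t}}$ at all: it manipulates $\gapss{\rbs_t}$ directly (everything pre-multiplied by hitting probabilities), inserts $\obj{\J}{\bs^\star}{\vUCBw{t},\vUCBc{t}}^+\geq\obj{\J}{\rbs_t}{\vUCBw{t},\vUCBc{t}}$, and uses the identity $\obj{\J}{\rbs_t}{\vUCBw{t},\vUCBc{t}}\,\vUCBw{t}\transpose\be_{\rbs_t}=\sum_{i}\UCBc{\rs_{i,t}}{t}\pa{1-\vUCBw{t}\transpose\be_{\rbs_t[i-1]}}$ so that all denominators cancel; the coefficient of $(\vUCBw{t}-\hidervector)\transpose\be_{\rbs_t}$ then comes out as $\sum_i\UCBc{\rs_{i,t}}{t}+\J^\star\leq\arms+\J^\star$ using only $\UCBc{}{}\leq 1$ and optimism at $\bs^\star$. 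You already have both ingredients; the inequality must be reorganized in this multiplied-through form.

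Second, the counting. From $\gapss{\bs}\leq\sum_{i\in\bs}C\sqrt{\sigma_i^2\log t/\counter{i}{t-1}{\hidervectorm}}$, the ``some arm is responsible'' union bound gives a counter threshold of order $\arms^2C^2\sigma_i^2\log t/\gapss{\bs}^2$ (the gap gets divided by $|\bs|\leq\arms$ \emph{and} the radius is inverted quadratically), not the $\arms(\cdot)/\gapss{\bs}^2$ you wrote; standard bucketing then yields $\arms^2\log T_B\sum_i(\cdot)/\Delta_{i,\min}$, one factor of $\arms$ worse than the theorem (and $\sqrt{\arms}$ worse in the gap-free bound). Recovering the stated dependence requires either the cascading-threshold counting of \citet{kveton2015tight} or, as the paper does, the reverse amortization trick of \citet{wang2017improving}: write $\gapss{\rbs_t}\leq-\gapss{\rbs_t}+2\Delta_{\hidervectorm}(\rbs_t)+2\Delta_{\costvectorm}(\rbs_t)$ and distribute $-\J^\star\gapss{\rbs_t}/(2\arms(\arms+\J^\star))$ to each arm, which simultaneously absorbs the fast $\log t/N$ terms (this is the role of the event $\mathfrak{A}_t$ and of Proposition~\ref{lemma5chen}, which also produces the $\log(\arms/(\J^\star\Delta_{i,\min}))$ additive term). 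A last, minor divergence: truncating $\rd\tau_B$ at $T_B$ via Azuma reintroduces an additive $e^{-\costm_{\min}B}/\costm_{\min}^2$ term; for fixed $\costm_{\min}$ this is still compatible with the stated $\cO$, but the paper deliberately keeps the random horizon and applies Jensen to $\EE{\log(\rd\tau_B-1)}$ and $\EE{\sqrt{(\rd\tau_B-1)\log(\rd\tau_B-1)}}$ precisely to avoid the $1/\costm_{\min}^2$ dependence it advertises removing.
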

\noindent
The proof is in Appendix~\ref{app:banditregret}. Recall the main challenge comes from the estimation of $\hidervector$ and not from $\costvector$. Our analysis uses \emph{triggering probability groups}\/ and the \emph{reverse amortization trick} of \cite{wang2017improving} for dealing with costs. However, for hider probabilities, only the second trick is necessary.\footnote{When we select search $\bs$, all feedback $\rhider_i,~ i\in \bs$ is received with probability 1, so  \emph{triggering probability groups}
 are not useful.} We use it not only to deal with the slowly concentrating confidence term for the estimates of each arm~$i$, 
but also to completely amortize the additional fast-rate confidence term due to variance estimation
coming from the use of Bernstein's inequality.
However,  the analysis of \cite{wang2017improving} only considers a deterministic horizon. In our case, we need to deal with a \emph{random-time} horizon. For that, notice that their regret upper bounds that hold in expectation are obtained by splitting the expectation into two parts. The 
 first part is filtered with a high-probability event on which the regret grows as the logarithm of the random horizon and the second one is filtered with a low-probability event, on which we bound the regret  by a constant.
Since the $\log$ function is concave, we can upper bound the expected regret by a term growing as the logarithm of the expectation of the random horizon, with Jensen's inequality. Finally, we upper bound  the expectation of the random horizon to get the rate of $\log T_B$.

\subsection{Tightness of our regret bounds}
Since we succeeded in reducing the dependence on $\arms$ in the expected regret with 
confidence bounds based on variance estimates, we can now ask whether this dependence in Theorem~\ref{banditregret}
is tight.
We stress that our solution to \SnS is \textit{computationally 
efficient}. In particular, \emph{both} the offline oracle optimization and the computation of the optimistic search $\bs_t$ in the online part \emph{are tractable}.

Whenever rewards are not arbitrary correlated (as is the case in our setting), we can potentially exploit these correlations in order to reduce the regret's dependence on $\arms$ even further. This could be done by choosing a tighter confidence region 
such as a confidence ellipsoid (\citealp{DBLP:journals/corr/DegenneP16}), or a KL-confidence ball (\citealp{combes2015combinatorial})  instead of coordinate-wise confidence intervals.
Unfortunately, these do not lead to computationally efficient algorithms. Notice that given an \emph{infinite} computational power, our dependence on $\arms$ is not tight. In particular, there is an extra $\sqrt{\arms}$  factor in our gap-free bound (see Theorem~\ref{lower}). 
It is an open question whether a better \textit{efficient} policy exists. 

To show that we are only a  $\sqrt{\arms}$ factor away, in the following theorem we provide a class of \SnS problems  (parameterized by $\arms$ and~$B$) on which the regret bound provided in Theorem~\ref{banditregret} is tight up to a $\sqrt{\arms}$ factor (and a logarithmic one).

\begin{restatable}{theorem}{restaL}\label{lower}
For simplicity, let us assume that $\arms$ is even and that $B$ is a multiple of $\arms$. For any optimal online policy $\pi$, there is a \SnS problem with $\arms$ arms and budget $B$ such that
\[-4+\frac{1}{28}\sqrt{\frac{B}{\arms}}\leq\RB_B({\pi})=\cO\pa{\sqrt{B\log\pa{\frac{B}{n}}}}.\]\end{restatable}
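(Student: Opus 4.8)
The plan is to construct a family of \SnS instances that is hard for the same reason that a two-armed Bernoulli bandit with gap $\Delta$ is hard, replicated across $\arms/2$ independent ``blocks,'' and then to apply a change-of-measure (KL divergence) lower bound argument block by block. Concretely, take a graph with no precedence constraints (the empty DAG on $\arms$ arms), partition the $\arms$ arms into $\arms/2$ pairs, and in each pair put one ``good'' arm and one ``bad'' arm. Make all costs deterministic, equal to some small $\eps$ on good arms and to a slightly larger value on bad arms (or equal on both, with the asymmetry living only in $\hidervector$), and set the hider probabilities so that searching the good arm of each pair and stopping has ratio $\J$ slightly smaller than searching the bad arm of that pair. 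Choosing the per-pair gap to be of order $\Delta \asymp \sqrt{\arms / B}$ makes each pair contribute a regret of order $(\log T_B)/\Delta \cdot$ (something) when the gap is too small to be learned within the budget, and since there are $\arms/2$ pairs and $T_B \asymp B$ (because $\costm_{\min}$ is a constant here), the total is of order $\sqrt{B/\arms}\cdot\arms \cdot$ — I will tune the constants so that the precise statement, $\frac{1}{28}\sqrt{B/\arms} - 4 \le \RB_B(\pi)$, comes out. The cleanest route is actually to make the budget $B$ split evenly, $B/\arms$ per block in the ``balanced'' regime, so that each block behaves like an independent budgeted two-armed bandit with budget $\Theta(B/\arms)$ and gap $\Theta(\Delta)$; a single such block has regret $\Omega(\min(\sqrt{B/\arms}, (\log)/\Delta^2 \cdot \Delta))$, and optimizing over $\Delta$ gives $\Omega(\sqrt{B/\arms})$ per block — wait, that would give $\arms\sqrt{B/\arms}=\sqrt{\arms B}$, which is too large, so I instead need the $\sqrt{B/\arms}$ to be the \emph{total}, meaning the hard instance has a \emph{single} relevant pair of near-optimal searches with gap $\Delta \asymp \sqrt{\arms/B}$ and the other $\arms-2$ arms are dummies that merely inflate the dimension; the extra arms enter only through the fact that an efficient optimistic algorithm must explore them, but for a pure regret lower bound against \emph{any} optimal policy the dominant term is the single hard binary decision, giving $\RB_B \gtrsim \Delta \cdot (\text{number of rounds } \tau_B) \gtrsim \sqrt{\arms/B}\cdot (B/\costm_{\min}) / \sqrt{\arms/B}$-type balancing — I will carry out the standard Lai–Robbins / Bretagnolle–Huber computation carefully in the appendix.

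The key steps, in order, are: (i) fix the instance family: empty DAG, deterministic costs equal to a constant $c$ on all arms, and two candidate hider distributions $\hidervector^{(1)}, \hidervector^{(2)}$ differing by $\Delta$ in the probability mass on arms $1$ and $2$, with all other arms carrying a fixed small mass; compute $\J^\star$ and $\gapss{\cdot}$ for the two natural searches and verify $\Delta_{\min} = \Theta(\Delta)$ and that $\costm_{\min}$ and $\J^\star$ are indeed $\Theta(1)$ constants independent of $\arms, B$; (ii) relate the budgeted regret to the number of rounds in which the suboptimal search is selected: $\RB_B(\pi) \ge \Delta \cdot \EE{N_{\text{sub}}} - O(\arms)$, where the $O(\arms)$ absorbs the boundary effect $\rd{\tau}_B$ versus $B/\J^\star$ from Proposition~\ref{prop:approx} (this is where the ``$-4$'' ultimately comes from, after constants); (iii) apply a two-point change-of-measure bound: if $\pi$ is good on instance $1$ it selects the wrong search $\Omega(\text{horizon})$ times on instance $2$ unless $\EE{N_{\text{explored}}}\cdot \text{KL} \gtrsim 1$, and $\text{KL} \asymp \Delta^2$; (iv) optimize: the horizon is $\asymp B$, so we get $\RB_B \gtrsim \max(\Delta B, 1/\Delta)$ on one of the two instances — no, $\min$ via the standard argument — giving $\RB_B \gtrsim \sqrt{B}$ after choosing $\Delta \asymp 1/\sqrt{B}$; then the $\sqrt{\arms}$ is reinserted by making the hard parameter live on a $\sqrt{\arms}$-scale quantity (the variance $\sigma_i^2 \asymp 1/\arms$ in the matching upper bound), i.e., rescaling $\Delta \to \Delta/\sqrt{\arms}$ so that $\text{KL}\asymp \arms\Delta^2$ and the optimum becomes $\Delta \asymp \sqrt{\arms/B}$, yielding $\RB_B \gtrsim \sqrt{\arms/B}\cdot B / \sqrt{\arms} = \sqrt{B}$ — I will reconcile this against the claimed $\sqrt{B/\arms}$ bound, which suggests the intended hard instance actually has $\arms/2$ \emph{independent} copies each contributing $\Theta(1/\sqrt{\arms})$-scale regret, summing to $\Theta(\sqrt{B/\arms})$; the bookkeeping here is the delicate part. (v) For the upper-bound half of the displayed inequality, $\RB_B(\pi) = \cO(\sqrt{B\log(B/n)})$, invoke the gap-free bound of Theorem~\ref{banditregret}: on this family $\J^\star, \costm_{\min}$ are constants and $T_B = \Theta(B)$, so $\sqrt{\arms}(1+\arms/\J^\star)\sqrt{T_B \log T_B}$ specializes, after noting $\arms$ is fixed for a given instance in the family, to $\cO(\sqrt{B \log B})$; I must double-check that the $\arms$-dependence is genuinely absorbed, which it is provided the family is set up with $\arms$ treated as a constant relative to the $B \to \infty$ asymptotics, matching the statement's phrasing ``$\cO(\sqrt{B\log(B/n)})$''.

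The main obstacle I expect is step (iii)–(iv): getting the \emph{explicit} constant $1/28$ out of a change-of-measure argument, since Bretagnolle–Huber and Pinsker-type inequalities are lossy, and the budgeted (random-horizon) setting means the ``number of rounds'' is itself a random variable $\rd{\tau}_B$ that must be controlled — one cannot simply plug a fixed horizon $T$ into Lai–Robbins. The right tool is to first pass to a fixed-horizon surrogate by noting $\rd{\tau}_B \in [B/\J^\star - O(\arms), B/\J^\star + O(\arms)]$ with high probability (a concentration argument on the cumulative cost, since per-round costs are bounded in $[0,1]$), run the standard lower bound at that (now essentially deterministic) horizon, and then convert back, paying only lower-order terms; the explicit ``$-4$'' is the price of these conversions and of Proposition~\ref{prop:approx}'s $\pm\arms/\J^\star$ slack. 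A secondary subtlety is ensuring the constructed instance genuinely has a unique optimal search so that $\Delta_{i,\min} > 0$ and the ``optimal online policy'' hypothesis bites; this is handled by breaking ties in the instance definition rather than in the algorithm.
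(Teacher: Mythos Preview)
Your proposal has a genuine structural gap: the construction you sketch (empty DAG, constant per-arm costs, one or $\arms/2$ hard Bernoulli pairs) cannot produce the claimed $\sqrt{B/\arms}$ scaling, and you in fact notice the mismatch yourself but never resolve it. With an edgeless DAG and $\Theta(1)$ costs, the per-round cost is $\Theta(1)$, so the random horizon $\rd\tau_B$ is $\Theta(B)$, and a two-point change-of-measure argument yields $\Omega(\sqrt{B})$, not $\Omega(\sqrt{B/\arms})$. Your explicit claim that ``$\costm_{\min}$ and $\J^\star$ are indeed $\Theta(1)$ constants independent of $\arms, B$'' is exactly what prevents both halves of the theorem from going through: on the lower side it gives the wrong rate, and on the upper side plugging $\J^\star=\Theta(1)$, $T_B=\Theta(B)$ into Theorem~\ref{banditregret} yields $\cO(\arms^{1.5}\sqrt{B\log B})$, not $\cO(\sqrt{B\log(B/\arms)})$.

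The paper's construction is quite different and uses the precedence structure in an essential way: the DAG is two disjoint directed paths of length $\arms/2$ each (Figure~\ref{2paths}), all costs are deterministically $1$, and the hider sits only at the two leaves $a_{\arms/2}$, $b_{\arms/2}$ with probabilities $\tfrac12\pm\varepsilon$. Because reaching either leaf forces you to pay $\arms/2$, every informative round costs $\Theta(\arms)$; hence the effective horizon is $\Theta(B/\arms)$, $\costm_{\min}=\arms/2$, and $\J^\star=\Theta(\arms)$. After arguing that any optimal policy may be restricted to the two searches $\ba\bb$ and $\bb\ba$, the problem collapses to a two-armed bandit with horizon $\ge B/\arms$ and gap $\Delta\ge 4\varepsilon/3$; the high-probability Pinsker (Bretagnolle--Huber) inequality then gives $\max_i \RB_{i,B}(\pi)\ge -4+\Theta(1)/\varepsilon$ after optimizing $\varepsilon\asymp\sqrt{\arms/B}$, yielding $\tfrac{1}{28}\sqrt{B/\arms}$. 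For the upper bound, the same scaling ($T_B=\Theta(B/\arms)$, $\J^\star=\Theta(\arms)$) makes the factor $\sqrt{\arms}(1+\arms/\J^\star)\sqrt{T_B\log T_B}$ in Theorem~\ref{banditregret} collapse to $\cO(\sqrt{B\log(B/\arms)})$. The missing idea in your attempt is precisely this: forcing $\Theta(\arms)$ cost per round via the path structure is what puts $\arms$ in the denominator.
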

For the proof, we consider a DAG 
 composed of two disjoint paths (Figure~\ref{2paths}), with all costs deterministically set to $1$ and with the hider located either at $a_{\arms/2}$ or $b_{\arms/2}$. This information is given to the agent. We then reduced this setting to a two-arm bandit over at least $B/\arms$ rounds. The complete proof is in Appendix~\ref{app:lower}. 
\begin{restatable}{ }{restaM}\label{paths}
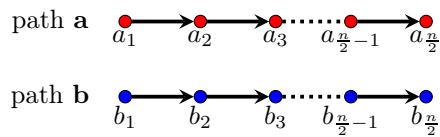
\begin{figure}[H]
\centering
\begin{tikzpicture}
  \node[draw, shape=circle, fill=blue, scale=0.5] (m1) at (0,0) {};
   \node[ below] at (m1) {$b_1$};
  \node[draw, shape=circle, fill=blue, scale=0.5] (m2) at (1,0) {};
  \node[ below] at (m2) {$b_2$};
  \node[draw, shape=circle, fill=blue, scale=0.5] (m3) at (2,0) {};
  \node[ below] at (m3) {$b_3$};
  \node[draw, shape=circle, fill=blue, scale=0.5] (m4) at (3,0) {};
  \node[ below] at (m4) {$b_{\frac{\arms}{2}-1}$};
  \node[draw, shape=circle, fill=blue, scale=0.5] (m5) at (4,0) {};
  \node[ below] at (m5) {$b_{\frac{\arms}{2}}$};
  \node[draw, shape=circle, fill=red, scale=0.5] (n1) at (0,1) {};
    \node[ below] at (n1) {$a_1$};
  \node[draw, shape=circle, fill=red, scale=0.5] (n2) at (1,1) {};
  \node[ below] at (n2) {$a_2$};
  \node[draw, shape=circle, fill=red, scale=0.5] (n3) at (2,1) {};
  \node[ below] at (n3) {$a_3$};
  \node[draw, shape=circle, fill=red, scale=0.5] (n4) at (3,1) {};
  \node[ below] at (n4) {$a_{\frac{\arms}{2}-1}$};
  \node[draw, shape=circle, fill=red, scale=0.5] (n5) at (4,1) {};
  \node[ below] at (n5) {$a_{\frac{\arms}{2}}$};
  \node[   scale=1] (n6) at (-1,1) {path $\ba$};
    \node[   scale=1] (m6) at (-1,0) {path $\bb$};

    \draw [->, line width=.5mm,>=stealth, black] (n1) -- (n2);
  \draw [->, line width=.5mm,>=stealth, black] (n2) -- (n3);
  \draw [dotted, line width=.5mm,>=stealth, black] (n3) -- (n4);
  \draw [->, line width=.5mm,>=stealth, black] (n4) -- (n5);

  \draw [->, line width=.5mm,>=stealth, black] (m1) -- (m2);
  \draw [->, line width=.5mm,>=stealth, black] (m2) -- (m3);
  \draw [dotted, line width=.5mm,>=stealth, black] (m3) -- (m4);
  \draw [->, line width=.5mm,>=stealth, black] (m4) -- (m5);
\end{tikzpicture}
\caption{The DAG considered in Theorem~\ref{lower}. 
}
\label{2paths}
\end{figure}
\end{restatable}
Notice that bounds provided in Theorem~\ref{lower} \emph{decrease} with $\arms$. This is because, in the  \SnS problem, the increasing dependence on $\arms$ is counterbalanced by the fact that the number of rounds is of order $B/ \arms$, and that $\J^\star$ is of order $\arms$. 
\section{Experiment}
\label{s:ex}
In this section, we present an experiment for \SnS.
We compare our \CUCBV with three other online algorithms, which are same as \CUCBV except for the estimator $\hidervectorm_t$ to be plugged in $\oracle$. We give corresponding definitions of $\hidervectorm_t$ in Table~\ref{table:exp_algos}, where we take $\zeta\triangleq1.2$, and where \[\kl{p}{q}\triangleq p\log\pa{\frac{p}{q}} + (1-p)\log\pa{\frac{1-p}{1-q}}\] is
the Kullback-Leibler divergence between two Bernoulli distributions of parameters $p,q\in [0,1]$ respectively.
\begin{table}[t]
\vspace{-0.1in}
\caption{Comparison algorithms in the experiment.}
\vspace{0.1in}
\centering
\label{table:exp_algos}
\begin{tabular}{|c|c|}
\hline
Algorithm & Definition of $\hiderm_{i,t}$ \\
\hline
\CUCB & \tableq{\min\sset{ \meanw{i}{t-1} 
+\sqrt{0.5\zeta\log t \over \counter{i}{t-1}{\hidervectorm}}\CommaBin 1}}
\\
\hline
\CUCBKL & \tabtab{The unique solution $x$ to\\\tableq{\counter{i}{t-1}{\hidervectorm}\kl{\meanw{i}{t-1}}{x}=\zeta\log t}  \\ such that $x\in [\meanw{i}{t-1},1]$}
\\
\hline
\tabtab{\textsc{Thompson}\\\textsc{Sampling}} &
\tabtab{An independent sample from\\\tableq{\text{Beta}\pa{\alpha,\counter{i}{t-1}{\hidervectorm}-\alpha}},\\ where  \tableq{\alpha=\counter{i}{t-1}{\hidervectorm}\meanw{i}{t-1}}}
\\ \hline
\end{tabular}
\end{table}
We run simulations
for all the algorithms with $\arms=100$ and without precedence constraints, i.e., when the DAG is an edgeless graph. Notice that in this case, a search can be any ordered subset of arms (thus, the set of possible searches is of cardinality~$\sum_{k=0}^{\arms}{\arms !}/{k!}\leq e n!$). This restriction does not remove complexity from the online problem, but rather from the offline one, so even in that case, the online problem is challenging.
We take parameter $\hidervector$ defined as
\[
\left\{
    \begin{aligned}
        \hider_i&=\frac{1}{2^i}\quad \text{for }i\in [m-1] \\
        \hider_{m}&=\pa{\frac{1}{2}+\varepsilon}\hider_{m-1}\\
        \hider_i&=\pa{\frac{1}{2}-\varepsilon}\frac{\hider_{m-1}}{\arms-m}\quad \text{for }i\in \sset{m+1,\dots,\arms},
    \end{aligned}
\right.\]
 where we chose $m\triangleq40$. 
 For $\varepsilon\in (0,1/2)$, one can see that $\search^\star=\sset{[m]}$. Intuitively, 
 $\hider_i$ models the proportion of users answering $i$ to some fixed request:\footnote{For recommender systems or search engines, $\hider_i$ can thus be seen as the probability that an user aims to find $i$ when entering the request.} When $\varepsilon=0$,  half of the population answers $1$, a quarter answers~$2$, \dots, until $m$, and remaining users answer uniformly on remaining arms $\sset{m+1,\dots,\arms}.$  
We chose $\varepsilon=0.1$, $\cost_i=1/2$ for all $i\in [\arms]$ and take $\rcost_i\sim\Bernoulli(\cost_i)$. 
In Figure~\ref{exp:logscale}, for each algorithm considered, we plot the quantity 
\[\frac{B}{\J^\star}-\sum_{t=1}^{\rd{\tau}_B-1} \be_{\rbs_t[\rhidervector_t]}\transpose \rhidervector_t,\] 
with respect to budget $B$, averaged over $100$ simulations.
As shown in Proposition~\ref{prop:approx}, the  curves obtained this way provide good approximations
to the true regret
curves. 
We notice that \CUCBKL, \CUCBV, and \ThompsonSampling are significantly better than \CUCB, since the latter explores too much. In addition, the regret curves of \CUCBKL, \CUCBV and \ThompsonSampling are quite similar. In particular, their asymptotic slopes seem equal, which hints that regret rates are comparable on this instance.

%
  
\begin{figure}[t]
\vspace{-0.5cm}
\centering
\resizebox{\columnwidth}{!}{\input{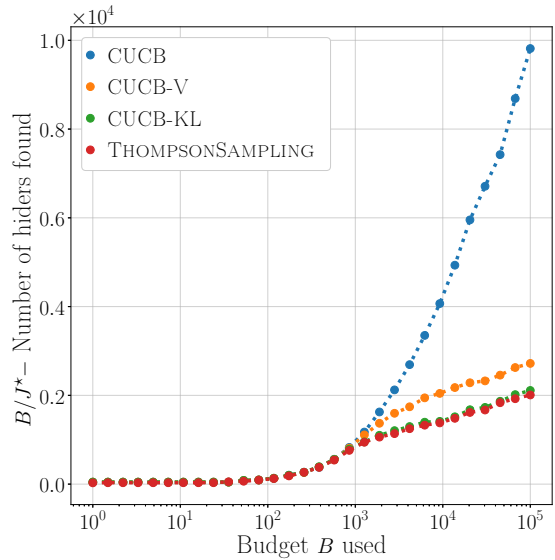}}
\vspace{-0.5cm}
\caption{Cumulative regret for \SnS, with $B$ up to $10^5$, averaged over $100$ independent simulations.}
\label{exp:logscale}
\end{figure}
  
\section{Conclusion and future work}
We presented \SnS problem and provided a stationary offline solution. We gave theoretical guarantees on its optimality and proved that it is computationally efficient. 
We also considered the learning extension of the problem
where the distribution of the hider and the cost are not known. We provided \CUCBV, an upper-confidence bound approach,
tailored to our case and gave expected regret guarantees with respect to the optimal policy.

We now discuss several possible extensions of our work. We could consider several hiders rather than just one.
Another would be to explore the Thomson sampling \citep{chapelle2011empirical,agrawal2011analysis,Komiyama2015,Wang2018}  further in the
learning case by considering a Dirichlet prior on the \emph{whole} arm set. The Dirichlet seems 
appropriate  
because a sample~$\hidervectorm$ from this prior is in the simplex. 
The main drawback however is the difficulty of \emph{efficiently} updating such prior 
to get the posterior, because in the case when the hider is not found, the one-hot vector is not received entirely.
\paragraph{Acknowledgements}
Vianney Perchet has benefited from the support of the ANR (grant n.ANR-13-JS01-0004-01), of the FMJH Program Gaspard Monge in optimization and operations research (supported in part by EDF), from the Labex LMH and from the CNRS through the PEPS program. The research presented was also supported by European CHIST-ERA project DELTA, French Ministry of
Higher Education and Research, Nord-Pas-de-Calais Regional Council,
Inria and Otto-von-Guericke-Universit\"at Magdeburg associated-team north-european project Allocate, and French National Research Agency projects ExTra-Learn (grant n.ANR-14-CE24-0010-01) and BoB (grant n.ANR-16-CE23-0003), 
FMJH Program PGMO with the support to this program from Criteo.
\bibliography{library,library-all,example}

\begin{thebibliography}{}

\bibitem[Agrawal and Goyal, 2012]{agrawal2011analysis}
Agrawal, S. and Goyal, N. (2012).
\newblock {Analysis of Thompson sampling for the multi-armed bandit problem}.
\newblock In {\em Conference on Learning Theory}.

\bibitem[Alpern et~al., 2013]{alpern2013search}
Alpern, S., Fokkink, R., Gasieniec, L., Lindelauf, R., and Subrahmanian, V.~S.
  (2013).
\newblock {\em {Search theory}}.
\newblock Springer.

\bibitem[Alpern and Gal, 2006]{alpern2006theory}
Alpern, S. and Gal, S. (2006).
\newblock {\em {The theory of search games and rendezvous}}.
\newblock Springer.

\bibitem[Alpern and Lidbetter, 2013]{alpern2013mining}
Alpern, S. and Lidbetter, T. (2013).
\newblock {Mining coal or finding terrorists: The expanding search paradigm}.
\newblock {\em Operations Research}, 61(2):265--279.

\bibitem[Amb{\"{u}}hl and Mastrolilli, 2009]{Ambuhl2009}
Amb{\"{u}}hl, C. and Mastrolilli, M. (2009).
\newblock {Single machine precedence constrained scheduling is a vertex cover
  problem}.
\newblock {\em Algorithmica}, 53(4):488--503.

\bibitem[Amb{\"{u}}hl et~al., 2011]{Ambuhl2011}
Amb{\"{u}}hl, C., Mastrolilli, M., Mutsanas, N., and Svensson, O. (2011).
\newblock {On the approximability of single-machine scheduling with precedence
  constraints}.
\newblock {\em Mathematics of Operations Research}, 36(4):653--669.

\bibitem[Audibert et~al., 2009]{audibert2009}
Audibert, J.~Y., Munos, R., and Szepesv{\'{a}}ri, C. (2009).
\newblock {Exploration-exploitation tradeoff using variance estimates in
  multi-armed bandits}.
\newblock {\em Theoretical Computer Science}, 410(19):1876--1902.

\bibitem[Azuma, 1967]{azuma1967weighted}
Azuma, K. (1967).
\newblock {Weighted sums of certain dependent random variables}.
\newblock {\em Tohoku Mathematical Journal}, 19(3):357--367.

\bibitem[Badanidiyuru et~al., 2013]{badanidiyuru2013bandits}
Badanidiyuru, A., Kleinberg, R., and Slivkins, A. (2013).
\newblock {Bandits with knapsacks}.
\newblock In {\em Proceedings - Annual IEEE Symposium on Foundations of
  Computer Science, FOCS}, pages 207--216.

\bibitem[Bubeck et~al., 2013]{bubeck2013optimal}
Bubeck, S., Ernst, D., and Garivier, A. (2013).
\newblock {Optimal discovery with probabilistic expert advice: finite time
  analysis and macroscopic optimality}.
\newblock {\em Journal of Machine Learning Research}, 14:601--623.

\bibitem[Cesa-Bianchi and Lugosi, 2006]{cesa-bianchi2006prediction}
Cesa-Bianchi, N. and Lugosi, G. (2006).
\newblock {\em {Prediction, learning, and games}}.
\newblock Cambridge University Press.

\bibitem[Cesa-Bianchi and Lugosi, 2012]{cesa-bianchi2012combinatorial}
Cesa-Bianchi, N. and Lugosi, G. (2012).
\newblock {Combinatorial bandits}.
\newblock In {\em Journal of Computer and System Sciences}, volume~78, pages
  1404--1422.

\bibitem[Chapelle and Li, 2011]{chapelle2011empirical}
Chapelle, O. and Li, L. (2011).
\newblock {An empirical evaluation of Thompson sampling}.
\newblock In {\em Neural Information Processing Systems}.

\bibitem[Chen et~al., 2016]{chen2015combinatorial}
Chen, W., Wang, Y., and Yuan, Y. (2016).
\newblock {Combinatorial multi-armed bandit and its extension to
  probabilistically triggered arms}.
\newblock {\em Journal of Machine Learning Research}, 17.

\bibitem[Combes et~al., 2015]{combes2015combinatorial}
Combes, R., Talebi, M.~S., Prouti{\`{e}}re, A., and Lelarge, M. (2015).
\newblock {Combinatorial bandits revisited}.
\newblock In {\em Neural Information Processing Systems}.

\bibitem[Degenne and Perchet, 2016]{DBLP:journals/corr/DegenneP16}
Degenne, R. and Perchet, V. (2016).
\newblock Combinatorial semi-bandit with known covariance.
\newblock {\em CoRR}, abs/1612.01859.

\bibitem[Ding et~al., 2013]{ding2013multi-armed}
Ding, W., Qin, T., Zhang, X.-d., and Liu, T.-y. (2013).
\newblock {Multi-Armed Bandit with Budget Constraint and Variable Costs}.
\newblock In {\em Proceedings of the Twenty-Seventh AAAI Conference on
  Artificial Intelligence}.

\bibitem[Evans and Bishop, 2013]{evans2013static}
Evans, T. P.~O. and Bishop, S.~R. (2013).
\newblock {Static search games played over graphs and general metric spaces}.
\newblock {\em European Journal of Operational Research}, 231(3):667--689.

\bibitem[Flajolet and Jaillet, 2015]{flajolet2015}
Flajolet, A. and Jaillet, P. (2015).
\newblock {Logarithmic regret bounds for Bandits with Knapsacks}.
\newblock {\em arXiv preprint}.

\bibitem[Fokkink et~al., 2016]{Fokkink2016}
Fokkink, R., Lidbetter, T., and V{\'{e}}gh, L.~A. (2016).
\newblock {On submodular search and machine scheduling}.
\newblock {\em arXiv preprint}.

\bibitem[Gai et~al., 2012]{gai2012combinatorial}
Gai, Y., Krishnamachari, B., and Jain, R. (2012).
\newblock {Combinatorial network optimization with unknown variables:
  Multi-armed bandits with linear rewards and individual observations}.
\newblock {\em Transactions on Networking}, 20(5):1466--1478.

\bibitem[Gal, 2001]{gal2001optimality}
Gal, S. (2001).
\newblock {On the optimality of a simple strategy for searching graphs}.
\newblock {\em International Journal of Game Theory}, 29(4):533--542.

\bibitem[Gopalan et~al., 2014]{gopalan2014thompson}
Gopalan, A., Mannor, S., and Mansour, Y. (2014).
\newblock {Thompson sampling for complex online problems}.
\newblock In {\em International Conference on Machine Learning}.

\bibitem[Graham et~al., 1979]{Graham1979}
Graham, R.~L., Lawler, E.~L., Lenstra, J.~K., and Kan, A.~H. (1979).
\newblock {Optimization and approximation in deterministic sequencing and
  scheduling: A survey}.
\newblock {\em Annals of Discrete Mathematics}, 5(C):287--326.

\bibitem[Gupta and Nagarajan, 2013]{Gupta2013}
Gupta, A. and Nagarajan, V. (2013).
\newblock {A stochastic probing problem with applications}.
\newblock In {\em Integer Programming and Combinatorial Optimization}, volume
  7801, pages 205--216.

\bibitem[Heckerman et~al., 1995]{Heckerman1995}
Heckerman, D., Breese, J.~S., and Rommelse, K. (1995).
\newblock {Decision-theoretic troubleshooting}.
\newblock {\em Communications of the ACM}, 38(3):49--57.

\bibitem[Hoeffding, 1963]{hoeffding1963probability}
Hoeffding, W. (1963).
\newblock {Probability inequalities for sums of bounded random variables}.
\newblock {\em Journal of the American Statistical Association}, 58:13--30.

\bibitem[Hohzaki, 2016]{hohzaki2016search}
Hohzaki, R. (2016).
\newblock {Search games: Literature and survey}.
\newblock {\em Journal of the Operations Research Society of Japan},
  59(1):1--34.

\bibitem[Jensen et~al., 2001]{Jensen2001}
Jensen, F.~V., Kjaerulff, U., Kristiansen, B., Langseth, H., Skaanning, C.,
  Vomlel, J., and Vomlelova, M. (2001).
\newblock {The SACSO methodology for troubleshooting complex systems}.
\newblock {\em Artificial Intelligence for Engineering Design, Analysis and
  Manufacturing}, 15(4):321--333.

\bibitem[Kikuta and Ruckle, 1994]{kikuta1994initial}
Kikuta, K. and Ruckle, W.~H. (1994).
\newblock {Initial point search on weighted trees}.
\newblock {\em Naval Research Logistics}, 41(6):821--831.

\bibitem[Komiyama et~al., 2015]{Komiyama2015}
Komiyama, J., Honda, J., and Nakagawa, H. (2015).
\newblock {Optimal regret analysis of Thompson sampling in stochastic
  multi-armed bandit problem with multiple plays}.
\newblock {\em International Conference on Machine Learning}.

\bibitem[Kveton et~al., 2015]{kveton2015tight}
Kveton, B., Wen, Z., Ashkan, A., and Szepesvari, C. (2015).
\newblock {Tight regret bounds for stochastic combinatorial semi-bandits}.
\newblock In {\em International Conference on Artificial Intelligence and
  Statistics}.

\bibitem[Lattimore and Szepesv{\'{a}}ri, 2019]{lattimore2019bandit}
Lattimore, T. and Szepesv{\'{a}}ri, C. (2019).
\newblock {\em {Bandit algorithms}}.

\bibitem[Lawler, 1978]{Lawler1978}
Lawler, E.~L. (1978).
\newblock {Sequencing jobs to minimize total weighted completion time subject
  to precedence constraints}.
\newblock {\em Annals of Discrete Mathematics}, 2(C):75--90.

\bibitem[Lenstra and {Rinnooy Kan}, 1978]{Lenstra1978}
Lenstra, J.~K. and {Rinnooy Kan}, A. H.~G. (1978).
\newblock {Complexity of scheduling under precedence constraints}.
\newblock {\em Operations Research}, 26(1):22--35.

\bibitem[Li et~al., 2010]{li2010contextual}
Li, L., Chu, W., Langford, J., and Schapire, R.~E. (2010).
\newblock {A contextual-bandit approach to personalized news article
  recommendation}.
\newblock {\em International World Wide Web Conference}.

\bibitem[L{\'{i}}n, 2015]{lin2015}
L{\'{i}}n, V. (2015).
\newblock {Scheduling results applicable to decision-theoretic
  troubleshooting}.
\newblock {\em International Journal of Approximate Reasoning}, 56(PA):87--107.

\bibitem[Mohri and Munoz, 2014]{mohri2014optimal}
Mohri, M. and Munoz, A. (2014).
\newblock {Optimal regret minimization in posted-price auctions with strategic
  buyers}.
\newblock In {\em Neural Information Processing Systems}.

\bibitem[Prot and Bellenguez-Morineau, 2018]{Prot2017}
Prot, D. and Bellenguez-Morineau, O. (2018).
\newblock {A survey on how the structure of precedence constraints may change
  the complexity class of scheduling problems}.
\newblock {\em Journal of Scheduling}, 21(1):3--16.

\bibitem[Sankararaman and Slivkins, 2017]{Sankararaman2017}
Sankararaman, K.~A. and Slivkins, A. (2017).
\newblock {Combinatorial Semi-Bandits with Knapsacks}.

\bibitem[Sidney, 1975]{Sidney1975}
Sidney, J.~B. (1975).
\newblock {Decomposition Algorithms for Single-Machine Sequencing with
  Precedence Relations and Deferral Costs}.
\newblock {\em Operations Research}, 23(2):283--298.

\bibitem[Smith, 1956]{smith1956various}
Smith, W.~E. (1956).
\newblock {Various optimizers for single-stage production}.
\newblock {\em Naval Research Logistics}, 3(1-2):59--66.

\bibitem[Stone, 1976]{stone1976theory}
Stone, L.~D. (1976).
\newblock {\em {Theory of optimal search}}.
\newblock Elsevier.

\bibitem[Sutton and Barto, 1998]{sutton1998reinforcement}
Sutton, R. and Barto, A. (1998).
\newblock {\em {Reinforcement Learning: An Introduction}}.
\newblock MIT Press, Cambridge, MA.

\bibitem[Tran-Thanh et~al., 2012]{tran-thang2012knapsack}
Tran-Thanh, L., Chapman, A.~C., Rogers, A., and Jennings, N.~R. (2012).
\newblock {Knapsack Based Optimal Policies for Budget-Limited Multi-Armed
  Bandits.}

\bibitem[Tran-Thanh et~al., 2014]{tran2014efficient}
Tran-Thanh, L., Stein, S., Rogers, A., and Jennings, N.~R. (2014).
\newblock {Efficient crowdsourcing of unknown experts using bounded multi-armed
  bandits}.
\newblock {\em Artificial Intelligence}, 214:89--111.

\bibitem[Tsybakov, 2009]{tsybakov2009introduction}
Tsybakov, A.~B. (2009).
\newblock {\em {Introduction to Nonparametric Estimation}}.
\newblock Springer Series in Statistics. Springer New York, New York, NY.

\bibitem[Valko, 2016]{valko2016bandits}
Valko, M. (2016).
\newblock {\em {Bandits on graphs and structures}}.
\newblock habilitation, {\'{E}}cole normale sup{\'{e}}rieure de Cachan.

\bibitem[Wang and Chen, 2017]{wang2017improving}
Wang, Q. and Chen, W. (2017).
\newblock {Improving regret bounds for combinatorial semi-bandits with
  probabilistically triggered arms and its applications}.
\newblock In {\em Neural Information Processing Systems}.

\bibitem[Wang and Chen, 2018]{Wang2018}
Wang, S. and Chen, W. (2018).
\newblock {Thompson Sampling for Combinatorial Semi-Bandits}.

\bibitem[Watanabe et~al., 2017]{article}
Watanabe, R., Komiyama, J., Nakamura, A., and Kudo, M. (2017).
\newblock Kl-ucb-based policy for budgeted multi-armed bandits with stochastic
  action costs.
\newblock E100.A:2470--2486.

\bibitem[Xia et~al., 2016a]{pmlr-v45-Xia15}
Xia, Y., Ding, W., Zhang, X.-D., Yu, N., and Qin, T. (2016a).
\newblock Budgeted bandit problems with continuous random costs.
\newblock In {\em Asian Conference on Machine Learning}.

\bibitem[Xia et~al., 2016b]{xia2016budgeted}
Xia, Y., Qin, T., Ma, W., Yu, N., and Liu, T.-Y. (2016b).
\newblock {Budgeted multi-armed bandits with multiple plays}.
\newblock In {\em International Joint Conference on Artificial Intelligence}.

\end{thebibliography}

\appendix
\onecolumn
\section{Proof of Theorem~\ref{thm:oracle}}
\label{app:sidney}

\setcounter{scratchcounter}{\value{theorem}}\setcounter{theorem}{\the\numexpr\getrefnumber{thm:oracle}-1}\restaA*\setcounter{theorem}{\the\numexpr\value{scratchcounter}}

Here, we might abbreviate $\obj{\J}{~\!\!\cdot~\!\!}{\hidervectorm,\costvectorm}^+$ into $\Jp$, and $\obj{\J}{~\!\!\cdot~\!\!}{\hidervectorm,\costvectorm}$ into $\J$, keeping in mind that our results will be valid for all $\pa{\hidervectorm,\costvectorm}\in \pa{\R_+^\arms}^2$.
To prove Theorem~\ref{thm:oracle} we first define the concept of \textsl{density}, well know in scheduling and search theory.
\begin{definition}[Density] The density is the function defined on $A\in\mathcal{P}([\arms])$ by $\rho(A)\triangleq{\be_A\transpose\hidervectorm/ \be_A\transpose\costvectorm }$, and $\rho\pa{\emptyset}=0$.
\end{definition}
\noindent
Density of $A \subset [\arms]$ can be understood as the quality/price ratio of that set of arms: the quality is the overall probability of finding the hider in it, while the price is the total cost to fully explore it. Without precedence constraint, the so-called
Smith's rule of ratio \citep{smith1956various} gives that $\bs$ minimizes $\obj{\D}{~\!\!\cdot~\!\!}{\hidervectorm,\costvectorm}$ over linear orders (i.e., permutations of $[\arms]$) if and only if\footnote{One can see that $\sum_{\{i,j\}\in I(\sigma),~i<j}\costm_{s_i}\costm_{s_j}\left(\rho(s_i)-\rho(s_j)\right)$ is the variation of $\D$ when swapping a linear order $\bs$ by a permutation $\sigma$, where $I(\sigma)$ the set of inversions in $\sigma$.} $\rho(s_1)\geq \dots \geq \rho(s_\arms)$.
\citet{Sidney1975} generalized this principle to any precedence constraint with the concept of Sidney decomposition. Recall that an initial set is the support of a search. 
\begin{definition}[Sidney decomposition]
 A Sidney decomposition  $\pa{A_1,A_2,\dots,A_k}$ is an ordered partition of $[\arms]$ such that for all $i\in[k]$, $A_i$ is an  initial set of maximum density in $\DAG\shrink{A_i\sqcup\dots\sqcup A_{k}}$.
\end{definition}

\noindent Notice that the Sidney decomposition defines a more refined poset on $[\arms]$, with the extra constraint that an element of $A_i$ must be processed before those of $A_j$ for $i <j$. 
Any $\DAG$-linear extension that is also a linear extension of this poset is said to be \emph{consistent} with the Sidney decomposition.
The following theorem was proved by \citet{Sidney1975}:

\begin{theorem}[\citealp{Sidney1975}]
 Every minimizer of $\obj{\D}{~\!\!\cdot~\!\!}{\hidervectorm,\costvectorm}$ over $\DAG$-linear extensions is consistent with some Sidney decomposition. Moreover, for every Sidney decomposition $(A_1,\dots,A_k)$, there is a minimizer of $\obj{\D}{~\!\!\cdot~\!\!}{\hidervectorm,\costvectorm}$ over $\DAG$-linear extensions that is consistent with $(A_1,\dots,A_k)$.
 \label{thm:sidney}
\end{theorem}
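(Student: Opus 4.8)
The plan is to recast the minimization of $\obj{\D}{\cdot}{\hidervectorm,\costvectorm}$ as a geometric problem. To a $\DAG$-linear extension $\bs=(s_1,\dots,s_\arms)$ I associate its chain of prefixes $\pi_i\triangleq\{s_1,\dots,s_i\}$, all initial sets ($\pi_0=\emptyset$, $\pi_\arms=[\arms]$), and the monotone ``staircase'' through the points $(\be_{\pi_i}\transpose\costvectorm,\be_{\pi_i}\transpose\hidervectorm)$. A routine summation by parts gives $\obj{\D}{\bs}{\hidervectorm,\costvectorm}=\be_{[\arms]}\transpose\hidervectorm\cdot\be_{[\arms]}\transpose\costvectorm-\sum_{i=1}^{\arms}\costm_{s_i}\,\be_{\pi_{i-1}}\transpose\hidervectorm$, and the subtracted term is exactly the area under that staircase. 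Since the leading term depends on $(\hidervectorm,\costvectorm)$ only, minimizing $\obj{\D}{\cdot}{\hidervectorm,\costvectorm}$ over $\DAG$-linear extensions amounts to choosing a maximal chain $\emptyset\subsetneq\pi_1\subsetneq\cdots\subsetneq\pi_\arms=[\arms]$ of initial sets maximizing the area under its staircase --- i.e.\ making $i\mapsto(\be_{\pi_i}\transpose\costvectorm,\be_{\pi_i}\transpose\hidervectorm)$ climb as high as possible, as early as possible. This makes it intuitively clear why densities and a greedy layered structure govern the answer, since every $\pi_i$ satisfies $\rho(\pi_i)\le\max_{S\text{ initial}}\rho(S)$.

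Everything then reduces to one lemma: if $A$ is an initial set of $\DAG$ of maximum density, then (a) some minimizer of $\obj{\D}{\cdot}{\hidervectorm,\costvectorm}$ over $\DAG$-linear extensions lists all of $A$ first, and (b) every such minimizer $\sigma$ has a prefix that is an initial set of maximum density. Granting (a) and (b), both halves of the theorem follow by induction on $\arms$: from (b), peel a maximum-density prefix $A_1$ off an optimal $\sigma$; by the telescoping form of $\obj{\D}{\cdot}{\hidervectorm,\costvectorm}$ the restriction of $\sigma$ to $[\arms]\setminus A_1$ is optimal for $\DAG$ with $A_1$ removed, so recursion produces blocks $A_1,A_2,\dots,A_k$ forming a Sidney decomposition with which $\sigma$ is consistent; conversely, given a Sidney decomposition $(A_1,\dots,A_k)$, (a) lets us build an optimal extension starting with $A_1$, then recurse inside $\DAG\shrink{A_1}$ and, after $A_1$, on $\DAG$ with $A_1$ deleted to handle $A_2,\dots,A_k$, yielding an optimal extension consistent with $(A_1,\dots,A_k)$.

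The real work is the core lemma, via adjacent-block interchanges: swapping two consecutive blocks $B_1,B_2$ changes $\obj{\D}{\cdot}{\hidervectorm,\costvectorm}$ by $\be_{B_1}\transpose\hidervectorm\,\be_{B_2}\transpose\costvectorm-\be_{B_2}\transpose\hidervectorm\,\be_{B_1}\transpose\costvectorm$, which is non-positive exactly when $\rho(B_1)\le\rho(B_2)$, and such a swap is precedence-feasible whenever no arc runs from $B_1$ into $B_2$ --- in particular whenever $B_2$ sits inside a down-set and $B_1$ outside it. The obstacle is that a single such swap, applied to the first block that ``interrupts'' $A$ in an optimal $\sigma$, has the \emph{wrong} sign under optimality; following Sidney one must instead work with larger merged chunks and chain several interchanges, using the mediant property of $\rho$ (the density of a union lies between the densities of the parts) together with the fact that every prefix of an optimal extension is an initial set of density at most $\rho(A)$, to exhibit a cost-non-increasing rearrangement that strictly increases the $A$-content of the initial segment; iterating brings $A$ to the front, and the ``dual'' exchange argument gives (b). The degenerate cases are mild: maximum-density initial sets need not be unique (hence the asymmetric ``some''/``for every'' in the statement), empty blocks are harmless, and a block of zero total cost can be merged into the preceding block without changing $\obj{\D}{\cdot}{\hidervectorm,\costvectorm}$, which disposes of the conventions for $\rho$ with a vanishing denominator.
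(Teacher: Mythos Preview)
The paper does not prove this statement; it is quoted from \citet{Sidney1975} and used as a black box, so there is no in-paper argument to compare against.

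Your plan is the standard route to Sidney's theorem and is sound: the staircase/area reformulation, the adjacent-block interchange identity, the reduction to the two-part core lemma plus induction on $\arms$, and the recursion are all correct. Two sharpenings are worth recording. First, for part (a) you need neither iteration nor optimality of $\sigma$: writing $\sigma$ as alternating $A$-blocks $A_0,A_1,\dots,A_m$ and non-$A$ blocks $B_1,\dots,B_m$, the sets $A_0\cup\cdots\cup A_{i-1}$ and $A\cup B_1\cup\cdots\cup B_i$ are both initial, so by mediant the tail $A_i\cup\cdots\cup A_m$ has density at least $\rho(A)$ and the head $B_1\cup\cdots\cup B_i$ has density at most $\rho(A)$; one Abel summation on the interchange sum then shows that the one-shot rearrangement moving all of $A$ to the front never increases $\obj{\D}{\cdot}{\hidervectorm,\costvectorm}$, so the ``wrong sign'' obstacle does not actually arise. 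Second, your unspecified ``dual exchange'' for (b) is just the equality case of this same computation: when $\sigma$ is optimal the inequality is tight, which (away from the zero-cost degeneracies you already flag) forces the leading block of $\sigma$ --- $A_0$ if nonempty, otherwise $B_1$ --- to have density exactly $\rho(A)$, hence to be a maximum-density initial prefix.
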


\noindent
Notice that Theorem~\ref{thm:sidney} does not provide a full characterization of minimizers of $\obj{\D}{~\!\!\cdot~\!\!}{\hidervectorm,\costvectorm}$ over $\DAG$-linear extensions, but only a \emph{necessary} condition. Nothing is stated about how to chose the ordering inside each $A_i$'s, and this highly depends on the structure of $\DAG$ \citep{Lawler1978,Ambuhl2009,Ambuhl2011}. 
We are now ready to prove Theorem~\ref{thm:oracle}, thanks to Lemma~\ref{lem:subsidney}, of which the proof is given in Appendix~\ref{app:subsidney}.

\begin{restatable}{lemma}{probaB}
\label{lem:subsidney}
 For any Sidney decomposition $(A_1,\dots,A_k)$, there exists $i\leq k$ and a search with support $A_1\sqcup\dots\sqcup A_i$ that minimizes $\Jp$.
\end{restatable}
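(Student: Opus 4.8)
The plan is to reduce the statement to a purely combinatorial one about \emph{supports}, and then to push an optimal support onto a block boundary of the Sidney decomposition by an exchange argument driven by the maximality of Sidney densities. For a search $\bs$ the quantities $\be_\bs\transpose\hidervectorm$ and $\be_\bs\transpose\costvectorm$ depend only on the support of $\bs$, so the identity $\obj{\J}{\bs}{\hidervectorm,\costvectorm}=(\obj{\D}{\bs}{\hidervectorm,\costvectorm}+(1-\be_\bs\transpose\hidervectorm)\be_\bs\transpose\costvectorm)/\be_\bs\transpose\hidervectorm$ shows that, among the searches with a fixed support $S$, both $\obj{\J}{\cdot}{\hidervectorm,\costvectorm}$ and $\Jp$ are minimized by the orderings that minimize $\obj{\D}{\cdot}{\hidervectorm,\costvectorm}$ over $\DAG\shrink{S}$-linear extensions. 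Writing $\Jp(S)$ for this common minimal value (with $\Jp(\emptyset)=+\infty$ and the usual conventions when $\be_S\transpose\hidervectorm=0$), it suffices to prove that for every initial set $S$ there is $i\le k$ with $\Jp(A_1\sqcup\cdots\sqcup A_i)\le\Jp(S)$. Granting this, pick $S$ attaining $\min_S\Jp(S)$ and the corresponding $i$, and take a minimizer $\bs^o$ of $\obj{\D}{\cdot}{\hidervectorm,\costvectorm}$ over $\DAG$-linear extensions consistent with $(A_1,\dots,A_k)$ (Theorem~\ref{thm:sidney}); by consistency its prefix of length $|A_1\sqcup\cdots\sqcup A_i|$ has support $A_1\sqcup\cdots\sqcup A_i$, and since reordering a prefix of a $\obj{\D}{\cdot}{\hidervectorm,\costvectorm}$-optimal linear extension leaves the suffix's contribution unchanged, this prefix is $\obj{\D}{\cdot}{\hidervectorm,\costvectorm}$-optimal on its support, hence a search realizing $\Jp(A_1\sqcup\cdots\sqcup A_i)=\min_S\Jp(S)$.

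For the combinatorial claim I first record the set-level facts needed: each $C_i\triangleq A_1\sqcup\cdots\sqcup A_i$ is initial in $\DAG$; if $S$ is initial then $S\cap A_1$ is initial in $\DAG$ and $S\setminus A_1$ is initial in $\DAG\shrink{A_2\sqcup\cdots\sqcup A_k}$ (and likewise for $C_i$ in place of $A_1$); the block densities satisfy $\rho(A_1)\ge\rho(A_2)\ge\cdots\ge\rho(A_k)$ (if $\rho(A_{j+1})>\rho(A_j)$ then the initial set $A_j\sqcup A_{j+1}$ of $\DAG\shrink{A_j\sqcup\cdots\sqcup A_k}$ would beat $\rho(A_j)$, a mediant contradiction); and a proper initial subset $T$ of a block $B$ has $\rho(T)\le\rho(B)$ while $B\setminus T$ has density $\ge\rho(B)$ in $\DAG\shrink{(B\sqcup\cdots)\setminus T}$. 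Now take an initial $S$ that is not a block union, let $j$ be maximal with $C_j\subseteq S$, write $S=C_j\sqcup R$ with $R\ne\emptyset$ and $B\triangleq A_{j+1}\not\subseteq R$. The core step, the \emph{exchange inequality}, is
\[
\min\bigl(\Jp(C_j\sqcup(R\setminus B)),\ \Jp(C_j\sqcup B\sqcup(R\setminus B))\bigr)\ \le\ \Jp(S),
\]
i.e.\ at the first incompletely covered block one may either drop the partial block or complete it without increasing $\Jp$. Applying it (once to peel off $R\cap B$, once more inside $B$ if necessary) either reaches $C_j$ — a block union, and we are done — or reaches a set containing $C_{j+1}$ whose maximal fully covered index has strictly increased; induction on $k-j$ then finishes, the base case $j=k$ being $S=[\arms]=C_k$.

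To prove the displayed inequality one expands $\Jp$ via the formula above. Because $B$ has maximal density in $\DAG\shrink{B\sqcup\cdots\sqcup A_k}$, a $\obj{\D}{\cdot}{\hidervectorm,\costvectorm}$-optimal ordering of $C_j\sqcup B\sqcup(R\setminus B)$, and of $C_j\sqcup R$, processes $C_j$, then $B$ (resp.\ $R\cap B$), then the remainder; this gives closed expressions for $\obj{\D}{\cdot}{\hidervectorm,\costvectorm}$, hence for $\obj{\J}{\cdot}{\hidervectorm,\costvectorm}$, of the three supports that differ only through $\be_{R\cap B}\transpose\hidervectorm$, $\be_{R\cap B}\transpose\costvectorm$, $\be_{B\setminus R}\transpose\hidervectorm$, $\be_{B\setminus R}\transpose\costvectorm$ and the internal $\obj{\D}{\cdot}{\hidervectorm,\costvectorm}$-values of $R\cap B$ and $B\setminus R$. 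A Smith-rule mediant comparison, using $\rho(R\cap B)\le\rho(B)\le\rho(B\setminus R)$, then shows that whenever completing the block strictly raises the ratio, discarding it strictly lowers it, and conversely, so the minimum of the two boundary values is at most the interior one. I expect the real difficulty to lie in carrying this comparison out cleanly for the truncated objective $\Jp=\obj{\J}{\cdot}{\hidervectorm,\costvectorm}^+$ rather than for $\obj{\J}{\cdot}{\hidervectorm,\costvectorm}$ itself — one must split according to whether $\be_S\transpose\hidervectorm=0$, $\obj{\J}{\cdot}{\hidervectorm,\costvectorm}<0$, or $\obj{\J}{\cdot}{\hidervectorm,\costvectorm}\ge 0$ — and in handling ties among densities, where several Sidney decompositions coexist and consistency with the chosen one must be invoked with care. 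Everything else is bookkeeping with the $\obj{\D}{\cdot}{\hidervectorm,\costvectorm}$ and density identities already underlying the proof of Theorem~\ref{thm:sidney}.
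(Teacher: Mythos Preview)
Your overall strategy---push the support of a $\Jp$-minimizer onto a block boundary by an exchange argument driven by Sidney maximality---is exactly the paper's. The gap is in the exchange step itself.

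Your displayed inequality compares $\Jp(S)$ with $\Jp(C_j\sqcup(R\setminus B))$ and $\Jp(C_{j+1}\sqcup(R\setminus B))$: you either drop only $R\cap B$ or complete block $B$. But $C_j\sqcup(R\setminus B)$ need not be an initial set. If $\DAG$ has an arc from some $a\in R\cap A_{j+1}$ into some $b\in R\cap A_m$ with $m>j+1$, removing $a$ while keeping $b$ violates precedence; $\Jp$ is then undefined on that side and your induction stalls (it never reaches $C_j$ unless $R\setminus B=\emptyset$). Relatedly, the ordering claim you invoke to prove the inequality---that an optimal linear extension of $C_j\sqcup R$ processes $C_j$, then $R\cap B$, then $R\setminus B$---is false in general: $R\cap A_{j+1}$ is initial in $R$ but need not be a maximum-density initial subset of $R$ (take $A_{j+1}=\{a_1\to a_2\}$ with $\rho(\{a_1\})=0$, $A_{j+2}=\{b\}$ with large $\rho(\{b\})$, no cross arc, and $R=\{a_1,b\}$).

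The fix is to make the ``drop'' alternative be \emph{all} of $R$, not just $R\cap B$. One shows
\[
\Jp(S)\ \ge\ \min\bigl(\Jp(C_j),\ \Jp(C_j\sqcup R\sqcup(B\setminus R))\bigr),
\]
which is precisely Proposition~\ref{prop:support} with $\mathbf{x}$ on $C_j$, $\mathbf{y}$ on $R$, $\mathbf{z}$ on $B\setminus R$. The density hypothesis required is $\rho(B\setminus R)\ge\rho(R)$ (not merely $\ge\rho(R\cap B)$), and it holds because $R$ is initial in $\DAG\shrink{A_{j+1}\sqcup\cdots\sqcup A_k}$, so $\rho(R)\le\rho(A_{j+1})$, while $\rho(A_{j+1}\cap R)\le\rho(A_{j+1})$ gives $\rho(A_{j+1}\setminus R)\ge\rho(A_{j+1})$ by mediants. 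Both alternatives are now genuine initial sets; the ``drop'' side lands directly on the block union $C_j$, and the ``complete'' side (after a Sidney reordering on $A_{j+1}\cup R$) yields a search starting with $\mathbf{a}_1\cdots\mathbf{a}_{j+1}$, strictly increasing $j$. The case analysis for the positive part that you anticipate is handled once, in the proof of Proposition~\ref{prop:support}, and does not recur in the induction.
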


\begin{proof}[Proof of Theorem~\ref{thm:oracle}] We know
from first statement of Theorem~\ref{thm:sidney} that $\bs\triangleq\sche\pa{\hidervectorm,\costvectorm,\DAG}$ given in Algorithm~\ref{algo:oracle} is consistent with some Sidney decomposition $(A_1,\dots,A_k)$. Let $i\leq k$ and $\ba$ minimizing $\Jp$ of support $A_1\sqcup\dots\sqcup A_i$ given by Lemma~\ref{lem:subsidney}. Let $\bs=\bs_1\bs_2$ with $\bs_1$ being the restriction of $\bs$ to $A_1\sqcup\dots\sqcup A_i$ (and thus $\bs_2$ is its restriction to $A_{i+1}\sqcup\dots\sqcup A_k$ ). Let's prove that $\bs_1$ is also a minimizer of $\Jp$ by showing $\Jp\pa{\bs_1 }\leq \Jp\pa{\ba }$, thereby concluding the proof.
Since $0\leq\obj{\D}{\ba\bs_2}{\hidervectorm,\costvectorm}- \obj{\D}{\bs_1\bs_2}{\hidervectorm,\costvectorm}=\obj{\D}{\ba}{\hidervectorm,\costvectorm}- \obj{\D}{\bs_1}{\hidervectorm,\costvectorm}$, we have 

\[{\obj{\D}{\bs_{1} }{\hidervectorm,\costvectorm}+(1-\hidervectorm\transpose\be_{A_1\sqcup\dots\sqcup A_i})\costvectorm\transpose\be_{A_1\sqcup\dots\sqcup A_i}\over \hidervectorm\transpose\be_{A_1\sqcup\dots\sqcup A_i} }\leq {\obj{\D}{\ba }{\hidervectorm,\costvectorm}+(1-\hidervectorm\transpose\be_{A_1\sqcup\dots\sqcup A_i})\costvectorm\transpose\be_{A_1\sqcup\dots\sqcup A_i}\over \hidervectorm\transpose\be_{A_1\sqcup\dots\sqcup A_i} }\CommaBin\]
i.e., $\J\pa{\bs_1 }\leq\J\pa{\ba}$, and because $x\mapsto x^+$ is non-deacreasing on $\R$, we have $\Jp\pa{\bs_1 }\leq\Jp\pa{\ba}$.
\end{proof}

\noindent
The proof of Lemma~\ref{lem:subsidney} also uses Sidney's Theorem~\ref{thm:sidney}, but this time the second statement. However, although it
provides a crucial analysis, with fixed support, concerning the order to choose for minimizing $\obj{\D}{~\!\!\cdot~\!\!}{\hidervectorm,\costvectorm}$ and therefore $\obj{\J}{~\!\!\cdot~\!\!}{\hidervectorm,\costvectorm}^+$, nothing is said about the support to choose. Thus, to prove Lemma~\ref{lem:subsidney}, we also need the following Proposition~\ref{prop:support}, that gives the key support property satisfied by $\Jp$.

\begin{proposition}[Support property]
If $\mathbf{xy},\mathbf{xyz}\in \search$ with $\rho(\mathbf{z})\geq \rho(\mathbf{y})$, then
\begin{align}\label{eq:sp}
\Jp(\mathbf{xy})\geq \min\sset{\Jp(\mathbf{x}),\Jp(\mathbf{xyz})}.
\end{align}
\label{prop:support}
\end{proposition}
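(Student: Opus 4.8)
I would first recast everything in terms of three scalar functionals of a search $\bs$: the \emph{weight} $w(\bs)\triangleq\be_{\bs}\transpose\hidervectorm$, the \emph{cost} $c(\bs)\triangleq\be_{\bs}\transpose\costvectorm$, and the \emph{numerator} $E(\bs)\triangleq\obj{\D}{\bs}{\hidervectorm,\costvectorm}+\pa{1-w(\bs)}c(\bs)$, so that $\obj{\J}{\bs}{\hidervectorm,\costvectorm}=E(\bs)/w(\bs)$, $\Jp(\bs)=\pa{E(\bs)/w(\bs)}^+$, and $\rho(\bs)=w(\bs)/c(\bs)$. The two elementary inputs are: (a) for a concatenation $\mathbf{uv}\in\search$, the quantities $w,c$ are additive while $\obj{\D}{\mathbf{uv}}{\hidervectorm,\costvectorm}=\obj{\D}{\mathbf{u}}{\hidervectorm,\costvectorm}+\obj{\D}{\mathbf{v}}{\hidervectorm,\costvectorm}+w(\mathbf{v})c(\mathbf{u})$ (split the defining sum: every index of $\mathbf{v}$ sees the full prefix $\mathbf{u}$), which expands to the identity $E(\mathbf{uv})=E(\mathbf{u})+E(\mathbf{v})-w(\mathbf{u})c(\mathbf{v})$; and (b) since $0\le\hiderm_{s_i}\be_{\bs[i]}\transpose\costvectorm\le\hiderm_{s_i}c(\bs)$, summing gives $0\le\obj{\D}{\bs}{\hidervectorm,\costvectorm}\le w(\bs)c(\bs)$, hence $\pa{1-w(\bs)}c(\bs)\le E(\bs)\le c(\bs)$.

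Because $x\mapsto x^+$ is nondecreasing, $\min\{\Jp(\mathbf{x}),\Jp(\mathbf{xyz})\}=\bigl(\min\{\obj{\J}{\mathbf{x}}{\hidervectorm,\costvectorm},\obj{\J}{\mathbf{xyz}}{\hidervectorm,\costvectorm}\}\bigr)^+$, so \eqref{eq:sp} is trivial unless both $\obj{\J}{\mathbf{x}}{\hidervectorm,\costvectorm}$ and $\obj{\J}{\mathbf{xyz}}{\hidervectorm,\costvectorm}$ are positive, in which case it suffices to prove $\obj{\J}{\mathbf{xy}}{\hidervectorm,\costvectorm}\ge\min\{\obj{\J}{\mathbf{x}}{\hidervectorm,\costvectorm},\obj{\J}{\mathbf{xyz}}{\hidervectorm,\costvectorm}\}$. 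If $w(\mathbf{xy})=0$ the left-hand side is $+\infty$; otherwise I claim $E(\mathbf{xy})>0$: if $E(\mathbf{xy})\le0$ then $\obj{\D}{\mathbf{xy}}{\hidervectorm,\costvectorm}\ge0$ forces $w(\mathbf{xy})\ge1$, whence by the identity and (b), $E(\mathbf{xyz})=E(\mathbf{xy})+E(\mathbf{z})-w(\mathbf{xy})c(\mathbf{z})\le\pa{1-w(\mathbf{xy})}c(\mathbf{z})\le0$, contradicting $\obj{\J}{\mathbf{xyz}}{\hidervectorm,\costvectorm}>0$ (the sub-case $c(\mathbf{xy})=0$, where this only yields $E(\mathbf{xy})=0$, is degenerate and handled separately). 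Thus $\obj{\J}{\mathbf{xy}}{\hidervectorm,\costvectorm}>0$.

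The heart is a double mediant argument. Put $v_1\triangleq\frac{E(\mathbf{y})-w(\mathbf{x})c(\mathbf{y})}{w(\mathbf{y})}$ and $v_2\triangleq\frac{E(\mathbf{z})-w(\mathbf{xy})c(\mathbf{z})}{w(\mathbf{z})}$. The identity makes $\obj{\J}{\mathbf{xy}}{\hidervectorm,\costvectorm}=\frac{E(\mathbf{x})+\pa{E(\mathbf{y})-w(\mathbf{x})c(\mathbf{y})}}{w(\mathbf{x})+w(\mathbf{y})}$ a convex combination of $\obj{\J}{\mathbf{x}}{\hidervectorm,\costvectorm}$ and $v_1$, and $\obj{\J}{\mathbf{xyz}}{\hidervectorm,\costvectorm}$ a convex combination of $\obj{\J}{\mathbf{xy}}{\hidervectorm,\costvectorm}$ and $v_2$; hence each lies (weakly) between its two arguments. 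If $\obj{\J}{\mathbf{xy}}{\hidervectorm,\costvectorm}\ge\obj{\J}{\mathbf{x}}{\hidervectorm,\costvectorm}$ we are done; otherwise $\obj{\J}{\mathbf{xy}}{\hidervectorm,\costvectorm}\ge v_1$, and it suffices to prove $v_2\le\obj{\J}{\mathbf{xy}}{\hidervectorm,\costvectorm}$, which forces $\obj{\J}{\mathbf{xyz}}{\hidervectorm,\costvectorm}\le\obj{\J}{\mathbf{xy}}{\hidervectorm,\costvectorm}$ and hence the claim. Split on $w(\mathbf{xy})$. If $w(\mathbf{xy})>1$, then $E(\mathbf{z})\le c(\mathbf{z})$ gives $v_2\le\frac{\pa{1-w(\mathbf{xy})}c(\mathbf{z})}{w(\mathbf{z})}\le0<\obj{\J}{\mathbf{xy}}{\hidervectorm,\costvectorm}$. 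If $w(\mathbf{xy})\le1$, it is enough to show $v_2\le v_1$; substituting $E(\mathbf{z})\le c(\mathbf{z})$ and $E(\mathbf{y})\ge\pa{1-w(\mathbf{y})}c(\mathbf{y})$ and clearing denominators, this reduces to $\pa{1-w(\mathbf{xy})}\bigl(w(\mathbf{y})c(\mathbf{z})-w(\mathbf{z})c(\mathbf{y})\bigr)\le0$, which holds since $1-w(\mathbf{xy})\ge0$ and $w(\mathbf{y})c(\mathbf{z})-w(\mathbf{z})c(\mathbf{y})\le0$ is exactly $\rho(\mathbf{z})\ge\rho(\mathbf{y})$.

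The main obstacle — the one place where both the truncation and the fact that $(\hidervectorm,\costvectorm)$ need not lie in the simplex genuinely bite — is precisely the regime $w(\mathbf{xy})>1$: there the naive mediant estimate $v_2\le v_1$ fails, and one must instead exploit the uniform bound $E(\bs)\le c(\bs)$ to see that $v_2\le0$ while $\obj{\J}{\mathbf{xy}}{\hidervectorm,\costvectorm}>0$. The remainder is bookkeeping for degeneracies (some of $w(\mathbf{x}),w(\mathbf{y}),w(\mathbf{z}),c(\mathbf{y}),c(\mathbf{z}),c(\mathbf{xy})$ equal to $0$), where the convex-combination identities or the cross-multiplication must be read with the division conventions and $\rho(\emptyset)=0$; in every such case either the left-hand side of \eqref{eq:sp} is $+\infty$, or $E(\mathbf{z})=0$ forces $v_2=0$, or the hypothesis $\rho(\mathbf{z})\ge\rho(\mathbf{y})$ collapses to $c(\mathbf{z})=0$ or $w(\mathbf{y})=0$ and the comparison reduces to one of the two searches already treated.
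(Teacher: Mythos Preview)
Your argument is correct and, at its core, coincides with the paper's proof: both hinge on the decomposition $J(\mathbf{uv})$ as a weighted average of $J(\mathbf{u})$ and the ``tail ratio'' (your $v_1,v_2$), the bounds $E(\bs)\le c(\bs)$ (equivalently $J(\bs)\le1/\rho(\bs)$) and $E(\bs)\ge(1-w(\bs))c(\bs)$, the case split on whether $w(\mathbf{xy})$ exceeds~$1$, and the reduction of $v_2\le v_1$ to $(1-w(\mathbf{xy}))(w(\mathbf{y})c(\mathbf{z})-w(\mathbf{z})c(\mathbf{y}))\le0$. The only difference is packaging: you name the mediant explicitly and organize the case analysis around it, whereas the paper computes $J(\mathbf{xyz})-J(\mathbf{xy})$ directly; your inequality $v_1\le J(\mathbf{xy})$ is exactly the paper's \eqref{proof_support1}, and your $v_2\le(1-w(\mathbf{xy}))/\rho(\mathbf{z})$ is its \eqref{proof_support0}. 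Your preliminary step establishing $E(\mathbf{xy})>0$ (so that the $w(\mathbf{xy})>1$ case closes with $v_2\le0<J(\mathbf{xy})$) is a small addition the paper handles slightly differently, but the substance is identical.
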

\begin{proof}
If $\J({\mathbf{xyz}})<0$, then $\J^+({\mathbf{xyz}})=0\leq \J^+({\mathbf{xy}})$ and \eqref{eq:sp} is true. We thus assume $\J({\mathbf{xyz}})\geq 0$.
Since $\J({\mathbf z})\leq \frac{1}{\rho({\mathbf z})}\CommaBin$ 
\begin{equation}0\leq \J({\mathbf{xyz}})={\J({\mathbf{xy}})\hidervectorm\transpose\be_{\mathbf{xy}}\over \hidervectorm\transpose\be_{\mathbf{xyz}}}
+ {\hidervectorm\transpose\be_{\mathbf z}\J(\mathbf{z})-\hidervectorm\transpose\be_{\mathbf{xy}}\costvectorm\transpose\be_{\mathbf z}\over\hidervectorm\transpose\be_{\mathbf{xyz}}}\leq {\J({\mathbf{xy}})\hidervectorm\transpose\be_{\mathbf{xy}}\over \hidervectorm\transpose\be_{\mathbf{xyz}}} + {\hidervectorm\transpose\be_{\mathbf z}(1-\hidervectorm\transpose\be_{\mathbf{xy}})\over\rho(\mathbf{z})\hidervectorm\transpose\be_{\mathbf{xyz}}}\cdot
\label{proof_support0}
\end{equation}
If $1-\hidervectorm\transpose\be_{\mathbf{xy}}\leq 0$, by \eqref{proof_support0}, we have that $$0\leq \J({\mathbf{xyz}})\leq {\J({\mathbf{xy}})\hidervectorm\transpose\be_{\mathbf{xy}}\over \hidervectorm\transpose\be_{\mathbf{xyz}}}\leq\J({\mathbf{xy}}),$$
so $\Jp({\mathbf{xyz}})\leq\Jp({\mathbf{xy}})$ and \eqref{eq:sp} is true. Thus, we
suppose that $1-\hidervectorm\transpose\be_{\mathbf{xy}}\geq 0$.
If $\J(\mathbf{x})\leq \J({\mathbf{xy}})$, then $\Jp(\mathbf{x})\leq \Jp({\mathbf{xy}})$ and \eqref{eq:sp} is true. Else, \begin{equation}\label{proof_support1}\J({\mathbf{xy}})\geq {1\over \hidervectorm\transpose\be_{\mathbf y}}\pa{\J({\mathbf{xy}})\hidervectorm\transpose\be_{\mathbf{xy}}-\J(\mathbf{x})\hidervectorm\transpose\be_{\mathbf x}}=\sum_{i=1}^{\abs{\mathbf{y}}}\frac{\costm_{y_i}\pa{1-\hidervectorm\transpose\pa{\be_\mathbf{x}+\be_{\by[i-1]}}}}{\hiderm_\mathbf{y}}\geq {1-\hidervectorm\transpose\be_{\mathbf{xy}}\over \rho\pa{\mathbf y}}\cdot\end{equation}
Thus, we have
\begin{align*}\J({\mathbf{xyz}})-\J({\mathbf{xy}})\leq&
{\J({\mathbf{xy}})\hidervectorm\transpose\be_{\mathbf{xy}}\over \hidervectorm\transpose\be_{\mathbf{xyz}}} + {\hidervectorm\transpose\be_{\mathbf z}(1-\hidervectorm\transpose\be_{\mathbf{xy}})\over\rho(\mathbf{z})\hidervectorm\transpose\be_{\mathbf{xyz}}}-\J({\mathbf{xy}})&\text{using}~\eqref{proof_support0}. 
\\
=&
{-\hidervectorm\transpose\be_{\mathbf z}\J({\mathbf{xy}})\over\hidervectorm\transpose\be_{\mathbf{xyz}}}
+ {\hidervectorm\transpose\be_{\mathbf z}(1-\hidervectorm\transpose\be_{\mathbf{xy}})\over\rho(\mathbf{z})\hidervectorm\transpose\be_{\mathbf{xyz}}} 
\\
\leq&
\frac{\hidervectorm\transpose\be_{\mathbf z}}{\hidervectorm\transpose\be_{\mathbf{xyz}}}\pa{
{-({1-\hidervectorm\transpose\be_{\mathbf{xy}} )}\over\rho\pa{\mathbf y}}
+ {1-\hidervectorm\transpose\be_{\mathbf{xy}}\over\rho({\mathbf z})}}\leq 0 &\text{using}~\eqref{proof_support1},~\text{and then}~\rho(\mathbf{z})\geq \rho(\mathbf{y}). 
\end{align*}
So, $\Jp({\mathbf{xyz}})\leq\Jp({\mathbf{xy}})$ and \eqref{eq:sp} is true.
\end{proof}

\begin{example}
  Now, as a preview, we can actually derive easily the proof of Lemma~\ref{lem:subsidney} when there is no precedence constraints, the idea in the general case being very similar. Let $(A_1,\dots,A_k)$ be a Sidney decomposition. Then, if $a_{i,1},\dots,a_{i,j_i}$ are arms of $A_i$, we have
  \[\rho(a_{1,1})=\dots=\rho(a_{1,j_1})\geq \dots\geq\rho(a_{k,1})=\dots=\rho(a_{k,j_k}). \]
  Let $\bs^\star$ be a maximum-size minimizer of $\Jp$ of support $S$. Assume $S$ is not of the form given by Lemma~\ref{lem:subsidney}, and let~$x$ be the first, for the order $\pa{a_{1,1},\dots,a_{1,j_1},\dots,a_{k,1},\dots,a_{k,j_k} }$, in some $ A_i\backslash S$ while $S\cap \pa{A_i\sqcup\dots\sqcup A_k}\neq \emptyset$. By Proposition~\ref{prop:support}, we keep the optimality by either   adding $x$ to $\bs^\star$ (which contradicts the maximality of $\abs{\bs^\star}$), or by removing the suffix defined on $S\cap \pa{A_i\sqcup\dots\sqcup A_k}$, giving a support satisfying conclusion of Lemma~\ref{lem:subsidney}.  
\end{example}
\subsection{Proof of Lemma~\ref{lem:subsidney}}
\label{app:subsidney}

Before proving Lemma~\ref{lem:subsidney}, we state some preliminaries about initial sets of the DAG $\DAG$.

\begin{proposition}
$A$ is an initial set in $\DAG$ if and only if for all $a\in A$, the predecessors of $a$ in $\DAG$ are also in~$A$. 
\end{proposition}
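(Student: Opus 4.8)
The plan is to prove the two implications separately, unwinding the definitions that a search is an ordering whose every prefix already contains all predecessors of the next element, and that an initial set is by definition the support of some search.

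For the forward direction I would start from an initial set $A$ and fix a search $\bs=(s_1,\dots,s_k)$ with support $A$. Given $a\in A$, write $a=s_i$; the defining property of a search says that every predecessor of $s_i$ in $\DAG$ lies in $\{s_1,\dots,s_{i-1}\}\subseteq A$, which is exactly the claimed closure property. (Iterating this observation shows $A$ is in fact closed under taking all ancestors, matching the intuitive picture of an ``initial'' set, but closure under immediate predecessors is all we need.)

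For the reverse direction I would assume $A$ is closed under predecessors and exhibit a search with support $A$. The key point is that the induced sub-DAG $\DAG\shrink{A}$ is itself a finite acyclic digraph, hence admits a topological ordering $(s_1,\dots,s_k)$ of its vertex set $A$. I claim this ordering is a search in $\DAG$: if $a'$ is a predecessor in $\DAG$ of some $s_i$, then $a'\in A$ by the closure hypothesis, so the arc $(a',s_i)$ belongs to $\DAG\shrink{A}$ by the very definition of the induced sub-DAG, and therefore $a'$ precedes $s_i$ in the topological ordering, i.e. $a'\in\{s_1,\dots,s_{i-1}\}$. Thus $(s_1,\dots,s_k)$ is a search, and $A$, being its support, is an initial set; the case $A=\emptyset$ is handled trivially as the support of the empty search.

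The only ingredient beyond routine bookkeeping is the existence of a topological ordering of $\DAG\shrink{A}$, which is the standard fact that a finite directed acyclic graph can be linearly ordered consistently with its arcs. I do not anticipate any real obstacle here; once the induced-sub-DAG definition is unwound, both implications are essentially immediate.
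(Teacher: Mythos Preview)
Your proposal is correct and follows essentially the same route as the paper. The paper dismisses the forward implication with ``the direct sense is clear'' while you spell it out, and for the reverse implication both you and the paper take a linear extension (equivalently, topological ordering) of $\DAG\shrink{A}$ and argue that the closure hypothesis forces every $\DAG$-predecessor of $s_i$ to already lie in $\{s_1,\dots,s_{i-1}\}$.
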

\begin{proof}
 The direct sense is clear. Suppose now that for all $a\in A$, the predecessors of $a$ in $\DAG$ are also in $A$. Consider $\mathbf{a}=(a_1,\dots,a_{\abs{A}})$ a linear extension of $\DAG\shrink{A}$. Then it is a search, and predecessors of any $a_i$ in $\DAG$ are in $\sset{a_1,\dots,a_{i-1}}\cup A^c$, thus in $\sset{a_1,\dots,a_{i-1}}$ by assumption. Therefore, $\mathbf{a}$ is a search in $\DAG$ and $A$ is an initial set.
 \end{proof}
\noindent Let us recall that $\mathcal{L}\subset \mathcal{P}([\arms])$ is a lattice if $A,A'\in\mathcal{L}\Rightarrow \pa{A\cap A'\in\mathcal{L}~\text{and}~A\cup A'\in\mathcal{L}} $.

\begin{proposition}
The set of initial sets in $\DAG$ is a lattice. 
\end{proposition}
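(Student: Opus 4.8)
The plan is to show that the set of initial sets in $\DAG$ is closed under intersection and union, using the vertex-wise characterization proved in the previous proposition: a set $A$ is initial if and only if it contains all predecessors of each of its elements. This reduces the lattice property to a purely set-theoretic verification about predecessor-closedness, which is short.

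First I would take two initial sets $A,A'$ and prove $A\cap A'$ is initial. Pick any $a\in A\cap A'$ and let $p$ be a predecessor of $a$ in $\DAG$. Since $a\in A$ and $A$ is initial, $p\in A$; since $a\in A'$ and $A'$ is initial, $p\in A'$; hence $p\in A\cap A'$. By the preceding proposition, $A\cap A'$ is an initial set. Next I would do the same for $A\cup A'$: take $a\in A\cup A'$ and a predecessor $p$ of $a$. Without loss of generality $a\in A$, so $p\in A\subseteq A\cup A'$ (the case $a\in A'$ is symmetric). Again by the preceding proposition, $A\cup A'$ is an initial set. This establishes that the collection of initial sets is a lattice.

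There is essentially no obstacle here — the only thing to be careful about is that one must invoke the vertex-wise characterization (the immediately preceding proposition) rather than trying to argue directly with linear extensions/searches, since concatenating or restricting searches for $A\cap A'$ or $A\cup A'$ is more delicate than the predecessor-closure argument. I would also note, for completeness, that $\emptyset$ and $[\arms]$ are trivially initial sets, so the lattice is bounded, though this is not strictly required by the statement.

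\begin{proof}
By the previous proposition, a subset $A\subseteq[\arms]$ is an initial set in $\DAG$ if and only if, for every $a\in A$, all predecessors of $a$ in $\DAG$ belong to $A$. Let $A,A'$ be two initial sets.

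Consider $a\in A\cap A'$ and let $p$ be a predecessor of $a$ in $\DAG$. Since $a\in A$ and $A$ is an initial set, $p\in A$; since $a\in A'$ and $A'$ is an initial set, $p\in A'$. Hence $p\in A\cap A'$, and by the previous proposition $A\cap A'$ is an initial set.

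Consider now $a\in A\cup A'$ and let $p$ be a predecessor of $a$ in $\DAG$. Then $a\in A$ or $a\in A'$; in the first case $p\in A$ because $A$ is an initial set, and in the second case $p\in A'$ because $A'$ is an initial set. In both cases $p\in A\cup A'$, so by the previous proposition $A\cup A'$ is an initial set.

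Therefore the set of initial sets in $\DAG$ is a lattice.
\end{proof}
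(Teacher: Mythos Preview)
Your proof is correct and follows essentially the same approach as the paper: both invoke the predecessor-closure characterization from the preceding proposition and verify directly that intersections and unions of initial sets remain predecessor-closed. The paper's version is just a slightly more compressed one-sentence rendering of the same argument.
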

\begin{proof}
Let $A$ and $A'$ be two initial sets in $\DAG$. If $a\in A\cup A'$ (respectively $a\in A\cap A'$), then the predecessors of $a$ are included in predecessors of $A$ or (respectively and) the predecessors of $A'$, i.e., in $A$ or (respectively and) $A'$, so in $A\cup A'$ (respectively $A\cap A'$). 
\end{proof}
\noindent
Even if we do not use the following proposition,\footnote{Theorem~\ref{thm:sidney} does need this proposition.} we provide it nonetheless, since it illustrates  
how to handle density $\rho$.
\begin{proposition}
 The set of initial sets of maximum density in $\DAG$ is a lattice.
\end{proposition}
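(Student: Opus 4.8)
The statement to prove is that the set of initial sets of maximum density in $\DAG$ forms a lattice. I already know from the preceding propositions that (i) the initial sets of $\DAG$ form a lattice under $\cap$ and $\cup$, and (ii) for the density $\rho(A) = \be_A\transpose\hidervectorm / \be_A\transpose\costvectorm$ one wants to exploit a submodularity-type inequality. The natural route is a classical ``uncrossing'' argument: take $A, A'$ two initial sets of maximum density $\rho^\star$, show $A\cap A'$ and $A\cup A'$ are again initial sets (immediate from Proposition on lattices), and then show both have density $\rho^\star$ as well.

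\textbf{Key steps.} First I would record the elementary algebraic identity coming from additivity of $\be_{(\cdot)}\transpose\hidervectorm$ and $\be_{(\cdot)}\transpose\costvectorm$ over disjoint parts: writing $h(A)\triangleq\be_A\transpose\hidervectorm$ and $c(A)\triangleq\be_A\transpose\costvectorm$, we have $h(A)+h(A')=h(A\cup A')+h(A\cap A')$ and likewise for $c$. Next, from $\rho(A)\le\rho^\star$ and $\rho(A')\le\rho^\star$ we get $h(A)\le\rho^\star c(A)$ and $h(A')\le\rho^\star c(A')$; summing and using the identities gives
\[
h(A\cup A') + h(A\cap A') \le \rho^\star\bigl(c(A\cup A') + c(A\cap A')\bigr).
\]
Now $A\cup A'$ and $A\cap A'$ are initial sets (lattice property), and by maximality $\rho(A\cup A')\le\rho^\star$ and $\rho(A\cap A')\le\rho^\star$, i.e.\ $h(A\cup A')\le\rho^\star c(A\cup A')$ and $h(A\cap A')\le\rho^\star c(A\cap A')$. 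Combined with the reversed inequality just derived, both of these must hold with equality (here I need $c(A\cup A')>0$ and $c(A\cap A')>0$, or a small case analysis if $A\cap A'=\emptyset$, in which case $\rho(A\cap A')=0$ and the argument degenerates cleanly since then $A$, $A'$ are disjoint and $h(A\cup A')=h(A)+h(A')$, $c(A\cup A')=c(A)+c(A')$, forcing $\rho(A\cup A')=\rho^\star$ directly). Hence $\rho(A\cup A')=\rho(A\cap A')=\rho^\star$, so both lie in the set of initial sets of maximum density, which is therefore a lattice.

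\textbf{Main obstacle.} The argument is essentially routine once the additivity/submodularity bookkeeping is set up; the only genuinely delicate point is the handling of degenerate denominators --- the possibility that $\be_A\transpose\costvectorm=0$ for some relevant $A$, or that $A\cap A'=\emptyset$ so that $\rho(A\cap A')=0$ by the stated convention rather than by the fraction. Since arm costs satisfy $\cost_j>0$ in the true model but here $\costvectorm\in\R_+^\arms$ is an arbitrary nonnegative input, one must either observe that a maximum-density initial set with $\rho^\star>0$ forces the relevant denominators to be positive on its nonempty members, or treat the $\rho^\star=0$ case separately (where every initial set trivially has density $\le 0$, and the claim is immediate). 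I expect that is the one spot needing a careful line or two; everything else follows from the two algebraic identities above plus the already-established lattice property of initial sets.
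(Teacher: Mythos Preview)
Your approach is essentially the paper's: both use the modularity identity $\be_A+\be_{A'}=\be_{A\cup A'}+\be_{A\cap A'}$ together with the lattice property of initial sets and a squeeze against $\rho^\star$; the paper simply packages the squeeze via the mediant inequality $\frac{a+b}{a'+b'}\le\max\{\frac{a}{a'},\frac{b}{b'}\}$ (with equality iff the two ratios agree), while you cross-multiply directly. One wording slip to fix: since $A,A'$ actually \emph{attain} $\rho^\star$, your displayed inequality $h(A\cup A')+h(A\cap A')\le\rho^\star\bigl(c(A\cup A')+c(A\cap A')\bigr)$ is in fact an equality, and it is this equality (not a ``reversed inequality'') that forces the two individual bounds $h(A\cup A')\le\rho^\star c(A\cup A')$ and $h(A\cap A')\le\rho^\star c(A\cap A')$ to be tight; your attention to the degenerate cases ($A\cap A'=\emptyset$, zero costs) is more careful than the paper's own proof.
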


\begin{proof}
 We use the fact that for $a,b\geq 0$ and $a',b'>0$, $\frac{a+b}{a'+b'}\leq \max\sset{\frac{a}{a'},\frac{b}{b'}}$, with equality if and only if $\frac{a}{a'}=\frac{b}{b'}\cdot$ Indeed, if $A$ and $A'$ are two initial sets of maximum density in $\DAG$, then \[\frac{\hidervectorm\transpose\be_A}{\costvectorm\transpose\be_A}=\frac{\hidervectorm\transpose\pa{\be_A+\be_{A'}}}{\costvectorm\transpose\pa{\be_A+\be_{A'}}}=\frac{\hidervectorm\transpose\pa{\be_{A\cup A'}+\be_{A\cap A'}}}{\costvectorm\transpose\pa{\be_{A\cup A'}+\be_{A\cap A'}}}\leq \max\sset{\frac{\hidervectorm\transpose\be_{A\cup A'}}{\costvectorm\transpose\be_{A\cup A'}},\frac{\hidervectorm\transpose\be_{A\cap A'}}{\costvectorm\transpose\be_{A\cap A'}}}\!\cdot \]
 $A\cap A'$ and $A\cup A'$ are initial sets, so by maximality of density of $A$, $$ \max\sset{\frac{\hidervectorm\transpose\be_{A\cup A'}}{\costvectorm\transpose\be_{A\cup A'}},\frac{\hidervectorm\transpose\be_{A\cap A'}}{\costvectorm\transpose\be_{A\cap A'}}}\leq \frac{\hidervectorm\transpose\be_A}{\costvectorm\transpose\be_A}\cdot$$ Therefore, the equality holds, and it needs to be the case that $${\frac{\hidervectorm\transpose\be_{A\cup A'}}{\costvectorm\transpose\be_{A\cup A'}}=\frac{\hidervectorm\transpose\be_{A\cap A'}}{\costvectorm\transpose\be_{A\cap A'}}}=\frac{\hidervectorm\transpose\be_A}{\costvectorm\transpose\be_A},$$  so both $A\cap A'$ and $A\cup A'$ have maximum density. 
\end{proof}

\setcounter{scratchcounter}{\value{theorem}}\setcounter{theorem}{\the\numexpr\getrefnumber{lem:subsidney}-1}\probaB*\setcounter{theorem}{\the\numexpr\value{scratchcounter}}
\begin{proof}[Proof of Lemma~\ref{lem:subsidney}]
Let $j$ be the largest integer such that there is a search minimizing $\Jp$ of the form $\mathbf{a}_1\cdots \mathbf{a}_j\mathbf{a}$ with $\mathbf{a}_i$ of support $A_i$ for all $i\in [j]$. 
Let $\bs=\mathbf{a}_1\cdots \mathbf{a}_j\mathbf{a}$ be such search, with $\abs{\mathbf{s}}$ being the smallest possible. Let $A$ be the support of $\ba$. By contradiction, assume $A\neq \emptyset$. By Theorem~\ref{thm:sidney}, there exists a minimizer of the form $\mathbf{a}_{j+1}\by$ of $\obj{\D}{\mathbf{a}_1\cdots \mathbf{a}_j~\cdot~}{\hidervectorm,\costvectorm}$ over $\DAG\shrink{A_{j+1}\sqcup A}$-linear extensions, with $\mathbf{a}_{j+1}$ of support $A_{j+1}$. 
$A_{j+1}\cap A$ is an initial set of $\DAG\shrink{A_{j+1}\sqcup\cdots\sqcup A_k}$, therefore \[\rho(A_{j+1}\cap A)\leq \rho(A_{j+1})=\rho\pa{(A_{j+1}\cap A)\sqcup(A_{j+1}\backslash A)}\leq \rho(A_{j+1}\backslash A),\] and thus $\rho(A)\leq \rho(A_{j+1})\leq \rho(A_{j+1}\backslash A)$. If we let $\mathbf{b}$ be a search of $\DAG\shrink{\pa{A_{j+1}\backslash A} \sqcup A_{j+2}\sqcup\cdots\sqcup A_k}$ with support $A_{j+1}\backslash A$, then by Proposition~\ref{prop:support}, associated with $\obj{\D}{\mathbf{a}_1\cdots \mathbf{a}_j\ba_{j+1}\by}{\hidervectorm,\costvectorm}\leq \obj{\D}{\mathbf{a}_1\cdots \mathbf{a}_j\ba\bb}{\hidervectorm,\costvectorm}$, we have that
\[\Jp(\mathbf{s})\geq \min\sset{\Jp(\mathbf{a}_1\cdots \mathbf{a}_j),\Jp(\mathbf{a}_1\cdots \mathbf{a}_j\mathbf{ab})}\geq \min\sset{\Jp(\mathbf{a}_1\cdots \mathbf{a}_j),\Jp(\mathbf{a}_1\cdots \mathbf{a}_j\mathbf{a}_{j+1}\mathbf{y})},\]
contradicting either the definition of $j$ or the minimality of $\abs{\mathbf{s}}$.
\end{proof}

\section{Proof of Proposition~\ref{prop:approx}}
\label{app:approx}
\setcounter{scratchcounter}{\value{theorem}}\setcounter{theorem}{\the\numexpr\getrefnumber{prop:approx}-1}\restaC*\setcounter{theorem}{\the\numexpr\value{scratchcounter}}
\begin{proof}
If we let $B^0=B$, then for any offline policy $\pi$, if we denote by $\bs_t$ the search selected by $\pi$ at round~$t$ (we saw that an optimal policy \emph{selects} at the begining of a round a search and then \emph{performs} it),
and if we let $\rd{B}_t=B-\sum_{u=1}^t\be_{\rbs_u[\rhidervector_u]}\transpose\rcostvector_t$ be the remaining budget at time $t$,
\begin{align}
 F_B(\pi)=\sum_{t=1}^\infty\EE{ \sum_{i\in \bs_t}\II{\rd{B}_t\geq 0,~\rhider_{i,t}=1}}
&\leq
\label{Bt} \sum_{t=1}^\infty\EE{ \sum_{i\in \bs_t}\II{\rd{B}_{t-1}\geq 0,~\rhider_{i,t}=1}}\\
&=
\label{condition} \sum_{t=1}^\infty\EE{ \sum_{i\in \bs_t}\II{\rd{B}_{t-1}\geq 0}\hider_i}\\
&=
\nonumber \sum_{t=1}^\infty\EE{\II{\rd{B}_{t-1}\geq 0}\hidervector\transpose\be_{\bs_t}}\\
&=
\nonumber \sum_{t=1}^\infty \EE{\II{\rd{B}_{t-1}\geq 0}{\D({\bs_t})+(1-\hidervector\transpose\be_{\bs_t})\costvector\transpose\be_{\bs_t}\over \J({\bs_t})}}\\
&\leq
\nonumber\sum_{t=1}^\infty \EE{\II{\rd{B}_{t-1}\geq 0}{\D({\bs_t})+(1-\hidervector\transpose\be_{\bs_t})\costvector\transpose\be_{\bs_t}\over \J^\star}}\\
&= 
\label{tauB} {1\over  \J^\star}\EE{\sum_{t=1}^{\rd{\tau}_B}(\D({\bs_t})+(1-\hidervector\transpose\be_{\bs_t})\costvector\transpose\be_{\bs_t})}\\
&= 
\nonumber {1\over  \J^\star}\EE{\sum_{t=1}^{\rd{\tau}_B}\costvector\transpose\be_{\bs_t[\rhidervector_t]}}\\
&= 
 \nonumber{1\over  \J^\star}\EE{\sum_{t=1}^{\rd{\tau}_B-1}\rcostvector_t\transpose\be_{\bs_t[\rhidervector_t]}+\rcostvector_{\rd{\tau}_B}\transpose\be_{\bs_{\rd{\tau}_B}[\rhidervector_{\rd{\tau}_B}]}}\\
&\leq 
\label{tauB-1}{B+\arms\over  \J^\star} \CommaBin
\end{align}
\noindent
where \eqref{Bt} uses
$\rd{B}_t\geq 0 \Rightarrow \rd{B}_{t-1}\geq 0$, \eqref{condition}
is obtained by conditioning on previously sampled arms, \eqref{tauB} uses the random round
$\rd{\tau}_B$ such that $\rd{B}_{\rd{\tau}_B -1}\geq 0$ and $\rd{B}_{\rd{\tau}_B}<0$, and  
\eqref{tauB-1} uses the definition of  $\rd{B}_{\rd{\tau}_B-1}$ and $\rcost_{i,t}\leq 1.$
Now, for the lower bound, we have that
\begin{align}
\label{Btbis} F_B(\pi^\star)&\geq\sum_{t=1}^\infty\EE{ \sum_{i\in \mathbf{s}^\star}\II{\rd{B}_{t-1}\geq \arms,~\rhider_{i,t}=1}}\\
&=\sum_{t=1}^\infty\EE{ \II{\rd{B}_{t-1}\geq \arms}\be_{\mathbf{s}^\star}\transpose\hidervector}\label{same}\\
&={1\over \J^\star}\EE{ \sum_{t=1}^{\rd{\tau}} \rcostvector_{t}\transpose\be_{\bs^{*}[\rhidervector_t]}}\label{tB}\\
&\geq {B-\arms\over \J^\star}\label{deftB}\CommaBin
\end{align}
\noindent
where \eqref{Btbis} uses~$\rd{B}_{t-1}\geq \arms \Rightarrow \rd{B}_t\geq 0$,
\eqref{same} uses the same derivation as  previously, \eqref{tB} uses $\rd{\tau}$,
the random round such that $\rd{B}_{\rd{\tau}-1}\geq \arms \text{ and }\rd{B}_{\rd{\tau}}< \arms$, and \eqref{deftB}
is
$\text{by definition of }\rd{B}_{\rd{\tau}}$.
\end{proof}

\section{Proof of Theorem~\ref{banditregret}}
\label{app:banditregret}
\noindent
We let $\beta(t)\triangleq\inf_{1<\alpha\leq 3}\min\sset{\frac{\log t}{\log \alpha },t}t^{-\frac{\zeta}{\alpha}}$. In the proof of Theorem~\ref{banditregret}, we make several uses of the following concentration inequalities that use the same peeling  
argument for their proof as Theorem~1 of \cite{audibert2009} applied to original \emph{anytime} inequalities. 
\begin{fact}[Theorem~1 of~\citealp{audibert2009}]
Let $\pa{\rd{X}_t}$ be iid centered random variables 
with common support $[0, 1]$, $\rd{\bar{x}}_t\triangleq{1\over t}(\rd{X}_1+ \dots + \rd{X}_t)$ and let $\rd{v}_t\triangleq\frac{1}{t}\sum_{u=1}^t \pa{\rd{\bar{x}}_t-\rd{X}_u}^2$, then $$\PP{\exists u\leq t,~ \rd{\bar{x}}_u> \sqrt{2\rd{v}_u\zeta\log t\over u} + \frac{3\zeta\log t}{u}}\leq 3\beta(t).$$
\label{thm:audibert}
\end{fact}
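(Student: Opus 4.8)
\textbf{Proof plan for Theorem~\ref{thm:audibert} (Fact 1).} The statement is exactly Theorem~1 of \citet{audibert2009}, so the plan is to recall the \emph{peeling argument} that underlies it. First I would fix $1<\alpha\le 3$ and partition the range of the ``time'' index $u\in\{1,\dots,t\}$ into geometric slices $u\in[\alpha^{j},\alpha^{j+1})$ for $j=0,1,\dots,\lfloor\log t/\log\alpha\rfloor$. On each such slice, the deviation event $\rd{\bar x}_u>\sqrt{2\rd v_u\zeta\log t/u}+3\zeta\log t/u$ is upper bounded, after replacing $u$ in the denominators by the slice endpoints, by a fixed-sample Bernstein-type deviation event for the partial sum $\sum_{v\le u}\rd X_v$ with the self-normalizing variance term $\rd v_u$; the empirical-Bernstein (or Bernstein + variance-estimation) inequality of \citet{audibert2009} then controls the probability of that event by $t^{-\zeta/\alpha}$ per slice, up to constants.

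\textbf{Key steps.} (1) State the fixed-$u$ empirical Bernstein inequality: for iid centered $\rd X_v\in[0,1]$, $\PP{\rd{\bar x}_u>\sqrt{2\rd v_u x/u}+3x/u}\le 3e^{-x}$ — this is the one-shot building block. (2) Apply a union bound over the $O(\log t/\log\alpha)$ geometric slices, using that within a slice the threshold $\sqrt{2\rd v_u\zeta\log t/u}+3\zeta\log t/u$ is at least the corresponding quantity evaluated at the right endpoint, so that the slice contributes a term of order $e^{-(\zeta\log t)/\alpha}=t^{-\zeta/\alpha}$. (3) Collect: the total probability is at most $3\min\{\log t/\log\alpha,\ t\}\,t^{-\zeta/\alpha}$ — the $t$ cap handles small $t$ where the number of slices would otherwise be vacuously counted. (4) Optimize over $1<\alpha\le 3$ by taking the infimum, which is exactly the definition of $\beta(t)$, yielding $\PP{\cdots}\le 3\beta(t)$. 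Note for $\zeta>1$ the exponent can be pushed above $1$ by choosing $\alpha$ close enough to $1$, so $\beta(t)$ is summable in $t$, which is what the downstream regret analysis needs.

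\textbf{Main obstacle.} The only delicate point is the slicing bookkeeping: one must check that replacing $u$ by a slice endpoint in \emph{both} the $\sqrt{\cdot/u}$ and the $1/u$ terms still leaves a valid one-shot Bernstein event (i.e.\ the direction of the inequality is preserved when $u$ is over/under-estimated appropriately), and that the variance proxy $\rd v_u$ is handled by the empirical-variance version of Bernstein rather than requiring the true variance. Since the entire argument is a verbatim transcription of the peeling proof in \citet{audibert2009} applied to its anytime form, I would simply cite that reference for the details and not reproduce the constant-tracking. Everything else — the union bound, the $\min\{\log t/\log\alpha,t\}$ factor, and the infimum over $\alpha$ — is routine.

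\textbf{Remark.} In the regret proof this Fact is invoked three times, once for the cost averages $\meanc{i}{t-1}$ (lower confidence bound, hence applied to $-\rd X_v=-\rcost_{i,v}$), once for the hider averages $\meanw{i}{t-1}$ (upper confidence bound on $\rhider_{i,v}$), and implicitly for the empirical-variance deviation that justifies clipping $\UCBw{i}{t}$ at $1$; in each case the random number of observations $\counter{i}{t-1}{\costvectorm}$ or $\counter{i}{t-1}{\hidervectorm}$ plays the role of $u$ and the anytime (peeled) form is precisely what allows the union bound over all possible values of that random count.
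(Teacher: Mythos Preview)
Your proposal is correct and matches the paper's own treatment: the paper does not prove Fact~1 at all but simply cites \citet{audibert2009} and remarks that the bound follows from ``the same peeling argument'' applied to the anytime form, which is exactly the sketch you give. One small inaccuracy in your closing remark: in the paper the cost confidence bound $\UCBc{i}{t}$ is Hoeffding-type and is handled by Fact~\ref{hoeffding}, not by Fact~\ref{thm:audibert}; the empirical-Bernstein Fact~\ref{thm:audibert} is invoked only for the hider upper bound $\UCBw{i}{t}$ when controlling the event $\mathfrak{M}_t$.
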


\begin{fact}[\citealp{hoeffding1963probability,azuma1967weighted}]
Let $\pa{\rd{X}_t}$ be a martingale difference sequence with common support $[0, 1]$, and let $\rd{\bar{x}}_t\triangleq{1\over t}(\rd{X}_1+ \dots + \rd{X}_t)$, then $$\PP{\exists u\leq t,~ \rd{\bar{x}}_u> \sqrt{\zeta\log t\over 2u}}\leq \beta(t).$$
\label{hoeffding}
\end{fact}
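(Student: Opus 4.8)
The plan is to prove Fact~\ref{hoeffding} — and, \emph{mutatis mutandis}, Fact~\ref{thm:audibert} — by the classical \emph{peeling device}, which upgrades a fixed-horizon deviation bound into the anytime (uniform-over-$u\le t$) form stated here. Write $\rd{S}_u\triangleq\rd{X}_1+\dots+\rd{X}_u=u\,\rd{\bar x}_u$; then $(\rd{S}_u)_{u\ge1}$ is a martingale whose increments range in an interval of width one (this is how ``common support $[0,1]$'' should be read together with the martingale-difference hypothesis), and the event to control is $\ev{\exists u\le t:\ \rd{S}_u>\sqrt{u\zeta\log t/2}}$. The first ingredient is the \emph{maximal} form of Azuma--Hoeffding: applying Doob's maximal inequality to the exponential submartingale $u\mapsto e^{\lambda\rd{S}_u}$ and optimising over $\lambda>0$ gives, for every integer $m\ge1$ and every $\delta>0$,
\[
\PP{\max_{u\le m}\rd{S}_u>\delta}\le\exp\!\pa{-\frac{2\delta^2}{m}}.
\]

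Next I would fix $\alpha\in(1,3]$ and cover $\sset{1,\dots,t}$ by the geometric blocks $B_k\triangleq\sset{u:\alpha^k\le u<\alpha^{k+1}}$, $k=0,1,\dots,\floor{\log t/\log\alpha}$. A union bound over the blocks, combined with the fact that $u\ge\alpha^k$ on $B_k$ (hence $\sqrt{u\zeta\log t/2}\ge\sqrt{\alpha^k\zeta\log t/2}$), gives
\[
\PP{\exists u\le t:\ \rd{S}_u>\sqrt{\tfrac{u\zeta\log t}{2}}}\le\sum_k\PP{\max_{u<\alpha^{k+1}}\rd{S}_u>\sqrt{\tfrac{\alpha^k\zeta\log t}{2}}}\le\sum_k t^{-\zeta/\alpha},
\]
where in the last step I apply the maximal inequality with $\delta=\sqrt{\alpha^k\zeta\log t/2}$ and block length $m\le\alpha^{k+1}$, so that $2\delta^2/m\ge\zeta\log t/\alpha$. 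Since there are roughly $\log t/\log\alpha$ blocks this yields the bound $\tfrac{\log t}{\log\alpha}\,t^{-\zeta/\alpha}$; alternatively, bounding the $\le t$ terms of the naive union bound over individual indices, each controlled by $\PP{\rd{S}_u>\sqrt{u\zeta\log t/2}}\le e^{-\zeta\log t}=t^{-\zeta}\le t^{-\zeta/\alpha}$, yields the bound $t\cdot t^{-\zeta/\alpha}$. Taking the minimum of the two and then the infimum over $\alpha\in(1,3]$ reproduces exactly $\beta(t)=\inf_{1<\alpha\le3}\min\sset{\tfrac{\log t}{\log\alpha},t}\,t^{-\zeta/\alpha}$. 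For Fact~\ref{thm:audibert} the scheme is identical, with the maximal Azuma--Hoeffding inequality replaced, inside each block, by the maximal form of the empirical-Bernstein inequality of~\cite{audibert2009}; the self-normalised variance term $\rd{v}_u$ and the extra factor $3$ in $3\beta(t)$ both come from that substitution.

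I expect the main care to be needed in two spots. The first is genuinely using the \emph{running maximum} over each block rather than a pointwise bound — this is exactly what makes the conclusion anytime, and it rests on Doob's inequality applied to the exponential submartingale, which should be invoked cleanly. The second is the block bookkeeping: the endpoints $\alpha^k$ are non-integer, the exact number of blocks covering $\sset{1,\dots,t}$ is $\floor{\log t/\log\alpha}+1$ rather than $\log t/\log\alpha$, and one must handle the small-$t$ regime where $t<\log t/\log\alpha$; none of this is deep, but it is the only place the constants in the definition of $\beta(t)$ could drift, so it must be done carefully (for instance by treating the first block, which for $\alpha\le3$ contains only $u\in\sset{1,2}$, separately). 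Everything else is routine.
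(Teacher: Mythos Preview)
Your proposal is correct and matches the paper's approach: the paper does not spell out a proof of Fact~\ref{hoeffding} but simply states that it follows from ``the same peeling argument\ldots as Theorem~1 of \cite{audibert2009} applied to original \emph{anytime} inequalities,'' and what you have written is precisely that peeling argument (maximal Azuma--Hoeffding on geometric blocks, union bound, optimise over $\alpha$). Your caveats about block counting and small-$t$ endpoints are the right places to be careful, but there is nothing missing.
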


\begin{fact}[Bernstein inequality]
Let $\pa{\rd{X}_t}$ be a martingale difference sequence with common support $[0, 1]$, $\sigma^2\triangleq\mathbb{V}\pa{\rd{X}_t}$, and let $\rd{\bar{x}}_t\triangleq{1\over t}(\rd{X}_1+ \dots + \rd{X}_t)$, then $$\PP{\exists u\leq t,~ \rd{\bar{x}}_u> \sqrt{2\sigma^2\zeta\log t\over u} + \frac{\zeta\log t}{3u}}\leq \beta(t).$$
\label{bernstein}
\end{fact}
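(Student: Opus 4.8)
\textbf{Proof plan for Fact~\ref{bernstein}.} The plan is to reproduce, for the \emph{true}-variance Bernstein bound, the geometric-slicing (peeling) argument used for Theorem~1 of \citet{audibert2009}. The single ingredient beyond bookkeeping is a \emph{maximal} exponential inequality for martingales with bounded increments — Freedman's inequality. Writing $\rd{S}_u\triangleq\rd{X}_1+\dots+\rd{X}_u$, this states that for every $m$ and every $s\geq 0$,
\[
\PP{\exists u\leq m:~\rd{S}_u\geq s~\text{and}~\textstyle\sum_{v\leq u}\mathbb{V}\pa{\rd{X}_v\mid\cF_{v-1}}\leq V}\leq \exp\pa{-\frac{s^2}{2\pa{V+s/3}}}.
\]
Because here each conditional variance is $\sigma^2$, the accumulated variation after $u$ steps is exactly $u\sigma^2$, so the variance side-condition is controlled as soon as $u$ is; and since Freedman's bound already concerns the running maximum, no further union over $u$ is needed \emph{inside} a slice. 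This is precisely why the peeling costs only a factor counting the slices rather than a factor $t$.

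First I would fix $\alpha\in(1,3]$ and partition $\sset{1,\dots,t}$ into the geometric slices $S_k\triangleq\sset{u:~\alpha^{k}\leq u<\alpha^{k+1}}$ for $k=0,1,\dots$; there are at most $\min\sset{\log t/\log\alpha,\,t}$ nonempty slices, the trivial cap by $t$ mattering only when $\alpha$ is close to $1$ (this is the source of the $\min$ in $\beta(t)$). Setting $L\triangleq\zeta\log t$, the event in the statement, rescaled to $\rd{S}_u$, is $\exists u\leq t:~\rd{S}_u>g(u)$ with $g(u)\triangleq\sqrt{2\sigma^2 L u}+L/3$. On a slice $S_k$ I would use monotonicity in opposite directions for the two roles of $u$: the threshold term $\sqrt{2\sigma^2 Lu}$ is increasing, so $g(u)\geq g(\alpha^{k})$, while the accumulated variance $u\sigma^2$ is at most $\alpha^{k+1}\sigma^2$. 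Hence $\sset{\exists u\in S_k:~\rd{S}_u>g(u)}$ is contained in the Freedman event at level $g(\alpha^k)$ with variance budget $\alpha^{k+1}\sigma^2$, giving a slice contribution of at most $\exp\pa{-g(\alpha^k)^2/\pa{2\pa{\alpha^{k+1}\sigma^2+g(\alpha^k)/3}}}$.

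The core of the argument is then to show that this slice-wise exponent is at least $L/\alpha$, uniformly in $k$ and in $\sigma^2\geq 0$, so that every slice contributes at most $e^{-L/\alpha}=t^{-\zeta/\alpha}$. The mechanism producing the $1/\alpha$ in the exponent is exactly the ratio $\alpha^{k+1}/\alpha^{k}=\alpha$ between the variance scale (right endpoint) and the threshold scale (left endpoint): the deviation parameter effectively drops from $L$ to $L/\alpha$ across a slice. A union bound over the $\leq\min\sset{\log t/\log\alpha,\,t}$ slices then yields
\[
\PP{\exists u\leq t:~\rd{\bar{x}}_u> \sqrt{2\sigma^2\zeta\log t\over u}+\frac{\zeta\log t}{3u}}\leq \min\sset{\frac{\log t}{\log\alpha},\,t}\,t^{-\zeta/\alpha},
\]
and taking the infimum over $\alpha\in(1,3]$ produces $\beta(t)$ exactly.

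The main obstacle is the uniform per-slice exponent verification, and it is genuinely delicate because of the bias-dominated (small-$\sigma^2$) regime. There the crude additive Bernstein inversion loses the right constant, and one must instead invoke the sharper Cramér/Bennett form of the per-step bound — matching the precise constant $1/3$ in the linear term — to guarantee the exponent stays above $L/\alpha$ for \emph{all} $\alpha$ in the admissible range; this, together with the restriction $\alpha\leq 3$, is what makes the slice bound hold simultaneously across the variance-dominated and bias-dominated regimes. This step is carried out verbatim in the proof of Theorem~1 of \citet{audibert2009}, and the present true-variance version is strictly simpler than their empirical-variance statement (Fact~\ref{thm:audibert}) because $\sigma^2$ is known and one may skip entirely the step that concentrates the empirical variance around $\sigma^2$. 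Everything else — the slicing, the slice count, and the final infimum over $\alpha$ — is routine once the per-slice inequality is in hand.
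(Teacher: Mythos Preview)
Your proposal is correct and matches the paper's own approach exactly. The paper does not give a self-contained proof of Fact~\ref{bernstein}; it simply notes, just before stating Facts~\ref{thm:audibert}--\ref{bernstein}, that all three ``use the same peeling argument for their proof as Theorem~1 of \citet{audibert2009} applied to original anytime inequalities.'' Your write-up is precisely that peeling argument spelled out --- geometric slicing with parameter $\alpha\in(1,3]$, a maximal Bernstein/Freedman bound on each slice, union over $\min\{\log t/\log\alpha,\,t\}$ slices, then optimization over $\alpha$ --- and you correctly flag that the per-slice exponent verification in the bias-dominated regime requires the Bennett form rather than the looser additive Bernstein inversion, deferring that computation to \citet{audibert2009} just as the paper does.
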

Before we dive into the proof of Theorem~\ref{banditregret}, 
we first state a lemma that gives a high-probability control on the error that is made when estimating~$\hider_i$.
\begin{lemma} For any $i\in [\arms],$ and $t\geq 1$, 
 $$\PP{\UCBw{i}{t}-\hider_i > \sqrt{8\zeta\sigma_i^2\log t \over\counter{i}{t-1}{\hidervectorm}} + 
 {13.3\zeta\log t \over\counter{i}{t-1}{\hidervectorm}}}\leq 2\beta(t).$$ 
 \label{lem:variance}
\end{lemma}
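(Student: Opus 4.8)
The plan is to bound the UCB overestimate $\UCBw{i}{t}-\hider_i$ by splitting into two regimes according to whether the empirical mean $\meanw{i}{t-1}$ already overestimates $\hider_i$ or not, and in each regime relate the empirical variance $\meanw{i}{t-1}(1-\meanw{i}{t-1})$ to the true variance $\sigma_i^2=\hider_i(1-\hider_i)$ up to the confidence radius. First I would fix $i$ and $t$, write $N\triangleq\counter{i}{t-1}{\hidervectorm}$ for brevity, and recall that since $\rhider_i$ is Bernoulli, $N\meanw{i}{t-1}$ is a sum of $N$ i.i.d.\ Bernoulli$(\hider_i)$ samples and the empirical variance of the sample coincides with $\meanw{i}{t-1}(1-\meanw{i}{t-1})$. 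The event in question is contained in the union of the event that the Bernstein deviation bound fails and the event that it holds but the confidence term is still large; I would apply Fact~\ref{bernstein} (Bernstein) to get that with probability at least $1-\beta(t)$ one has $\meanw{i}{t-1}-\hider_i\leq \sqrt{2\sigma_i^2\zeta\log t/N}+\zeta\log t/(3N)$, and I would also invoke Fact~\ref{thm:audibert} to control the empirical-variance term $\sqrt{2\meanw{i}{t-1}(1-\meanw{i}{t-1})\zeta\log t/N}$ relative to its true counterpart, costing another $3\beta(t)$ — here I must be slightly careful with constants to land exactly at the $2\beta(t)$ claimed, so the clean route is to use one two-sided empirical-Bernstein-type statement (as in Theorem~1 of \citet{audibert2009}) that already bundles the variance estimation into a single $2\beta(t)$ tail.

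The key algebraic step is a deterministic inequality: on the good event,
\[
\sqrt{\tfrac{2\meanw{i}{t-1}(1-\meanw{i}{t-1})\zeta\log t}{N}}\leq \sqrt{\tfrac{2\sigma_i^2\zeta\log t}{N}}+C\tfrac{\zeta\log t}{N}
\]
for an absolute constant $C$, obtained by writing $\meanw{i}{t-1}(1-\meanw{i}{t-1})\leq \sigma_i^2 + |\meanw{i}{t-1}-\hider_i|$ (using that $x\mapsto x(1-x)$ is $1$-Lipschitz on $[0,1]$), plugging in the Bernstein control on $|\meanw{i}{t-1}-\hider_i|$, and using $\sqrt{a+b}\leq\sqrt a+\sqrt b$ together with $\sqrt{bc/N}\leq (b+c)/(2N)$ type bounds to absorb cross terms. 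Combining this with the defining formula
\[
\UCBw{i}{t}\leq \meanw{i}{t-1}+\sqrt{\tfrac{2\meanw{i}{t-1}(1-\meanw{i}{t-1})\zeta\log t}{N}}+\tfrac{3\zeta\log t}{N}
\]
and the Bernstein bound on $\meanw{i}{t-1}-\hider_i$, all the $\sqrt{\sigma_i^2\zeta\log t/N}$ contributions collect into $\sqrt{8\zeta\sigma_i^2\log t/N}$ (two copies of $\sqrt 2$ plus slack, squared gives $8$) and all the $\zeta\log t/N$ contributions collect into $13.3\,\zeta\log t/N$, which is exactly the bookkeeping target; I would track the constants $3+1/3+2C+\dots$ and check they sum below $13.3$.

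The main obstacle I anticipate is precisely this constant-chasing: getting the variance-estimation error folded into the fast term with a coefficient small enough that the total fast-term coefficient stays under $13.3$, while simultaneously keeping the number of bad events at exactly two (so the tail is $2\beta(t)$ rather than $4\beta(t)$). The way around it is to not treat Bernstein and empirical-Bernstein as two separate applications but to use a single empirical-Bernstein tail (peeling argument of \citet{audibert2009}, Theorem~1, which is Fact~\ref{thm:audibert} here) that already has tail $3\beta(t)$ and bundles both — then only a mild additional Hoeffding-type or direct argument is needed, and the union is kept small; alternatively one notes $\beta(t)$ absorbs constant multiplicative factors up to a choice of $\alpha$, but since $\beta$ is fixed in the statement I would rather be careful. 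Once the deterministic inequality is in hand the proof is a one-line union bound, so essentially all the work is in that inequality.
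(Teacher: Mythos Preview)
Your plan has a genuine obstruction, and it is exactly the constant-chasing you flag but do not resolve. If you control the empirical variance through the $1$-Lipschitzness of $x\mapsto x(1-x)$, i.e.\ $\hat v\le \sigma_i^2+|\meanw{i}{t-1}-\hider_i|$, then the slow $\sigma_i$-term in the final bound receives at least three contributions: one $\sqrt{2}$ from Bernstein on $\meanw{i}{t-1}-\hider_i$, one $\sqrt{2}$ from the true-variance part of the UCB radius $\sqrt{2\zeta\sigma_i^2\log t/N}$, and a strictly positive extra from $\sqrt{2\zeta|\meanw{i}{t-1}-\hider_i|\log t/N}$ after you plug the Bernstein control back in. No Young/AM--GM trade-off can push that third contribution entirely into the fast $\zeta\log t/N$ term without making it negative, so you cannot reach the coefficient $\sqrt{8}=2\sqrt{2}$ on $\sqrt{\zeta\sigma_i^2\log t/N}$ that the lemma states. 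Your fallback to Fact~\ref{thm:audibert} does not help either: its tail is $3\beta(t)$, already exceeding the target $2\beta(t)$, and in any case it controls the \emph{other} direction (validity of the UCB as an upper bound), not the overshoot.

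The paper takes a different route that sidesteps both issues. Instead of Lipschitz, it uses the Bernoulli identity $\meanw{i}{t-1}(1-\meanw{i}{t-1})=\frac1N\sum_v(\rhider_{i,v}-\hider_i)^2-(\meanw{i}{t-1}-\hider_i)^2\le \frac1N\sum_v(\rhider_{i,v}-\hider_i)^2$ and applies Bernstein (Fact~\ref{bernstein}) \emph{directly to the sequence $(\rhider_{i,v}-\hider_i)^2$}, whose variance is also at most $\sigma_i^2$. This gives two one-sided Bernstein events, each with tail $\beta(t)$, hence $2\beta(t)$ total. The constants are then fixed by a self-bounding argument: one splits on whether $\hat v>\sigma_i^2+\delta/2$, and requires $\sqrt{2\zeta(\sigma_i^2+\delta/2)\log t/N}+3\zeta\log t/N\le \delta/2$; solving this quadratic in $\delta$ gives the largest root $(8+2\sqrt 7)\zeta\log t/N+\sqrt{8\zeta\sigma_i^2\log t/N}$, whence the $13.3$ and the $\sqrt 8$. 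The two ideas you are missing are (i) applying Bernstein to $(\rhider_{i,v}-\hider_i)^2$ rather than going through the mean, and (ii) the implicit/quadratic choice of the threshold $\delta$.
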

\begin{proof}
Let $i\in [\arms],$ and $t\geq 1$. We define\begin{align*}
\rd{r}&\triangleq{8\zeta\log t \over \counter{i}{t-1}{\hidervectorm} }+2\sqrt{\pa{\sqrt{7}\zeta\log t \over \counter{i}{t-1}{\hidervectorm} }^2+{2\zeta{\sigma_i^2 }\log t \over \counter{i}{t-1}{\hidervectorm}}}\CommaBin\\
\rd{\delta}&\triangleq\sqrt{8\zeta\sigma_i^2\log t \over\counter{i}{t-1}{\hidervectorm}} + 
 {13.3\zeta\log t \over\counter{i}{t-1}{\hidervectorm}}\CommaBin\quad \text{and}\\
  \epsilon\pa{u}&\triangleq\sqrt{\frac{8\sigma_i^2\zeta\log\pa{t}}{u}}+\frac{2\zeta\log t}{3u} \cdot
 \end{align*}
 We have that
 \begin{align*}
  &\PP{\UCBw{i}{t}-\hider_i> \rd{\delta}}\\&=\PP{ \min\sset{ \meanw{i}{t-1} + \sqrt{2\zeta\meanw{i}{t-1}(1-\meanw{i}{t-1})\log t \over \counter{i}{t-1}{\hidervectorm}} + 
{3\zeta\log t \over \counter{i}{t-1}{\hidervectorm} } \CommaBin 1} -\hider_i > \rd{\delta} } \\
  &\leq \PP{  \meanw{i}{t-1} + \sqrt{2\zeta\meanw{i}{t-1}(1-\meanw{i}{t-1})\log t \over \counter{i}{t-1}{\hidervectorm}} + 
{3\zeta\log t \over \counter{i}{t-1}{\hidervectorm} }  -\hider_i > \rd{\delta} } \\
& \leq \PP{ \meanw{i}{t-1} + \sqrt{2\zeta\pa{\sigma_i^2 + \rd{\delta}/2}\log t \over \counter{i}{t-1}{\hidervectorm}} + 
{3\zeta\log t \over \counter{i}{t-1}{\hidervectorm} }  -\hider_i > \rd{\delta} } \\&+ \PP{\meanw{i}{t-1}(1-\meanw{i}{t-1}) > \sigma_i^2 + \rd{\delta}/2 }\!
 .\end{align*}
The first term is bounded by $\PP{\meanw{i}{t-1}-\hider_i> \rd{\delta}/2}$, as a consequence of \[\sqrt{2\zeta\pa{\sigma_i^2 + \rd{\delta}/2}\log t \over \counter{i}{t-1}{\hidervectorm}} + 
{3\zeta\log t \over \counter{i}{t-1}{\hidervectorm} }\leq \rd{\delta}/2.\]
Indeed, this holds if $\rd{\delta}$ is greater than the greatest root of the following second-degree polynomial of variable $x$: \[x^2/4-{4\zeta\log t \over \counter{i}{t-1}{\hidervectorm} }x+\pa{3\zeta\log t \over \counter{i}{t-1}{\hidervectorm} }^2 - {2\zeta{\sigma_i^2 }\log t \over \counter{i}{t-1}{\hidervectorm}}\cdot\]
But this root is $\rd{r}$, which is upper bounded by $\delta$ using the subadditivity of the square root.

For the second term, since $\meanw{i}{t-1}(1-\meanw{i}{t-1})=\meanw{i}{t-1}-2\hider_i\meanw{i}{t-1} + {\hider_i}^2 -\pa{\hider_i-\meanw{i}{t-1}}^2\!\!,$ \[\PP{\meanw{i}{t-1}(1-\meanw{i}{t-1}) \geq \sigma_i^2 + \rd{\delta}/2 }\leq \PP{\meanw{i}{t-1}-2\hider_i\meanw{i}{t-1} + {\hider_i}^2  \geq \sigma_i^2 + \rd{\delta}/2 }\!.\]
Hence, $\PP{\UCBw{i}{t}-\hider_i> \rd{\delta}}$ is bounded by \begin{align}\nonumber
  & \PP{\meanw{i}{t-1}-\hider_i> \rd{\delta}/2} + \PP{\meanw{i}{t-1}-2\hider_i\meanw{i}{t-1} + {\hider_i}^2  > \sigma_i^2 + \rd{\delta}/2 }
  \\ \label{eq:eps<del}& \leq  \PP{\meanw{i}{t-1}-\hider_i>\frac{\epsilon\pa{\counter{i}{t-1}{\hidervectorm}}}{2} } + \PP{\meanw{i}{t-1}-2\hider_i\meanw{i}{t-1} + {\hider_i}^2  > \sigma_i^2 + \frac{\epsilon\pa{\counter{i}{t-1}{\hidervectorm}}}{2} } \\ &\leq 
  \PP{\exists u\leq t,~\frac{1}{u}\sum_{v=1}^u\rhider_{i,v}-\hider_i>\frac{\epsilon\pa{u}}{2} } + \PP{\exists u\leq t,~\frac{1}{u}\sum_{v=1}^u\pa{\rhider_{i,v}-\hider_i}^2- \sigma_i^2   >  \frac{\epsilon\pa{u}}{2} } 
   \nonumber\\ &\leq 2\beta(t), \nonumber
  \end{align}
  where \eqref{eq:eps<del} uses ${\epsilon\pa{\counter{i}{t-1}{\hidervectorm}}}\leq \delta$ and the last inequality uses Bernstein’s inequality (Fact~\ref{bernstein}) twice, noticing that \[\mathbb{V}\pa{\frac{1}{u}\sum_{v=1}^u\pa{\rhider_{i,v}-\hider_i}^2}\leq \sigma_i^2.\]
\end{proof}
\setcounter{scratchcounter}{\value{theorem}}\setcounter{theorem}{\the\numexpr\getrefnumber{banditregret}-1}\restaE*\setcounter{theorem}{\the\numexpr\value{scratchcounter}}
\begin{proof}[Proof of  Theorem~\ref{banditregret}] 

We start with showing a lower bound on the expected reward of any policy $\pi$, \begin{align}
\label{as}F_B(\pi)&\geq \sum_{t\geq 1}\EE{\II{\rd{B}_{t-1}\geq \arms}\be_{\rbs_t}\transpose\hidervector}
\\
\nonumber &= \sum_{t\geq 1}\EE{\II{\rd{B}_{t-1}\geq \arms,~\rbs_t\in \search^\star}\be_{\rbs_t}\transpose\hidervector}
+
\sum_{t\geq 1}\EE{\II{\rd{B}_{t-1}\geq \arms,~\rbs_t\notin \search^\star}\be_{\rbs_t}\transpose\hidervector}
\\
\nonumber&= {1\over \J^\star}\sum_{t\geq 1}\EE{\II{\rd{B}_{t-1}\geq \arms,~\rbs_t\in \search^\star}\left({\D(\rbs_t)}+(1-\be_{\rbs_t}\transpose\hidervector)\be_{\rbs_t}\transpose\costvector\right)}
\\&\quad+
\nonumber{1\over \J^\star}\sum_{t\geq 1}\EE{\II{\rd{B}_{t-1}\geq \arms,~\rbs_t\notin \search^\star}{\left({\D(\rbs_t)}+(1-\be_{\rbs_t}\transpose\hidervector)\be_{\rbs_t}\transpose\costvector\right)}}
\\&\quad-
\nonumber\sum_{t\geq 1}\EE{\II{\rd{B}_{t-1}\geq \arms,~\rbs_t\notin \search^\star}\Delta\pa{\rbs_t}}
\\&=\nonumber
{1\over \J^\star}\sum_{t\geq 1}\EE{\II{\rd{B}_{t-1}\geq \arms}{{\left({\D(\rbs_t)}+(1-\be_{\rbs_t}\transpose\hidervector)\be_{\rbs_t}\transpose\costvector\right)}}}
\\\nonumber&\quad-
\sum_{t\geq 1}\EE{\II{\rd{B}_{t-1}\geq \arms,~\rbs_t\notin \search^\star}\Delta\pa{\rbs_t}}
\\&\geq {B-\arms \over \J^\star} -
\label{17}\sum_{t\geq 1}\EE{\II{\rd{B}_{t-1}\geq \arms,~\rbs_t\notin \search^\star}\Delta\pa{\rbs_t}}\!,
\end{align}
with \eqref{as} obtained as \eqref{Btbis} and \eqref{same}, and \eqref{17} as  \eqref{deftB}.
Therefore, since $F^\star_B\leq (B+\arms)/\J^\star$ 
by Proposition~\ref{prop:approx}
, we have that
\begin{align}
R_B(\pi)-{2\arms\over \J^\star}&\leq 
 \sum_{t\geq 1}\EE{\II{\rd{B}_{t-1}\geq \arms,~\rbs_t\notin \search^\star}\Delta\pa{\rbs_t}} \leq \sum_{t\geq 1}\EE{\II{\rd{B}_{t}\geq 0}\Delta\pa{\rbs_t}}=\EE{\sum_{t=1}^{\rd \tau_B -1}\Delta\pa{\rbs_t}}\!\!.\nonumber
\end{align}
\subsection{Bound on $\gapss{\rbs_t}$ under high probability events}
Since
$\rbs_t$ minimizes $\obj{\J}{~\!\!\cdot~\!\!}{\vUCBw{t},\vUCBc{t}}^+$, then $\obj{\J}{\rbs_t}{\vUCBw{t},\vUCBc{t}}^+\leq \obj{\J}{\bs^\star}{\vUCBw{t},\vUCBc{t}}^+$. In the following equations, small changes between successive lines are highlighted in red.
 \begin{align*}
  \gapss{\rbs_t}&={1\over \J^\star}\left(\sum_{i=1}^{|\rbs_t|}\cost_{\rs_{i,t}}\left(1-\hidervector\transpose\be_{\rbs_t[i-1]}\right) - \J^\star\be_{\rbs_t}\transpose\hidervector\right)\\
  &={1\over \J^\star}\left(\sum_{i=1}^{|\rbs_t|}\cost_{\rs_{i,t}}\left(1-\hidervector\transpose\be_{\rbs_t[i-1]}\right) - \textcolor{red}{\J^\star\vUCBw{t}\transpose\be_{\rbs_t}}\right)
  +\textcolor{red}{\pa{\vUCBw{t}-\hidervector}\transpose\be_{\rbs_t}}
  \\
    &={1\over \J^\star}\left(\sum_{i=1}^{|\rbs_t|}\cost_{\rs_{i,t}}\left(1-\hidervector\transpose\be_{\rbs_t[i-1]}\right) - \textcolor{red}{\obj{\J}{\bs^\star}{\vUCBw{t},\vUCBc{t}}^+}\vUCBw{t}\transpose\be_{\rbs_t}\right)
    + \textcolor{red}{{\obj{\J}{\bs^\star}{\vUCBw{t},\vUCBc{t}}^+-\J^\star\over \J^\star}\vUCBw{t}\transpose\be_{\rbs_t}}
  +\pa{\vUCBw{t}-\hidervector}\transpose\be_{\rbs_t}
  \\
&\leq{1\over \J^\star}\left(\sum_{i=1}^{|\rbs_t|}\cost_{\rs_{i,t}}\left(1-\hidervector\transpose\be_{\rbs_t[i-1]}\right) - \textcolor{red}{\obj{\J}{\rbs_t}{\vUCBw{t},\vUCBc{t}}^+}\vUCBw{t}\transpose\be_{\rbs_t}\right)
    + {\obj{\J}{\bs^\star}{\vUCBw{t},\vUCBc{t}}^+-\J^\star\over \J^\star}\vUCBw{t}\transpose\be_{\rbs_t}
  +\pa{\vUCBw{t}-\hidervector}\transpose\be_{\rbs_t}
    \\
    &\leq{1\over \J^\star}\left(\sum_{i=1}^{|\rbs_t|}\cost_{\rs_{i,t}}\left(1-\hidervector\transpose\be_{\rbs_t[i-1]}\right) -\textcolor{red}{ \obj{\J}{\rbs_t}{\vUCBw{t},\vUCBc{t}}}\vUCBw{t}\transpose\be_{\rbs_t}\right)
    + {\obj{\J}{\bs^\star}{\vUCBw{t},\vUCBc{t}}^+-\J^\star\over \J^\star}\vUCBw{t}\transpose\be_{\rbs_t}
  +\pa{\vUCBw{t}-\hidervector}\transpose\be_{\rbs_t}
    \\
&={1\over \J^\star}\sum_{i=1}^{|\rbs_t|}\left(\cost_{\rs_{i,t}}\left(1-\hidervector\transpose\be_{\rbs_t[i-1]}\right) - 
   \UCBc{\rs_{i,t}}{t}\left(1-\vUCBw{t}\transpose\be_{\rbs_t[i-1]}\right)            \right)
    + {\obj{\J}{\bs^\star}{\vUCBw{t},\vUCBc{t}}^+-\J^\star\over \J^\star}\vUCBw{t}\transpose\be_{\rbs_t}
  +\pa{\vUCBw{t}-\hidervector}\transpose\be_{\rbs_t}
      \\
&={1\over \J^\star}\left(\sum_{i=1}^{|\rbs_t|}\left(\cost_{\rs_{i,t}}-\UCBc{\rs_{i,t}}{t}\right)\left(1-\hidervector\transpose\be_{\rbs_t[i-1]}\right) + 
   \sum_{i=1}^{|\rbs_t|}\UCBc{\rs_{i,t}}{t}\left(\vUCBw{t}-\hidervector\right) \transpose  \be_{\rbs_t[i-1]}         \right)
    \\&\quad + {\obj{\J}{\bs^\star}{\vUCBw{t},\vUCBc{t}}^+-\J^\star\over \J^\star}\vUCBw{t}\transpose\be_{\rbs_t}
  +\pa{\vUCBw{t}-\hidervector}\transpose\be_{\rbs_t}\\
&\leq{1\over \J^\star}\left(\sum_{i=1}^{|\rbs_t|}\left(\cost_{\rs_{i,t}}-\UCBc{\rs_{i,t}}{t}\right)\left(1-\hidervector\transpose\be_{\rbs_t[i-1]}\right) + 
   \textcolor{red}{(\arms+\J^\star)}\pa{\vUCBw{t}-\hidervector}\transpose\textcolor{red}{\be_{\rbs_t}}            \right)
    + {\obj{\J}{\bs^\star}{\vUCBw{t},\vUCBc{t}}^+-\J^\star\over \J^\star}\vUCBw{t}\transpose\be_{\rbs_t}
   \\ &= \Delta_{\costvectorm}\!\pa{\rbs_t}+\Delta_{\hidervectorm}\!\pa{\rbs_t}+ {\obj{\J}{\bs^\star}{\vUCBw{t},\vUCBc{t}}^+-\J^\star\over \J^\star}\vUCBw{t}\transpose\be_{\rbs_t},
 \end{align*}
where
 $$\Delta_{\costvectorm}\!\pa{\rbs_t}\triangleq{1\over \J^\star}\sum_{i=1}^{|\rbs_t|}\left(\cost_{s_{i,t}}-
 \UCBc{\rs_{i,t}}{t}\right)\left(1-\hidervector\transpose\be_{\rbs_t[i-1]}\right)
 $$
 and 
  $$\Delta_{\hidervectorm}\!\pa{\rbs_t}\triangleq{\arms+\J^\star\over \J^\star}\pa{\vUCBw{t}-\hidervector}\transpose\be_{\rbs_t}\!.
 $$
\noindent
 For all $t\geq 1,$ we define the event
 \[\mathfrak{M}_t\triangleq\sset{\vUCBw{t}\geq \hidervector,\vUCBc{t}\leq \costvector }\!,\]
 under which ${\obj{\J}{\bs^\star}{\vUCBw{t},\vUCBc{t}}^+\leq\J^\star}$: indeed, we can first use $\vUCBc{t}\leq \costvector$ to write $\obj{\J}{\bs^\star}{\hidervector,\vUCBc{t}}\leq\obj{\J}{\bs^\star}{\hidervector,\costvector}=\J^\star$ because $\hidervector$ belongs to the simplex (thus, ${1-\be_{\bs[i-1]}\transpose\hidervector}\geq 0$ for all $i$). Then, using $\vUCBc{t}\geq 0$ with $\vUCBw{t}\geq \hidervector$, we can write $\obj{\J}{\bs^\star}{\vUCBw{t},\vUCBc{t}}\leq\obj{\J}{\bs^\star}{\hidervector,\vUCBc{t}}$. The result follows since $x\mapsto x^+$ is non-deacreasing on $\R$. Therefore, under $\mathfrak{M}_t$, 
 \[\gapss{\rbs_t}\leq \Delta_{\costvectorm}\!\pa{\rbs_t}+\Delta_{\hidervectorm}\!\pa{\rbs_t}. \]
 We define 
  \[\mathfrak{A}_t\triangleq\sset{\forall i\in \rbs_t, {13.3\zeta\log t \over\counter{i}{t-1}{\hidervectorm}} \leq {J^\star\gapss{\rbs_t}\over 2\arms(\arms+\J^\star)}}\!\CommaBin\]
 $$\mathfrak{N}_t\triangleq\left\{\forall i \in \rbs_t,~ 
 {\cost_i-\UCBc{i}{t}}\leq \sqrt{2\zeta\log t \over \counter{i}{t-1}{\costvectorm}}\text{ and }\UCBw{i}{t}-\hider_i\leq \sqrt{8\zeta\sigma_i^2\log t \over\counter{i}{t-1}{\hidervectorm}} + 
 {13.3\zeta\log t \over\counter{i}{t-1}{\hidervectorm}}\right\}\cdot$$
Under events $\mathfrak{A}_t,\mathfrak{M}_t,\mathfrak{N}_t $, we can write
\begin{align}\nonumber\gapss{\rbs_t} &\leq  \Delta_{\hidervectorm}\!\pa{\rbs_t} +  \Delta_{\costvectorm}\!\pa{\rbs_t}
 \\\nonumber
 &\leq -\gapss{\rbs_t} + 2\Delta_{\hidervectorm}\!\pa{\rbs_t} +  2\Delta_{\costvectorm}\!\pa{\rbs_t}
 \\\nonumber
 &=   {2\over \J^\star}\sum_{i\in \rbs_t}(\arms+\J^\star)\cdot\left(\UCBw{i}{t}-\hider_{i}-{J^\star\gapss{\rbs_t}\over 2\abs{\rbs_t}(\arms+\J^\star)}\right)+2\Delta_{\costvectorm}\!\pa{\rbs_t}\\\label{eq:eventn}
 &\leq {2\over \J^\star}\sum_{i\in \rbs_t}(\arms+\J^\star)\cdot
 \min\sset{\sqrt{8\zeta\sigma_i^2\log t \over\counter{i}{t-1}{\hidervectorm}} + 
 {13.3\zeta\log t \over\counter{i}{t-1}{\hidervectorm}}-{J^\star\gapss{\rbs_t}\over 2\arms(\arms+\J^\star)}\CommaBin1}+2\Delta_{\costvectorm}\!\pa{\rbs_t}\\ 
 &\leq {2\over \J^\star}\sum_{i\in \rbs_t}(\arms+\J^\star)\cdot
 \min\sset{\sqrt{8\zeta\sigma_i^2\log t \over\counter{i}{t-1}{\hidervectorm}} \CommaBin1}\nonumber
 \\&\quad+
 {2\over \J^\star}\sum_{i\in[\abs{\rbs_t}]}\min\sset{\sqrt{2\zeta\log t \over \counter{\rs_{i,t}}{t-1}{\costvectorm}},1} \left(1-\hidervector\transpose\be_{\rbs_t[i-1]}\right)\!.\label{eq:eventa}
\end{align}
Where \eqref{eq:eventn} uses event $\mathfrak{N}_t$ and $\abs{\bs_t}\leq \arms$, \eqref{eq:eventa} uses event $\mathfrak{A}_t$.
\subsection{Use of \citet{wang2017improving} results}

From this point, since $\left(1-\hidervector\transpose\be_{\rbs_t[i-1]}\right)$ is the probability of getting cost feedback from arm $i$ at round $t$, the analysis given by Theorem~1 of \cite{wang2017improving} takes care of the second term, while the analysis of their Theorem 4 takes care of the first. We restate their results in Theorem~\ref{wang2} and~\ref{wang1}, respectively. We want to use these results with $\mathfrak{B}_t$ being the intersection of events $\mathfrak{A}_t,\mathfrak{M}_t,\mathfrak{N}_t $, and with $M_i=\Delta_{i,\min}$.
On the one hand, we apply second result of each theorem, for the first with 
\[\lambda={2(\arms+\J^\star)\over \J^\star}~\text{and for all }i\in[\arms],~\lip_i=2\sqrt{\frac{\zeta\sigma_i^2}{{3}}}\]
and for the second with
\[\lambda={2\over \J^\star}~\text{and for all }i\in[\arms],~\lip_i=\sqrt{\frac{\zeta}{3}}\cdot\]
We thus get, using that $\sum_{i\in[\arms]}\sigma_i^2\leq1,$ 

\begin{align*}\EE{\sum_{t=1}^{\rd\tau_B -1}\gapss{s_t}\II{\mathfrak{A}_t,\mathfrak{M}_t,\mathfrak{N}_t}}&\leq \frac{2}{\J^\star}
~\E\pa{\sum_{t=1}^{\rd\tau_B -1}\pa{\sizecorr{ \sum_{i\in[\abs{\rbs_t}]}\min\sset{\sqrt{2\zeta\log t \over \counter{\rs_{i,t}}{t-1}{\costvectorm}},1} \pa{1-\hidervector\transpose\be_{\rbs_t[i-1]}}}\sum_{i\in \rbs_t}(\arms+\J^\star)\cdot
 \min\sset{\sqrt{8\zeta\sigma_i^2\log t \over\counter{i}{t-1}{\hidervectorm}} \CommaBin1} 
 \right.\right.
\\&\quad+
\left.\left.
 \sum_{i\in[\abs{\rbs_t}]}\min\sset{\sqrt{2\zeta\log t \over \counter{\rs_{i,t}}{t-1}{\costvectorm}},1} \pa{1-\hidervector\transpose\be_{\rbs_t[i-1]}}
 }}
\\&\leq
{\sqrt{\zeta} \over {\J^\star}}\pa{ 32.4(\J^\star+\arms)
 +13.9\sqrt{\arms}}\EE{\sqrt{\arms\pa{\tau_B-1}\log\pa{\rd\tau_B-1}} }
 \\&\quad+
{\pi^2\arms^2\over 3\J^\star}\E \ceil{\log_2\left({{\rd\tau_B-1}\over 18 \log\pa{{\rd\tau_B-1}} 
}\right)}^++{4\arms\pa{1 + \arms+\J^\star}\over \J^\star}\cdot
\end{align*}

\noindent On the other hand, we can multiply \eqref{eq:eventa} by $4$, to get that $2\gapss{\rbs_t}$ is bounded by $A_t+B_t$, where \[ A_t={8(\arms+\J^\star)\over \J^\star}\sum_{i\in \rbs_t}
 \min\sset{\sqrt{8\zeta\sigma_i^2\log t \over\counter{i}{t-1}{\hidervectorm}} ,1}-\sup_{i\in \rbs_t}\Delta_{i,\min}\]
 and
 \[B_t=
 {8\over \J^\star}\sum_{i\in[\abs{\rbs_t}]}\min\sset{\sqrt{2\zeta\log t \over \counter{\rs_{i,t}}{t-1}{\costvectorm}},1} \left(1-\hidervector\transpose\be_{\rbs_t[i-1]}\right)- \sup_{i\in \rbs_t}\Delta_{i,\min}.\]
\noindent
 We then apply first result of each theorem. Theorem~\ref{wang1} is applied to $A_t$ with
\[\lambda={4(\arms+\J^\star)\over \J^\star}~\text{and for all }i\in[\arms],~\lip_i=2\sqrt{\frac{\zeta\sigma_i^2}{{3}}}\]
and  Theorem~\ref{wang2} is applied to $B_t$ with
\[\lambda={4\over \J^\star}~\text{and for all }i\in[\arms],~\lip_i=\sqrt{\frac{\zeta}{3}}\cdot\]
We thus finally get
 \begin{align*}\EE{\sum_{t=1}^{{\rd\tau_B-1}}\gapss{s_t}\II{\mathfrak{A}_t,\mathfrak{M}_t,\mathfrak{N}_t}}
 =& 
 \frac{1}{2}\EE{\sum_{t=1}^{{\rd\tau_B-1}}2\gapss{s_t}\II{\mathfrak{A}_t,\mathfrak{M}_t,\mathfrak{N}_t}}
 \\\leq&
 \frac{1}{2}\EE{\sum_{t=1}^{{\rd\tau_B-1}}\pa{A_t+B_t}\II{\mathfrak{A}_t,\mathfrak{M}_t,\mathfrak{N}_t}}
 \\\leq& 
{1\over {\J^\star}^2}\sum_{i\in [\arms]}\left({512\zeta\sigma_i^2\arms(\J^\star+\arms)^2\over\Delta_{i,\min} }+{1536\zeta\arms \over \Delta_{i,\min}}  
 \right)\EE{\log({\rd\tau_B-1})} \\&\quad+
{\pi^2\arms\over 3\J^\star}\sum_{i\in [\arms]} \ceil{\log_2\left({8\arms\over \J^\star
\Delta_{i,\min}}\right)}^++{8\arms + 4\arms(\arms+\J^\star)\over \J^\star} \cdot
\end{align*}

\subsection{Regret bound from low probability events} 
 
 Here, we are going to bound the regret under the event $\neg \pa{\mathfrak{A}_t\cap\mathfrak{M}_t\cap\mathfrak{N}_t}$. 

 \noindent By Hoeffding's inequality (Fact~\ref{hoeffding}), and Theorem~1 of \citet[Fact~\ref{thm:audibert}]{audibert2009}, we have that $\mathfrak{M}_t$ holds with probability at least $1-4\arms\beta(t)$. $\mathfrak{N}_t$ holds with probability at least $1-3\arms\beta(t)$ by Hoeffding's inequality (Fact~\ref{hoeffding}), and Lemma~\ref{lem:variance}. Thus, since $\gapss{\rbs_t}\leq \nicefrac{\arms}{\J^\star},$

 \begin{align*}\EE{\sum_{t=1}^{\rd\tau_B-1}\gapss{\rbs_t}\pa{\II{\neg \mathfrak{N}_t}+\II{\neg \mathfrak{M}_t}}}&\leq  \frac{7\arms^2}{\J^\star}\sum_{t>0}\beta(t).\end{align*}
 By tedious computations, $\sum_{t>0}\beta(t)$ can be bounded by $786$.
 
 The upper bound under event $\neg \mathfrak{A}_t$ uses the following proposition~\ref{lemma5chen}.
 \begin{proposition}
 Let $\pa{\ell_t}_t$ be an increasing sequence of decreasing functions. For any fixed arm $i\in [\arms]$, we define $n_i$ as the number of searches that contains arm $i$. Let $\Delta_{i,1}\geq \dots\geq \Delta_{i,n_i}$ be the gaps of these actions.
 For any random horizon $\tau$, we have
 $$\sum_{t\in [\tau]}\gapss{\rbs_t}\II{i\in \bs_t, \counter{i}{t-1}{\hidervectorm}\leq \ell_t\pa{\gapss{\rbs_t}} }\leq \ell_\tau\pa{\Delta_{i,1}}\Delta_{i,1} + \int_{\Delta_{i,n_i}}^{\Delta_{i,1}}\ell_\tau\pa{x}dx .$$
  \label{lemma5chen}
 \end{proposition}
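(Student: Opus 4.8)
The plan is to prove Proposition~\ref{lemma5chen} by a careful partition of time according to which ``level'' the counter $\counter{i}{t-1}{\hidervectorm}$ sits in, a standard device in the analysis of combinatorial bandits (it is essentially Lemma~5 of \citet{chen2015combinatorial} lifted to a random horizon $\tau$). First I would fix the arm $i$ and enumerate, in decreasing order, the distinct gap values $\Delta_{i,1}\geq\dots\geq\Delta_{i,n_i}$ of the $n_i$ searches that contain $i$, so that any term $\gapss{\rbs_t}\II{i\in\rbs_t}$ equals $\Delta_{i,k}$ for exactly one $k$. On the event $\{i\in\rbs_t,\ \counter{i}{t-1}{\hidervectorm}\leq\ell_t(\gapss{\rbs_t})\}$ the counter is at most $\ell_t(\Delta_{i,k})\leq\ell_\tau(\Delta_{i,k})$ since $(\ell_t)_t$ is increasing in $t$; so within each ``block'' of rounds on which $\gapss{\rbs_t}$ takes the fixed value $\Delta_{i,k}$, the event can hold on at most $\ell_\tau(\Delta_{i,k})$ distinct rounds (each occurrence of $i\in\rbs_t$ increments $\counter{i}{t-1}{\hidervectorm}$ by one, and once this counter exceeds $\ell_\tau(\Delta_{i,k})$ the indicator vanishes for that gap level).

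Next I would sum the contributions across levels. Reindexing so that the counter threshold at which level $k$ gets ``retired'' is monotone, one obtains the telescoping bound
\[
\sum_{t\in[\tau]}\gapss{\rbs_t}\II{i\in\rbs_t,\ \counter{i}{t-1}{\hidervectorm}\leq\ell_t(\gapss{\rbs_t})}
\ \leq\ \ell_\tau(\Delta_{i,1})\,\Delta_{i,1}
+\sum_{k=2}^{n_i}\bigl(\ell_\tau(\Delta_{i,k})-\ell_\tau(\Delta_{i,k-1})\bigr)\,\Delta_{i,k}.
\]
The first term accounts for the largest-gap level. The remaining sum is an Abel/summation-by-parts expression: since $\ell_\tau$ is decreasing, $\ell_\tau(\Delta_{i,k})-\ell_\tau(\Delta_{i,k-1})\geq 0$ for $\Delta_{i,k}\leq\Delta_{i,k-1}$, and the sum is precisely a lower Riemann sum for $\int_{\Delta_{i,n_i}}^{\Delta_{i,1}}\ell_\tau(x)\,dx$ — indeed on the interval $[\Delta_{i,k},\Delta_{i,k-1}]$ one has $\ell_\tau(x)\geq\ell_\tau(\Delta_{i,k-1})$ is the wrong direction, so more carefully one writes $\Delta_{i,k}\bigl(\ell_\tau(\Delta_{i,k})-\ell_\tau(\Delta_{i,k-1})\bigr)=\int_{\Delta_{i,k}}^{\Delta_{i,k-1}}\Delta_{i,k}\,d(-\ell_\tau)(x)\leq\int_{\Delta_{i,k}}^{\Delta_{i,k-1}}x\,d(-\ell_\tau)(x)$, and then integration by parts on $\int x\,d(-\ell_\tau)$ over $[\Delta_{i,n_i},\Delta_{i,1}]$ converts it to $\int\ell_\tau(x)\,dx$ up to the boundary term already isolated. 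Assembling these pieces yields exactly the claimed inequality.

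The main obstacle I anticipate is purely bookkeeping rather than conceptual: making the level-partition argument rigorous when the gap values $\gapss{\rbs_t}$ are not themselves monotone in $t$ (the algorithm may switch between sub-optimal searches of varying gaps), and ensuring the monotonicity hypotheses on $(\ell_t)_t$ are used correctly — ``increasing sequence'' must be read as $t\mapsto\ell_t(x)$ nondecreasing for each fixed $x$, while ``decreasing functions'' means each $\ell_t$ is nonincreasing in its argument. One must be mildly careful that the counter $\counter{i}{t-1}{\hidervectorm}$ increments by one on every round with $i\in\rbs_t$ regardless of the value of $\gapss{\rbs_t}$, so rounds at a larger gap level do ``use up'' the budget available to smaller gap levels; this is what makes the differenced form $\ell_\tau(\Delta_{i,k})-\ell_\tau(\Delta_{i,k-1})$ (rather than $\ell_\tau(\Delta_{i,k})$) correct. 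Since $\tau$ appears only through $\ell_\tau(\cdot)$ and the bound is deterministic given the realized trajectory, no measurability subtlety arises — the random-horizon version follows by applying the deterministic inequality pathwise with the realized value of $\tau$.
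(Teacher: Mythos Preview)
Your approach coincides with the paper's: use $\ell_t\le\ell_\tau$ to freeze the threshold, partition rounds with $i\in\rbs_t$ according to which interval $\bigl(\ell_\tau(\Delta_{i,j-1}),\ell_\tau(\Delta_{i,j})\bigr]$ the counter $\counter{i}{t-1}{\hidervectorm}$ occupies (the paper sets $\Delta_{i,0}=\infty$ and writes this chain of indicator identities out explicitly), bound the gap on the $j$-th interval by $\Delta_{i,j}$, and arrive at the same telescoping sum
\[
\ell_\tau(\Delta_{i,1})\Delta_{i,1}+\sum_{j\ge2}\Delta_{i,j}\bigl(\ell_\tau(\Delta_{i,j})-\ell_\tau(\Delta_{i,j-1})\bigr).
\]
Your ``budget-sharing'' description is an informal rendering of exactly this partition.

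There is one concrete slip in your final step. Integration by parts on $\int_{\Delta_{i,n_i}}^{\Delta_{i,1}}x\,d(-\ell_\tau)(x)$ produces boundary terms $-\Delta_{i,1}\ell_\tau(\Delta_{i,1})+\Delta_{i,n_i}\ell_\tau(\Delta_{i,n_i})$, so after adding back the isolated $\ell_\tau(\Delta_{i,1})\Delta_{i,1}$ you are left with $\Delta_{i,n_i}\ell_\tau(\Delta_{i,n_i})+\int\ell_\tau$, \emph{not} the stated $\ell_\tau(\Delta_{i,1})\Delta_{i,1}+\int\ell_\tau$. These two expressions agree precisely when $x\mapsto x\,\ell_\tau(x)$ is constant, which holds for the only threshold actually used downstream, $\ell_t(x)=C\log(t)/x$. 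The paper's own last line --- passing directly from the telescoping sum to $\int\ell_\tau$ --- tacitly relies on the same restriction (the step fails, for instance, for $\ell_\tau(x)=e^{-x}$ with $\Delta_{i,1}=10$, $\Delta_{i,2}=9$), so this is a wrinkle in the proposition's stated generality rather than a flaw specific to your route.
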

\begin{proof}
We let $\Delta_{i,0}=\infty.$ In the following equations, small changes between successive lines are highlighted in red. \begin{align*}&\sum_{t\in [\tau]}\gapss{\rbs_t}\II{i\in \bs_t, \counter{i}{t-1}{\hidervectorm}\leq \ell_t\pa{\gapss{\rbs_t}} }
\\&\leq
\sum_{t\in [\tau]}\gapss{\rbs_t}\II{i\in \bs_t, \counter{i}{t-1}{\hidervectorm}\leq\textcolor{red}{ \ell_\tau}\pa{\gapss{\rbs_t}} }
\\&=
\sum_{t\in [\tau]}\textcolor{red}{\sum_{k\in[n_i]}\Delta_{i,k}}\II{i\in \bs_t, \counter{i}{t-1}{\hidervectorm}\leq \ell_\tau\pa{\textcolor{red}{\Delta_{i,k}}}, \textcolor{red}{\gapss{\rbs_t}=\Delta_{i,k}} }
\\&=
\sum_{t\in [\tau]}\sum_{k\in[n_i]}\textcolor{red}{\sum_{j\in [k]}}\Delta_{i,k}\II{i\in \bs_t,\textcolor{red}{\ell_\tau\pa{\Delta_{i,j-1}} <\counter{i}{t-1}{\hidervectorm} \leq \ell_\tau\pa{\Delta_{i,j}}}, \gapss{\rbs_t}=\Delta_{i,k} }
\\&\leq
\sum_{t\in [\tau]}\sum_{k\in[n_i]}{\sum_{j\in [k]}}\textcolor{red}{\Delta_{i,j}}\II{i\in \bs_t,{\ell_\tau\pa{\Delta_{i,j-1}} <\counter{i}{t-1}{\hidervectorm} \leq \ell_\tau\pa{\Delta_{i,j}}}, \gapss{\rbs_t}=\Delta_{i,k} }
\\&\leq
\sum_{t\in [\tau]}\sum_{k\in[n_i]}\textcolor{red}{\sum_{j\in [n_i]}}{\Delta_{i,j}}\II{i\in \bs_t,{\ell_\tau\pa{\Delta_{i,j-1}} <\counter{i}{t-1}{\hidervectorm} \leq \ell_\tau\pa{\Delta_{i,j}}}, \gapss{\rbs_t}=\Delta_{i,k} }
\\&\leq
\sum_{t\in [\tau]}{\sum_{j\in [n_i]}}{\Delta_{i,j}}\II{i\in \bs_t,{\ell_\tau\pa{\Delta_{i,j-1}} <\counter{i}{t-1}{\hidervectorm} \leq \ell_\tau\pa{\Delta_{i,j}}}, \textcolor{red}{\gapss{\rbs_t}>0} }
\\&\leq
\ell_\tau\pa{\Delta_{i,1}}\Delta_{i,1} + \sum_{j=2}^{ n_i}\Delta_{i,j}\pa{\ell_\tau\pa{\Delta_{i,j}}-\ell_\tau\pa{\Delta_{i,j-1}}}
\\&\leq
\ell_\tau\pa{\Delta_{i,1}}\Delta_{i,1} + \int_{\Delta_{i,n_i}}^{\Delta_{i,1}}\ell_\tau\pa{x}dx .
\end{align*}
\end{proof}
According to event $\mathfrak{A}_t$, we want to use Proposition~\ref{lemma5chen} with \begin{align}\label{defell}\ell_t(x)\triangleq\frac{26.6\zeta\log(t)\arms\pa{\arms+\J^\star}}{\J^\star x}\cdot\end{align}
This gives
\begin{align*}
 \sum_{t=1}^{\rd\tau_B-1}\gapss{\rbs_t}{\II{\neg \mathfrak{A}_t}}&\leq
 \sum_{i\in [\arms]}\ell_{\tau_B-1}\pa{\Delta_{i,1}}\Delta_{i,1} + \int_{\Delta_{i,n_i}}^{\Delta_{i,1}}\ell_{\tau_B-1}\pa{x}dx
 \\&\leq
\frac{26.6\zeta\arms\pa{\arms+\J^\star}}{\J^\star }\sum_{i\in[\arms]}\pa{1+\log\pa{\frac{\arms}{\J^\star\Delta_{i,\min}}}}\log(\tau_B-1).
\end{align*}
Notice, this bound depends on the gap $\Delta_{i,\min}$. However, for the gap-free upper bound, this term is negligible:  \begin{align*}\forall\Delta>0,\quad\sum_{t=1}^{\tau_B-1}\II{\neg \mathfrak{A}_t}\gapss{\bs_t}\leq& \sum_{t=1}^{\tau_B-1}\II{\neg \mathfrak{A}_t,~\gapss{\bs_t}\leq \Delta}\Delta \\&+\frac{26.6\zeta\arms\pa{\arms+\J^\star}}{\J^\star }\sum_{i\in[\arms]}\pa{1+\log\pa{\frac{\arms}{\J^\star\Delta}}}\log(\tau_B-1)\\\leq&\pa{\tau_B-1}\Delta+\frac{26.6\zeta\arms\pa{\arms+\J^\star}}{\J^\star }\sum_{i\in[\arms]}\pa{1+\log\pa{\frac{\arms}{\J^\star\Delta}}}\log(\tau_B-1).\end{align*}
Taking $\Delta=\pa{\tau_B-1}^{-1}$ gives a term growing as $\log^2(\tau_B-1)$, negligible compared to $\sqrt{\tau_B-1}\log(\tau_B-1)$.
\subsection{Control on the random time horizon}

The know techniques to tackle the random horizon in regret upper bounds \citep{pmlr-v45-Xia15,xia2016budgeted} are based on the following variant of Hoeffding's inequality, in order to bound $\rd \tau_B$ by $T_B$ with high probability.
\begin{fact}[\citealp{hoeffding1963probability, flajolet2015}]\label{hoeffdingbis}
Let $\rd{X}_1, \dots , \rd{X}_t$ be the random variables with common support $[0, 1]$ and such that there exists $a\in \R$ with $\forall u\in [t],~\EE{\rd{X}_u|\rd{X}_1,\dots ,\rd{X}_{u-1}}\geq a$. Let $\rd{\bar{x}}_t\triangleq{1\over t}(\rd{X}_1+ \dots + \rd{X}_t)$, then \[\forall \varepsilon\geq 0~,~\PP{\rd{\bar{x}}_t-a \leq -\varepsilon}\leq e^{-2\varepsilon^2t}.\]
\end{fact}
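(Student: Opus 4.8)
The plan is to recognize Fact~\ref{hoeffdingbis} as a one-sided Azuma--Hoeffding bound and to derive it from the classical argument via a martingale reduction. First I would introduce the natural filtration $\cF_u \triangleq \sigma(X_1,\dots,X_u)$, set $m_u \triangleq \EEcc{X_u}{\cF_{u-1}}$ (so that $m_u \ge a$ by the hypothesis), and define the centered increments $Y_u \triangleq X_u - m_u$. Then $(Y_u)$ is a martingale difference sequence for $(\cF_u)$, and conditionally on $\cF_{u-1}$ the variable $Y_u$ takes values in the interval $[-m_u,\,1-m_u]$, which has length exactly $1$ since $X_u\in[0,1]$.

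The second step is the reduction to a pure martingale tail. Because $m_u - a \ge 0$ for every $u$,
\[
\bar{x}_t - a = \frac1t\sum_{u=1}^t Y_u + \frac1t\sum_{u=1}^t (m_u - a) \ge \frac1t\sum_{u=1}^t Y_u,
\]
hence $\{\bar{x}_t - a \le -\varepsilon\} \subseteq \bigl\{\textstyle\sum_{u=1}^t Y_u \le -\varepsilon t\bigr\}$, and it suffices to bound $\PP{\sum_{u=1}^t Y_u \le -\varepsilon t}$.

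The third step is the exponential (Chernoff) method with conditioning. For $\lambda>0$, peeling off the last term and using $\cF_{t-1}$-measurability gives
\[
\EE{e^{-\lambda\sum_{u=1}^t Y_u}} = \EE{e^{-\lambda\sum_{u=1}^{t-1} Y_u}\,\EEcc{e^{-\lambda Y_t}}{\cF_{t-1}}},
\]
and Hoeffding's lemma applied conditionally (to a centered random variable supported in an interval of width $1$) yields $\EEcc{e^{-\lambda Y_t}}{\cF_{t-1}} \le e^{\lambda^2/8}$. Iterating down to $u=1$ gives $\EE{e^{-\lambda\sum_{u=1}^t Y_u}} \le e^{t\lambda^2/8}$, so by Markov's inequality $\PP{\sum_{u=1}^t Y_u \le -\varepsilon t} \le e^{-\lambda\varepsilon t + t\lambda^2/8}$; optimizing over $\lambda$ (the minimizer is $\lambda = 4\varepsilon$) produces the claimed bound $e^{-2\varepsilon^2 t}$. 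The case $\varepsilon=0$ is trivial since the right-hand side is $1$.

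There is no real obstacle here, as the argument is entirely standard; the only points requiring care are (i) the reduction step that discards the nonnegative drift $\sum_u(m_u-a)$, which is precisely where the ``$\ge a$'' hypothesis (rather than an equality) enters and makes this a supermartingale-type statement, and (ii) verifying that the conditional range of $Y_u$ is $1$ and not $2$, so that Hoeffding's lemma contributes the constant $\lambda^2/8$ and the final exponent is $2\varepsilon^2 t$ rather than a weaker $\varepsilon^2 t/2$.
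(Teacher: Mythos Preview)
Your argument is correct and entirely standard: the martingale reduction, the drift-discarding step, and the conditional Hoeffding lemma are all applied properly, yielding the stated constant $2$ in the exponent. Note, however, that the paper does not supply its own proof of Fact~\ref{hoeffdingbis}; it simply quotes it from \citet{hoeffding1963probability} and \citet{flajolet2015} and uses it as a black box (in the bound on $\EE{\rd\tau_B}$, inequality~\eqref{flajolet}). So there is no alternative argument to compare against --- your derivation is the standard one and suffices.
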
 
Indeed, this would decompose $\EE{\sum_{t=1}^{\rd \tau_B -1}\Delta\pa{\rbs_t}}$ into a term with deterministic horizon $\EE{\sum_{t=1}^{T_B}\Delta\pa{\rbs_t}}$, and another of order $e^{-\costm_{\min}B}/ {\costm^2_{\min}}.$ Although the second term decreases exponentially fast to $0$ when $B\to \infty$, the dependence on $\nicefrac{1}{{\costm^2_{\min}}}$ is undesirable and artificial. Here, we rather keep the random horizon inside expected regret upper bounds on $\EE{\sum_{t=1}^{\rd \tau_B -1}\Delta\pa{\rbs_t}}$, as we did above. Indeed, these bounds are quantities having a factor $\EE{f\pa{\rd \tau_B-1}}$, $f$ being an increasing concave function. Thus, we have, with Jensen's inequality, upper bounds with a factor $f\pa{\EE{\rd \tau_B-1}}$. We provide in the following a control on $\EE{\rd \tau_B-1}$.
\begin{align}
\nonumber\EE{\rd\tau_B}=1+\EE{\sum_{t\geq 1}\II{\rd{B}_{t}\geq 0}}&=1+
\sum_{t\geq 1}\PP{{B-t\costm_{\min}+t\costm_{\min}\geq \sum_{u=1}^t\rcostvector_u\transpose\be_{\rs_u[\rhidervector_u]}}}\\
&\leq
T_B+1+\sum_{t\geq T_B+1}\exp\left({{-2(B-t\costm_{\min})^2\over t}}\right) 
\label{flajolet}\\\label{poly}
&\leq
T_B+1+\sum_{t\geq T_B+1}\exp\left({{-{\costm^2_{\min}} t\over 2}}\right)\\\label{serie}
 &\leq 
 T_B+1+{2\over {\costm^2_{\min}}}\exp\left({{{{\costm^2_{\min}}\over 2}-\frac{{\costm^2_{\min}}(T_B+1)}{2}}}\right) \\&\leq 
T_B+1+{2\over  {\costm^2_{\min}}}\exp\left({{-{\costm_{\min}B}}}\right)\!,\label{bound_on_tau}
\end{align}
where \eqref{flajolet} makes use of Fact~\ref{hoeffdingbis}, \eqref{poly} is obtained because $2(B-t\costm_{\min})^2\geq {{\costm^2_{\min}} t^2/ 2}$ for $t\geq {2B/ \costm_{\min}}$ 
and we get \eqref{serie}  since $1/( 1-e^{-{\costm^2_{\min}}/2})\leq 2e^{{\costm^2_{\min}}/2}/{\costm^2_{\min}}$. 

Notice, the gain using Jensen's inequality for an upper bound with $f(x)=\sqrt{x}\log(x)$ (i.e., when we want a gap-free upper bound on the regret) is a change from a term of order $\nicefrac{1}{{\costm^2_{\min}}}$ to a term of order $\nicefrac{1}{{\costm_{\min}}}$. The gain is more
relevant for $f=\log$ (i.e., when we want a logarithmic bound on the regret), since \begin{align*}\log\pa{\EE{\tau_B-1}}&\leq\log\pa{T_B+{2\over  {\costm^2_{\min}}}\exp\pa{-\costm_{\min}B}}\\&\leq \log\pa{T_B+{4B^2\over  {\costm^2_{\min}}}}&\text{since }\exp\pa{-\costm_{\min}B}\leq 1\leq  2 B^2\\
&\leq \log\pa{T_B+T_B^2}\leq 2\log\pa{T_B+1/2} .\end{align*}

\subsection{\citet{wang2017improving} results}

We built on the results of~\citet{wang2017improving} 
for combinatorial multi-armed bandits with probabilistically triggered arms (CMAB-T). 
In particular, \citet{wang2017improving} give  expected regret bounds under specific assumptions that our setting satisfies.
In CMAB-T, at each round $t$, the agent selects some action $\rbs_t$ and a random subset of \emph{arms}
is triggered.  The corresponding feedback is given to the agent which then goes to the next round.
We denote by $\sigma(\rbs_t)$, the set of arms that have a positive probability of being triggered if $\rbs_t$ is selected, and $\rd{\sigma}(\rbs_t)\subset\sigma(\rbs_t)$ the random subset of arms~$i$ that are actually triggered and for which we maintain a counter $\rd N_{i,t}$.
We restate two results of \citet{wang2017improving} that hold under following assumptions. Notice that we generalize their results to a random horizon.
 For a round $t\geq 1$, we let $\mathfrak{B}_t$ be any event.
 We let $\mathbf{M}\in (0,\infty)^\arms$ and for an action $\bs$, $M\!\pa{\bs}=\sup_{i\in \sigma(\bs)}M_i$. We let $\rd \tau$ a (possibly random) round, and $\mathbf{\lip}\in \mathbb{R}_+^\arms$.
 $H\!\pa{\bs}$ and $\lambda$ are non-negative numbers, $H\!\pa{\bs}$ (deterministically) depends on $\bs$. We write $\rbs_t$ the action chosen at round $t$. 
\begin{theorem}
Suppose that $\forall  \bs, \forall i \in \sigma(\bs),~ \PP{i\in \rd{\sigma}(\bs)}=1$. 
If\, for all $t$, under event $\mathfrak{B}_t$, 
$$H\!\pa{\bs_t}\leq \sum_{i\in \sigma(\rbs_t)}\lambda\min\sset{ 2\lip_i\sqrt{1.5\log t \over N_{i,t-1}}\CommaBin1 }
 \CommaBin$$ then
\begin{equation*}
\EE{\sum_{t=1}^{\rd\tau} \pa{2H\!\pa{\bs_t}-M\!\pa{\bs_t}}\II{\mathfrak{B}_t}}\leq \sum_{i\in [\arms]}\EE{48\arms\lip_i^2 \lambda^2\log(\rd \tau)\over M_i} 
+2\lambda\arms
\quad\text{and}
\end{equation*}

\begin{equation*}
\EE{\sum_{t=1}^{\rd\tau} H\!\pa{\bs_t}\II{\mathfrak{B}_t}}\leq 14\lambda\norm{\mathbf{\lip}}_2\EE{\sqrt{\arms \rd \tau\log(\rd \tau)}}
+2\lambda\arms.
\end{equation*}
 \label{wang1}
\end{theorem}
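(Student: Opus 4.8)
} Both displays are exactly Theorems~1 and~4 of \citet{wang2017improving} in the special case of deterministic triggering ($\PP{i\in\rd\sigma(\bs)}=1$ for all $\bs$ and all $i\in\sigma(\bs)$), the only difference being that their fixed horizon $T$ is replaced by a possibly random round $\rd\tau$. The plan is therefore to re-run their argument and observe that the generalization is essentially free: every inequality used is \emph{pathwise} valid for each $t\le\rd\tau(\omega)$, and each cumulative bound is a nondecreasing function of its upper summation limit, so no property of $\rd\tau$ (such as being a stopping time) is needed. Concretely, for $t\le\rd\tau$ one has $\log t\le\log\rd\tau$ and $N_{i,t-1}\le N_{i,\rd\tau}$, hence stopping the fixed-horizon analysis ``at round $\rd\tau$'' yields a deterministic bound of the form $g(\rd\tau)$ with $g$ explicit and increasing, and taking expectations produces the stated $\EE{\cdot}$ expressions. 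Since triggering is deterministic, I would also simplify throughout by noting that $\sigma(\bs)$ is just the support of $\bs$ and $N_{i,t-1}$ increments exactly on the rounds with $i\in\bs_t$.

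For the first (gap-dependent) bound, I would start from the hypothesis $H\!\pa{\bs_t}\le\sum_{i\in\sigma(\bs_t)}\lambda\min\sset{2\lip_i\sqrt{1.5\log t/N_{i,t-1}},1}$ available on $\mathfrak{B}_t$, and split the telescoped quantity arm by arm by writing $-M\!\pa{\bs_t}=-\sum_{i\in\sigma(\bs_t)}M\!\pa{\bs_t}/|\sigma(\bs_t)|$ and using $M\!\pa{\bs_t}/|\sigma(\bs_t)|\ge M_i/\arms$ for $i\in\sigma(\bs_t)$. Then, for a fixed arm $i$, a round in which its share $2\lambda\min\sset{2\lip_i\sqrt{1.5\log t/N_{i,t-1}},1}-M\!\pa{\bs_t}/|\sigma(\bs_t)|$ is positive forces $N_{i,t-1}$ below a threshold of order $\arms^2\lambda^2\lip_i^2\log\rd\tau/M_i^2$, and the \emph{reverse amortization trick} — the same device underlying Proposition~\ref{lemma5chen}, i.e.\ summing the positive part of $4\lambda\lip_i\sqrt{1.5\log\rd\tau/m}-M_i/\arms$ over the successive counter values $m=1,2,\dots$ and comparing it with the corresponding integral — produces the term $48\arms\lip_i^2\lambda^2\log\rd\tau/M_i$. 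The residual contributions of the rounds in the regime where the $\min$ equals $1$ (where $N_{i,t-1}$ is $O(\lip_i^2\log\rd\tau)$) and of each arm's very first pull (convention $x/0=+\infty$) are absorbed into the additive $2\lambda\arms$ term; rounds on which the bracket is $\le 0$ only help.

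For the second (gap-free) bound, I would bound $\min\sset{2\lip_i\sqrt{1.5\log t/N_{i,t-1}},1}\le 2\lip_i\sqrt{1.5\log\rd\tau/N_{i,t-1}}$ on the rounds with $N_{i,t-1}\ge1$ (the at most $\arms$ rounds with $N_{i,t-1}=0$ contributing $\le\lambda\arms$ to the additive term), reorganize the double sum by arm, use $\sum_{m=1}^{N_{i,\rd\tau}}m^{-1/2}\le 2\sqrt{N_{i,\rd\tau}}$, and then apply Cauchy--Schwarz in the arm index together with $\sum_i N_{i,\rd\tau}=\sum_{t\le\rd\tau}|\bs_t|\le\arms\rd\tau$ to obtain a factor $\norm{\mathbf{\lip}}_2\sqrt{\arms\rd\tau\log\rd\tau}$; collecting the numerical constants from $\sqrt{1.5}$ and the $\min$-truncation yields the stated $14\lambda\norm{\mathbf{\lip}}_2\EE{\sqrt{\arms\rd\tau\log\rd\tau}}$ after taking expectation.

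The main obstacle is bookkeeping rather than conceptual: one has to check that \emph{each} pathwise inequality of \citet{wang2017improving} survives verbatim with $T$ replaced by $\rd\tau$ — in particular that the counting/amortization step is monotone in the horizon and that the boundary terms coming from the conventions $N_{i,t-1}=0$ and $x/0=+\infty$ stay $O(\lambda\arms)$ — and to carry the constants carefully through the reverse amortization in the first bound, which is where the interplay between the relaxation $\log t\le\log\rd\tau$ and the $\min\sset{\cdot,1}$ cap is most delicate.
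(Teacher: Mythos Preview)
The paper does not give its own proof of Theorem~\ref{wang1}: it is presented in Appendix~C.5 purely as a restatement of results of \citet{wang2017improving}, with the single remark ``Notice that we generalize their results to a random horizon.'' Your plan is therefore exactly what the paper does, only spelled out in more detail: you correctly identify that the Wang--Chen arguments are pathwise inequalities that are monotone in the horizon, so replacing a deterministic $T$ by a random $\rd\tau$ and then taking expectations is free; and your sketch of the reverse amortization for the gap-dependent bound (splitting $M(\bs_t)$ uniformly over $\sigma(\bs_t)$ via $M(\bs_t)/|\sigma(\bs_t)|\ge M_i/\arms$, then summing the positive part over successive counter values) and of the Cauchy--Schwarz argument for the gap-free bound is an accurate summary of the deterministic-triggering case of their proof.
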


\begin{theorem}
If for all $t$, under event $\mathfrak{B}_t$,  
$$H\!\pa{\bs_t}\leq \sum_{i\in \sigma(\rbs_t)}\PP{i\in \sigma(\rbs_t)}\lambda\min\sset{ 2\lip_i\sqrt{1.5\log t \over N_{i,t-1}}\CommaBin1 }\CommaBin$$ then
\begin{equation*}
\EE{\sum_{t=1}^{\rd\tau} \pa{2H\!\pa{\bs_t}-M\!\pa{\bs_t}}\II{\mathfrak{B}_t}}\leq \sum_{i\in [\arms]}\EE{{576\arms\lip_i^2 \lambda^2\log({\rd\tau})\over M_i}}+ 
{\pi^2\lambda\arms\over 6}\sum_{i\in [\arms]} \ceil{\log_2\left({2\lambda\arms\over  M_i}\right)}^+ 
+4\arms\lambda\quad\text{and}
\end{equation*}

\begin{equation*}
\EE{\sum_{t=1}^{\rd\tau } H\!\pa{\bs_t}\II{\mathfrak{B}_t}}\leq 12\lambda\norm{\mathbf{\lip}}_2\EE{\sqrt{\arms \rd \tau\log(\rd \tau)}}+ 
{\pi^2\lambda\arms^2\over 6}\E\ceil{\log_2\left({{\rd\tau}\over  18\log({\rd\tau})}\right)}^+ 
+2\arms\lambda.
\end{equation*}
 \label{wang2}
\end{theorem}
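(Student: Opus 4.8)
The plan is to follow the combinatorial-bandit argument of \citet{wang2017improving} for probabilistically triggered arms, the only genuinely new ingredients being the \emph{triggering probability group} decomposition, which distinguishes this statement from Theorem~\ref{wang1} (there every arm in $\sigma(\rbs_t)$ is triggered with probability one), and the extension from a deterministic horizon to the random time $\rd\tau$. First I would fix an arm $i$ and, under the event $\mathfrak{B}_t$, substitute the hypothesis $H(\bs_t)\leq \sum_{i\in\sigma(\rbs_t)}\PP{i\in\sigma(\rbs_t)}\lambda\min\sset{2\lip_i\sqrt{1.5\log t/N_{i,t-1}},1}$ into the left-hand sides of both claimed inequalities, reducing each to bounding, per arm, a weighted sum over rounds of the confidence term $\PP{i\in\sigma(\rbs_t)}\min\sset{2\lip_i\sqrt{1.5\log t/N_{i,t-1}},1}$, where the weight $p_{i,t}\triangleq\PP{i\in\sigma(\rbs_t)}$ is the triggering probability.

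The core difficulty is that $p_{i,t}$ can be arbitrarily small, so the counter $N_{i,t-1}$ grows slowly and the confidence term $\sqrt{1.5\log t/N_{i,t-1}}$ need not concentrate around any deterministic quantity. The remedy is to partition the rounds with $i\in\sigma(\rbs_t)$ into geometric groups according to the dyadic value of $p_{i,t}$; within a group $p_{i,t}$ is constant up to a factor $2$, so a concentration bound on the triggering indicators (a union bound over rounds, as in \citealp{wang2017improving}) shows that, on a high-probability event, $N_{i,t-1}$ is at least a constant fraction of the number of prior triggering opportunities in that group. Crucially, groups with very small $p_{i,t}$ contribute negligibly: the $\min\sset{\cdot,1}$ cap combined with the weight $p_{i,t}$ makes their total contribution summable once $p_{i,t}\lesssim M_i/(\lambda\arms)$, so only $O\pa{\log_2\pa{2\lambda\arms/M_i}}$ groups matter --- this is exactly the source of the $\ceil{\log_2\pa{2\lambda\arms/M_i}}^+$ terms in the statement.

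Having replaced each random counter by a deterministic opportunity count on the good event, I would close the two bounds by a reverse-amortization/counting step analogous to Proposition~\ref{lemma5chen}. For the gap-dependent bound, subtracting $M(\bs_t)=\sup_{i\in\sigma(\bs_t)}M_i$ and summing $\sum_s 1/\sqrt{s}$ against the gaps converts the confidence terms into the $\tfrac{576\arms\lip_i^2\lambda^2\log\rd\tau}{M_i}$ contributions; for the gap-free bound, a Cauchy--Schwarz step over the at most $\rd\tau$ rounds and $\arms$ arms yields the $12\lambda\norm{\mathbf{\lip}}_2\sqrt{\arms\rd\tau\log\rd\tau}$ term. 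In both cases the logarithmic factor appears as $\log t\leq\log\rd\tau$, which is legitimate because the whole counting inequality holds \emph{pathwise} up to the random round $\rd\tau$ on the intersection of the concentration events; the complementary low-probability events are handled by bounding each summand by the deterministic cap (of order $\lambda\arms$) and summing the failure probabilities over all $t\geq1$, which converges to a constant independent of $\rd\tau$ and produces the additive $4\arms\lambda$ and $2\arms\lambda$ terms. Taking expectations then yields the $\EE{\log\rd\tau}$ and $\E\ceil{\cdots}^+$ factors verbatim.

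I expect the main obstacle to be the triggering-probability-group concentration of the second step: one must simultaneously control the union bound over rounds and over the unbounded dyadic family of groups while verifying that the small-probability groups truncate at the right threshold, so that the number of effective groups is $O\pa{\log_2\pa{\lambda\arms/M_i}}$ rather than unbounded. Matching the constants ($576$, $\tfrac{\pi^2}{6}$, $12$) and the threshold $2\lambda\arms/M_i$ precisely is delicate bookkeeping, but conceptually it is the reconciliation of the slowly concentrating confidence term with the small triggering weight that carries the whole argument.
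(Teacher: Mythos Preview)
The paper does not give its own proof of this theorem. It appears in Appendix~C.5 under the heading ``\citet{wang2017improving} results,'' where the authors explicitly say they \emph{restate} two results of \citet{wang2017improving} and merely ``generalize their results to a random horizon.'' No argument is supplied; the theorem is imported as a black box and applied in the proof of Theorem~\ref{banditregret}.

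Your outline is faithful to the Wang--Chen argument that the paper is citing: the triggering-probability-group decomposition is indeed the key device separating this statement from Theorem~\ref{wang1}, the reverse amortization (analogous to Proposition~\ref{lemma5chen}) handles the per-arm counting, and the random-horizon extension is exactly the pathwise-then-take-expectations step you describe. So there is nothing to compare against in the present paper, and your sketch matches the cited source's approach.
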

 \end{proof}

\section{Proof of Theorem~\ref{lower}}
\label{app:lower}

\setcounter{scratchcounter}{\value{theorem}}\setcounter{theorem}{\the\numexpr\getrefnumber{lower}-1}\restaL*\setcounter{theorem}{\the\numexpr\value{scratchcounter}}

\setcounter{scratchcounter}{\value{figure}}\setcounter{figure}{\the\numexpr\getrefnumber{2paths}-1}\restaM*\setcounter{figure}{\the\numexpr\value{scratchcounter}}

\begin{proof}
Let $0<\varepsilon<1/4$. We consider a DAG composed of two disjoint paths (Figure~\ref{2paths}), both with $\arms/2$ nodes.
We denote the two paths by $\ba$ and $\bb$. We deterministically set all the costs  to $1$, $w_i=0\text{ for } i\notin\sset{a_{\frac{\arms}{2}},b_{\frac{\arms}{2}}}$. All this information is given to the agent.
Notice that this does not make the problem harder.

 Now consider two distributions $\distribution_1$ and $\distribution_2$ defined by  \begin{align*}\distribution_1:\quad w_{a_{\frac{\arms}{2}}}\triangleq\frac{1}{2}+\varepsilon,\quad w_{b_{\frac{\arms}{2}}}\triangleq\frac{1}{2}-\varepsilon
\quad\text{and}\quad\distribution_2:\quad w_{a_{\frac{\arms}{2}}}\triangleq\frac{1}{2}-\varepsilon,\quad w_{b_{\frac{\arms}{2}}}\triangleq\frac{1}{2}+\varepsilon.
\end{align*}
Notice that an optimal online policy does not modify its behavior during a round $t$, since after having seen $\rhider_{i,t}=1$, continuing searching would only give information about cost distribution which is known by the problem definition, and no additional information about the rewards. Therefore, there is \emph{an} optimal online policy that selects some search $\bs$ and perform $\bs[\rhidervector_t]$ over round $t$.
Observe that $\bs^\star=\ba\bb$  for  $\distribution_1$ and  $\bs^\star=\bb\ba$ for $\distribution_2$. We have $\J^\star=\frac{3}{4}\arms-\varepsilon\arms\geq \frac{1}{2}\arms$ for both $\distribution_1$ and~$\distribution_2$. 

We now show that we can restrict ourselves to policies that take searches in $\sset{\ba\bb,\bb\ba}$.
\begin{itemize}
 \item
First, an optimal online policy does not select a search that would not include at least one of the leaves $\sset{a_{\frac{\arms}{2}},b_{\frac{\arms}{2}}}$  for a round. Therefore, it has a full information on $\rhider$. Indeed, such a search is noninformative and does not bring any reward while having a cost. 
\item Second, for a policy $\pi$ that does not select a search in $\sset{\ba\bb,\bb\ba}$ for some round $t$, we construct $\pi'$ that acts like~$\pi$ except for this round $t$ where it selects $\ba\bb$ if $\pi$ would see the leaf $a_{\frac{\arms}{2}}$ first, and $\bb\ba$ otherwise, i.e., if $\pi$ would first see the leaf $b_{\frac{\arms}{2}}$. Now compare both policies on the same realization of $\rhidervector_1,\rhidervector_2,\dots$. 
We claim that the global reward of $\pi'$ is  never smaller than that of $\pi$.
By symmetry, assume that $\pi$ sees $a_{\frac{\arms}{2}}$ first within round $t$ and thus $\pi'$ selects $\ba\bb$. 
\begin{itemize}
\item If $\rhider_{a_{\frac{\arms}{2}},t}=1$ or $\pa{\rhider_{b_{\frac{\arms}{2}},t}=1\text{ and }\pi \text{ visits }b_{\frac{\arms}{2}}\text{ within round }t}$, both policies obtain the same reward of $1$ within round $t$, but $\pi'$ pays less than $\pi$.
\item If $\rhider_{b_{\frac{\arms}{2}},t}=1$ and $\pi$ does not visit $b_{\frac{\arms}{2}}$ within round $t$, $\pi$ gains $0$ and pays at least $\arms/2$, whereas $\pi'$ gains $1$ and pays $\arms$ within round $t$. Thus, the budget of $\pi$ compared to $\pi'$ is augmented by at most $\arms/2$, with which it can increase its reward by at most $1$.
\end{itemize}
The overall reward of $\pi'$ remains higher than that of $\pi$ for both cases. 
\end{itemize}
 
 A direct consequence of the restriction to  $\sset{\ba\bb,\bb\ba}$ is that $\costm_{\min}= \arms/2$, giving the upper bound in Theorem~\ref{lower} by invoking the result of  Theorem~\ref{banditregret}.  

Now for a policy $\pi$ using searches from $\sset{\ba\bb,\bb\ba}$, we have \begin{align*}
      F_B(\pi)=\sum_{t=1}^\infty&\EE{ \sum_{i\in \bs_t}\II{\rd{B}_t\geq 0,~\rhider_{i,t}=1}}
\leq
\sum_{t=1}^\infty\EE{ \sum_{i\in \bs_t}\II{\rd{B}_{t-1}\geq 0,~\rhider_{i,t}=1}}\\
\nonumber &= \sum_{t\geq 1}\EE{\II{\rd{B}_{t-1}\geq 0,~\rbs_t\in \search^\star}\be_{\rbs_t}\transpose\hidervector}
+
\sum_{t\geq 1}\EE{\II{\rd{B}_{t-1}\geq 0,~\rbs_t\notin \search^\star}\be_{\rbs_t}\transpose\hidervector}
\\
\nonumber&= {1\over \J^\star}\sum_{t\geq 1}\EE{\II{\rd{B}_{t-1}\geq 0,~\rbs_t\in \search^\star}\left({\D(\rbs_t)}+(1-\be_{\rbs_t}\transpose\hidervector)\be_{\rbs_t}\transpose\costvector\right)}
\\&\quad+
\nonumber{1\over \J^\star}\sum_{t\geq 1}\EE{\II{\rd{B}_{t-1}\geq 0,~\rbs_t\notin \search^\star}{\left({\D(\rbs_t)}+(1-\be_{\rbs_t}\transpose\hidervector)\be_{\rbs_t}\transpose\costvector\right)}}
-
\nonumber\sum_{t\geq 1}\EE{\II{\rd{B}_{t-1}\geq 0,~\rbs_t\notin \search^\star}\Delta\pa{\rbs_t}}
\\&=\nonumber
{1\over \J^\star}\sum_{t\geq 1}\EE{\II{\rd{B}_{t-1}\geq 0}{{\left({\D(\rbs_t)}+(1-\be_{\rbs_t}\transpose\hidervector)\be_{\rbs_t}\transpose\costvector\right)}}}
-
\sum_{t\geq 1}\EE{\II{\rd{B}_{t-1}\geq 0,~\rbs_t\notin \search^\star}\Delta\pa{\rbs_t}}
\\&\leq {B+\arms \over \J^\star} -
\nonumber\sum_{t\geq 1}\EE{\II{\rd{B}_{t-1}\geq 0,~\rbs_t\notin \search^\star}\Delta\pa{\rbs_t}}\!.
\end{align*}
As a result we get
 \begin{align*}
\RB_{B}({\pi})&=F_B^\star-F_B({\pi})\geq \frac{B-\arms}{\J^\star}-\frac{B+\arms}{\J^\star}+\sum_{t\geq 1}\EE{\II{\rd{B}_{t-1}\geq 0,~\rbs_t\notin \search^\star}\Delta\pa{\rbs_t}}\\&=-\frac{2\arms}{\J^\star}+\sum_{t\geq 1}\EE{\II{\rd{B}_{t-1}\geq 0,~\rbs_t\notin \search^\star}\Delta\pa{\rbs_t}}\cdot
\end{align*}
Since we restrict $\pi$ to take a search in $\sset{\ba\bb,\bb\ba}$, we have a single gap (the same for $\distribution_1$ and $\distribution_2$) \begin{align}\Delta=\frac{\frac{\arms}{2}\pa{\frac{1}{2}-\varepsilon}+\arms\pa{\frac{1}{2}+\varepsilon} }{\frac{\arms}{2}\pa{\frac{1}{2}+\varepsilon}+\arms\pa{\frac{1}{2}-\varepsilon}}-1=\frac{1.5+\varepsilon}{1.5-\varepsilon}-1=\frac{2\varepsilon}{1.5-\varepsilon}\geq \frac{4\varepsilon}{3}\cdot\label{monogap}\end{align}
Furthermore we can bound the number of rounds from below  by $B/\arms$. To proceed we use high-probability Pinsker inequality \cite[Lemma 2.6]{tsybakov2009introduction}.
\begin{fact}[high-probability Pinsker inequality]\label{HPpinsker}
 Let $P$ and $Q$ be probability measures on the same measurable space, and let $\cA$ be an event. Then \[P(\cA)+Q(\neg \cA)\geq \frac{1}{2}\exp\pa{-\KL{P}{Q}}\!,\]
 where $\text{KL}$ is the Kullback-Leibler divergence.
\end{fact}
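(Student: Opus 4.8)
The plan is to prove this directly as a form of the Bretagnolle--Huber inequality, without routing through total variation in a way that loses constants. First I would dispose of the degenerate case: if $\KL{P}{Q}=+\infty$ the right-hand side equals $0$ and the bound holds trivially since $P(\cA)+Q(\neg\cA)\geq 0$. Hence I may assume $P\ll Q$, which makes the divergence finite, and I introduce densities $p,q$ of $P,Q$ with respect to a common dominating measure $\mu$ (for instance $\mu=(P+Q)/2$). All three quantities in the statement are then expressed as integrals against $\mu$.

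The first substantive step is a pointwise reduction that removes the dependence on $\cA$. Since $\int_\cA p\,d\mu\geq\int_\cA\min(p,q)\,d\mu$ and $\int_{\neg\cA}q\,d\mu\geq\int_{\neg\cA}\min(p,q)\,d\mu$, summing gives
\[P(\cA)+Q(\neg\cA)\geq\int\min(p,q)\,d\mu.\]
The overlap integral $\int\min(p,q)\,d\mu$ (which equals $1-\mathrm{TV}(P,Q)$) no longer involves $\cA$, so it suffices to show it is at least $\tfrac12\exp\pa{-\KL{P}{Q}}$.

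The second step bounds this overlap below by the squared Hellinger affinity. Applying Cauchy--Schwarz to $\sqrt{\min(p,q)}$ and $\sqrt{\max(p,q)}$, and using $\min(p,q)\max(p,q)=pq$,
\[\pa{\int\sqrt{pq}\,d\mu}^2\leq\pa{\int\min(p,q)\,d\mu}\pa{\int\max(p,q)\,d\mu}\leq 2\int\min(p,q)\,d\mu,\]
where the last inequality uses $\max(p,q)\leq p+q$, so that $\int\max(p,q)\,d\mu\leq 2$. The final step relates the affinity to the divergence via Jensen's inequality: writing the affinity as an expectation under $P$ on the set $\sset{p>0}$,
\[\int\sqrt{pq}\,d\mu=\EEs{P}{\exp\pa{\tfrac12\log(q/p)}}\geq\exp\pa{\tfrac12\,\EEs{P}{\log(q/p)}}=\exp\pa{-\tfrac12\KL{P}{Q}},\]
by convexity of $\exp$. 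Chaining the three displays yields $P(\cA)+Q(\neg\cA)\geq\tfrac12\exp\pa{-\KL{P}{Q}}$, as claimed.

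I do not expect a genuine analytic obstacle here; the argument is short and the inequalities are standard. The only delicate points are measure-theoretic bookkeeping: the Jensen step must be restricted to $\sset{p>0}$, where $P$ concentrates and $\log(q/p)$ is well defined, and I must check that the choice of dominating measure $\mu$ leaves each integral unchanged. Both are routine once the absolutely continuous case has been isolated.
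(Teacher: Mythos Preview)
Your argument is correct and is precisely the standard Bretagnolle--Huber proof: reduce $P(\cA)+Q(\neg\cA)$ to the overlap $\int\min(p,q)\,d\mu$, lower-bound that by half the squared Hellinger affinity via Cauchy--Schwarz and $\int\max(p,q)\,d\mu\leq 2$, and finally bound the affinity below by $\exp\pa{-\tfrac12\KL{P}{Q}}$ using Jensen. The measure-theoretic caveats you flag (restricting to $\sset{p>0}$, independence of the choice of dominating measure) are genuine but routine, and your handling of the $\KL{P}{Q}=+\infty$ case is clean.

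The paper itself does not supply a proof: Fact~\ref{HPpinsker} is quoted as a black box from Tsybakov's book (Lemma~2.6 there), so there is nothing to compare against beyond noting that what you wrote is essentially the textbook derivation found in that reference.
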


We let $\RB_{1,B}({\pi})$ be the regret of $\pi$ for distribution $\distribution_1$ and similarly, $\RB_{2,B}({\pi})$ for $\distribution_2$. If $\PPno_1$ and $\PPno_2$ denote the probability when random variable are samples from $\distribution_1$ and $\distribution_2$ respectively, we have
\begin{align}\nonumber\max\sset{\RB_{1,B}({\pi}),\RB_{2,B}({\pi})}&\geq \frac{\RB_{1,B}({\pi})+\RB_{2,B}({\pi})}{2}\\&\nonumber\geq -\frac{2\arms}{\J^\star}+\frac{\Delta}{2}\sum_{t=1}^{B/\arms}\pa{{\PPt{1}{\rd{B}_{t-1}\geq 0,\bs_t=\bb\ba}+\PPt{2}{\rd{B}_{t-1}\geq 0,\bs_t=\ba\bb}}}
\\&\geq -\frac{2\arms}{\J^\star}+\frac{\varepsilon}{3}\sum_{t=1}^{B/\arms}\exp\pa{-\KL{\distribution_1^{\otimes t}}{\distribution_2^{\otimes t}}}\label{kl}
\\&=  -\frac{2\arms}{\J^\star}+\frac{\varepsilon}{3}\sum_{t=1}^{B/\arms}\exp\pa{-t\KL{\distribution_1}{\distribution_2}}\!,\nonumber
\end{align} 
where \eqref{kl} is due to Fact~\ref{HPpinsker} and \eqref{monogap}. Then, \begin{align}\KL{\distribution_1}{\distribution_2}&=\pa{\frac{1}{2}+\varepsilon}\log\pa{\frac{\frac{1}{2}+\varepsilon}{\frac{1}{2}-\varepsilon}}+\pa{\frac{1}{2}-\varepsilon}\log\pa{\frac{\frac{1}{2}-\varepsilon}{\frac{1}{2}+\varepsilon}}\nonumber\\&\leq 2\varepsilon\pa{\frac{\frac{1}{2}+\varepsilon}{\frac{1}{2}-\varepsilon}}-2\varepsilon\pa{\frac{\frac{1}{2}-\varepsilon}{\frac{1}{2}+\varepsilon}}=\frac{4\varepsilon^2}{{\frac{1}{4}-\varepsilon^2}}\leq \frac{64}{3}\varepsilon^2 & (\text{because\ } \log(x)\leq x-1).\nonumber\end{align}
Thus, with $\J^\star\geq \frac{\arms}{2}$, we have   \begin{align*}\max\sset{\RB_{1,B}({\pi}),\RB_{2,B}({\pi})}\geq-4+\frac{\varepsilon}{3}\sum_{t=1}^{B/\arms}\exp\pa{-\frac{64}{3}t\varepsilon^2}&\geq-4+\frac{\varepsilon\pa{1-\exp\pa{-\frac{64}{3}\frac{B\varepsilon^2}{\arms}}}}{3\pa{\exp\pa{\frac{64}{3}\varepsilon^2}-1}}\\&\geq-4+\frac{{1-\exp\pa{-\frac{64}{3}\frac{B\varepsilon^2}{\arms}}}}{64\varepsilon}\\&\geq-4+\min\sset{\frac{1}{128\varepsilon}\CommaBin\frac{\varepsilon B}{6\arms}} .\end{align*}
Taking $\varepsilon=\sqrt{(6\arms)/(128B)}$, the lower bound becomes $$\max\sset{\RB_{1,B}({\pi}),\RB_{2,B}({\pi})}\geq-4+\sqrt{\frac{B}{768\arms}}\geq -4+\frac{1}{28}\sqrt{\frac{B}{\arms}}\cdot$$
\end{proof}
%

\end{document}